\newcommand{\at}[1]{\protect\ensuremath{\mathsf{#1}}\xspace}
\newcommand{\stitle}[1]{\vspace{1ex}\noindent{\bf #1}}
\newcommand{\rev}[1]{{\color {black}#1}}
\newcommand{\squishlist}{ 
   \begin{list}{$\bullet$}
    { \setlength{\itemsep}{0pt}      \setlength{\parsep}{3pt} 
      \setlength{\topsep}{3pt}       \setlength{\partopsep}{0pt}
      \setlength{\leftmargin}{1.5em} \setlength{\labelwidth}{1em}
      \setlength{\labelsep}{0.5em} } }
\newcommand{\squishend}{
    \end{list}  } 
\newcommand{\systems}{iFlipper}
\newcommand{\greedy}{{\em Greedy}}
\newcommand{\gradient}{{\em Gradient}}
\newcommand{\kmeans}{{\em kMeans}}
\newtheorem{definition}{Definition}
\newtheorem{example}{Example}
\newtheorem{theorem}{Theorem}
\newtheorem{lemma}[theorem]{Lemma}
  \providecommand\BibTeX{{%
    \normalfont B\kern-0.5em{\scshape i\kern-0.25em b}\kern-0.8em\TeX}}}
\begin{document}
\fancyhead{}

\setlength{\abovedisplayskip}{3pt}
\setlength{\belowdisplayskip}{4pt}
\setlength{\textfloatsep}{3pt}
\title{\systems{}: Label Flipping for Individual Fairness}

\author{Hantian Zhang}
\authornote{Equal contribution and co-first authors}
\affiliation{%
  \institution{Georgia Institute of Technology}
}
\email{hantian.zhang@cc.gatech.edu}

\author{Ki Hyun Tae}

\authornotemark[1]
\affiliation{%
  \institution{KAIST}
}
\email{kihyun.tae@kaist.ac.kr}

\author{Jaeyoung Park}
\affiliation{%
  \institution{KAIST}
}
\email{jypark@kaist.ac.kr}

\author{Xu Chu}
\affiliation{%
  \institution{Georgia Institute of Technology}
}
\email{xu.chu@cc.gatech.edu}

\author{Steven Euijong Whang}
\affiliation{%
  \institution{KAIST}
}
\email{swhang@kaist.ac.kr}

\begin{abstract}

As machine learning becomes prevalent, mitigating any unfairness present in the training data becomes critical. 
Among the various notions of fairness, this paper focuses on the well-known individual fairness, which states that similar individuals should be treated similarly.
While individual fairness can be improved when training a model (in-processing), we contend that fixing the data before model training (pre-processing) is a more fundamental solution.
In particular, we show that {\em label flipping is an effective pre-processing technique for improving individual fairness}.

Our system \systems{} solves the optimization problem of minimally flipping labels given a limit to the individual fairness violations, where a violation occurs when two similar examples in the training data have different labels.
We first prove that the problem is NP-hard.
We then propose an approximate linear programming algorithm and provide theoretical guarantees on how close its result is to the optimal solution in terms of the number of label flips.
We also propose techniques for making the linear programming solution more optimal without exceeding the violations limit.
Experiments on real datasets show that \systems{} significantly outperforms other pre-processing baselines in terms of individual fairness and accuracy on unseen test sets.
In addition, \systems{} can be combined with in-processing techniques for even better results.
\end{abstract}

\maketitle



\section{Introduction} \label{sec:intro}
Machine learning (ML) impacts our everyday lives where applications include recommendation systems~\cite{chaney2018algorithmic}, job application~\cite{dastin2018amazon}, and face recognition~\cite{buolamwini2018gender}. Unfortunately, ML algorithms are also known to reflect or even reinforce bias in the training data and thus make unfair decisions~\cite{barocas2016big,whang2021responsible}. This issue draws concerns from both the public and research community, so algorithms have been proposed to mitigate bias in the data and improve the fairness of ML models.

There are several prominent notions of fairness, and we focus on individual fairness~\cite{dwork2012fairness}, which states that \textit{similar individuals must be treated similarly}. Suppose that two applicants are applying to the same school. If the two applicants have similar application materials, then it makes sense for them to obtain the same or similar outcomes. Likewise if two individuals are applying for a loan and have similar financial profiles, it is fair for them to be accepted or rejected together as well. In addition to individual fairness, the other prominent fairness notions include group fairness and causal fairness. Group fairness~\cite{zafar2017fairness,agarwal2018reductions,zhang2021omnifair} focuses on the parity between two different sensitive groups (e.g., male versus female), and causal fairness~\cite{kusner2017counterfactual} looks at fairness from a causal perspective (e.g., does gender affect an outcome?). These are orthogonal notations of fairness and do not capture the individual fairness we focus on.

How can one improve a model's individual fairness? There are largely three possible approaches: fixing the data before model training (pre-processing), changing the model training procedure itself (in-processing), or updating the model predictions after training (post-processing). Among them, most of the literature focuses on in-processing~\cite{DBLP:conf/iclr/YurochkinBS20,yurochkin2021sensei,vargo2021individually} and more recently pre-processing techniques~\cite{pmlr-v28-zemel13,ifair,pfr2019}. We contend that pre-processing is important because biased data is the root cause of unfairness, so fixing the data is the more fundamental solution rather than having to cope with the bias during or after model training. The downside of pre-processing is that one cannot access the model and has to address the bias only using the training data. Due to this challenge, only a few pre-processing techniques for individual fairness have been proposed, which we will compare with in the experiments.

We propose {\em label flipping as a way to mitigate data bias for individual fairness} and assume a binary classification setting where labels have 0 or 1 values. Given a training set of examples, the idea is to change the labels of some of the examples such that similar examples have the same labels as much as possible. 
Which examples are considered similar is application dependent and non-trivial and can be learned from input data~\cite{ilvento2020metric, fairmetric, pmlr-v119-mukherjee20a} or obtained from annotators (e.g., humans). This topic is important for individual fairness, but not the focus of this work where we assume the criteria is given as an input. For example, one may compute the Euclidean distance between two examples and consider them similar if the distance is within a certain threshold. As the training set becomes more individually fair, the trained model becomes fairer as well (see Section~\ref{sec:accuracyfairnesstradeoff}). Our label flipping approach is inspired by the robust training literature of learning from noisy labels~\cite{DBLP:journals/corr/abs-2007-08199} where the labeling itself may be imperfect. The standard approach for handling such labels is to ignore or fix them~\cite{DBLP:conf/icml/SongK019}. In our setting, we consider any label that reduces individual fairness as biased and would like to fix it. 


We can use a graph representation to illustrate label flipping as shown in Figure~\ref{fig:graphrep}. Each node represents a training example, and its color indicates the original label (black indicates label 1, and white indicates 0). Two similar nodes (defined in Section~\ref{sec:preliminaries}) are connected with an edge, and a violation occurs when an edge is connecting two nodes with different labels. In  Figure~\ref{fig:graphrep}, there are four nodes where only the nodes 1, 2, and 3 are similar to each other. 
Each edge also has an associated weight, which reflects the similarity of the two nodes. For simplicity, let us assume that all weights are 1.
We can see that there are two ``violations'' of fairness in this dataset: (1,2) and (1,3) because there are edges between them, and they have different colors. After flipping the label of node 1 from 1 to 0, we have no violations.

\begin{figure}[t]
\centering
  \includegraphics[width=0.71\columnwidth]{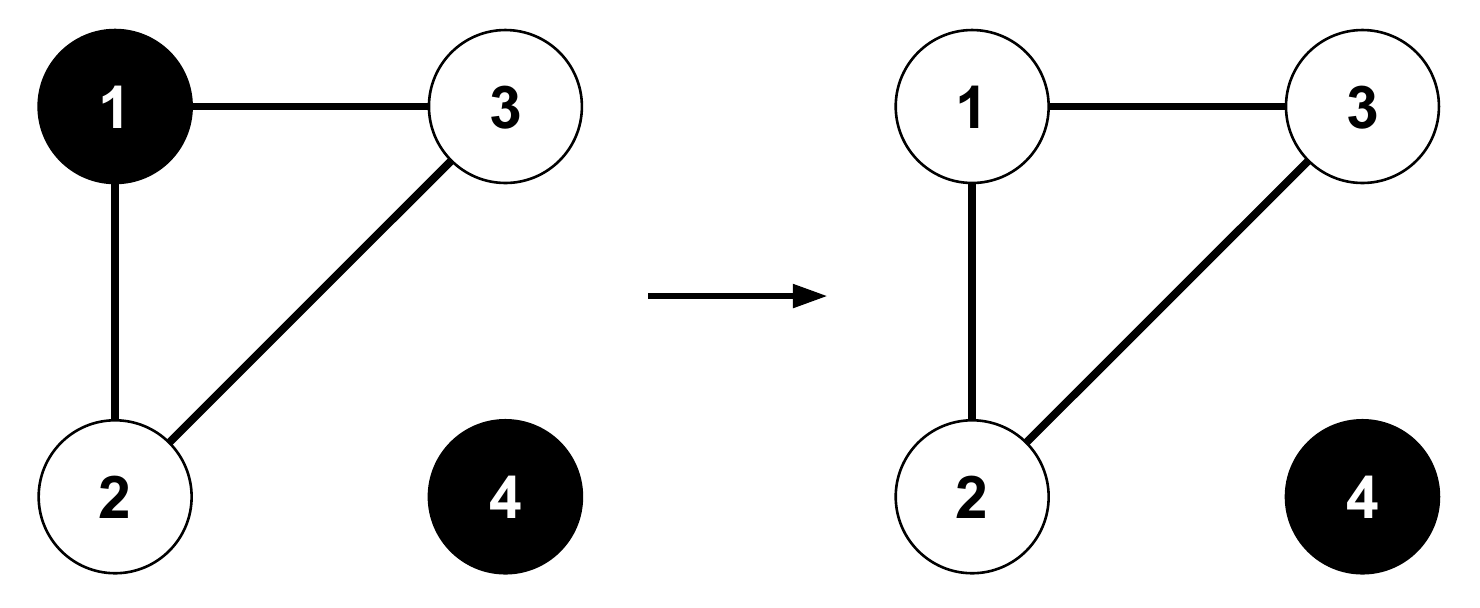}
  \vspace{-0.3cm}
  \caption{Label flipping example for individual fairness using a graph representation. Two similar nodes have an edge, and color indicates the label. By flipping the label of node 1, all pairs of similar nodes have the same label, which is considered individually fair.}
  \label{fig:graphrep}
\end{figure}

Our key contribution is in formulating and solving a constrained label flipping problem for the purpose of individual fairness. Just like in a robust training setup, we assume labelers can make labeling mistakes in a biased fashion. Here the effective solution is to debias the labels by flipping them. One consequence of label flipping is an accuracy-fairness trade-off where the model's accuracy may diminish. As an extreme example, if we simply flip all the labels to be 0, then a trained model that only predicts 0 is certainly fair, but inaccurate to say the least. Even if we carefully flip the labels, we still observe a trade-off as we detail in Section~\ref{sec:accuracyfairnesstradeoff}. We thus formulate the optimization problem where the objective is to minimize the number of label flipping while limiting the {\em total error}, which is the total degree of the individual fairness violations (see Definition~\ref{def:totalerror}). The optimization can be formally stated as an instance of mixed-integer quadratic programming (MIQP) problem, and we prove that it is NP-hard. We then transform the problem to an approximate linear programming (LP) problem for efficient computation. Interestingly, we show that our LP algorithm has theoretical guarantees on how close its result is to the optimal solution in terms of the number of flips performed. We then further optimize the solution given by the LP algorithm to reduce the number of flips while ensuring the total error does not exceed the given limit. 


We call our approach \systems{} and empirically show how its label flipping indeed results in individually-fair models and significantly outperforms other pre-processing baselines on real datasets. In particular, the state-of-the-art baselines~\cite{ifair,pfr2019} use representation learning to place similar examples closer in a feature space. We show that \systems{} has better accuracy-fairness trade-off curves and is also significantly more efficient. We also compare \systems{} with  baselines (e.g., greedy approach and k-means clustering) for solving the optimization problem and show that \systems{} is superior. Finally, we demonstrate how \systems{} can be integrated with in-processing techniques~\cite{DBLP:conf/iclr/YurochkinBS20} for even better results. We release our code as a community resource~\cite{github}.

The rest of the paper is organized as follows.

\squishlist
\item We formulate the label flipping optimization as an MIQP problem and prove that it is NP-hard (Section~\ref{sec:probdef}).
\item We propose \systems{}, which solves this problem by converting it into an approximate LP algorithm that has theoretical guarantees and present an additional optimization to further improve the solution given by the LP solver (Section~\ref{sec:iflipper}).
\item We evaluate \systems{} on real datasets and show how it outperforms other pre-processing baselines and can be integrated with in-processing techniques for better results (Section~\ref{sec:exp}).
\squishend

\section{Problem Definition}\label{sec:probdef}

\subsection{Preliminaries}\label{sec:preliminaries}
We focus on a \emph{binary classification} setting, and assume a training dataset \at{D} = $\{(x_i, y_i)\}_{i=1}^n$ where $x_i$ is an example, and $y_i$ is its label having a value of 0 or 1. A binary classifier $h$ can be trained on \at{D}, and its prediction on a test example $x$ is $h(x)$.


Individual Fairness~\cite{dwork2012fairness} states that similar individuals must be treated similarly. The criteria for determining if two examples are similar depends on the application, and we assume it is given as input, though research on how to automatically discover such similarity measures exists~\cite{ilvento2020metric, fairmetric, pmlr-v119-mukherjee20a}. For example, for each $x$, we may consider all the examples within a certain distance to be similar. Alternatively, we may consider the top-$k$ nearest examples to be similar. In our work, we assume as input a given similarity measure, which can be applied on the training set to produce a similarity matrix $W_{ij} \in \mathbb{R}^{n \times n}$ where $W_{ij} > 0$ if and only if $x_i$ and $x_j$ are deemed similar according to the measure. If $W_{ij}> 0 $, we set it to be the similarity of $x_i$ and $x_j$, although any other positive value can be used as well. In order to satisfy individual fairness, we introduce the notion of {\em individual fairness violation} as follows.

\begin{definition} \textbf{Individual Fairness Violation.}
\label{def:violation}
Given a similarity matrix $W$ on a training set, an individual fairness violation occurs when $W_{ij} > 0$, but $y_i \neq y_j$. The magnitude of the violation is defined to be $W_{ij}$. 

\end{definition}

\begin{definition} \textbf{Total Error.}
\label{def:totalerror}
Given a similarity matrix $W$, the total error is the sum of all $W_{ij}$ values where $y_i \neq y_j$.
\end{definition}

Our goal is to reduce the total error to be within a maximum allowed amount.

\begin{definition} \textbf{m-Individually Fair Dataset.}
\label{def:individualfairness}
Given a similarity matrix $W$, a dataset is considered m-individually fair if the total error is at most $m$.
\end{definition}
A smaller $m$ naturally translates to an m-individually fair dataset that is likely to result in a more individually fair model on the unseen test set as we demonstrate in Section~\ref{sec:accuracyfairnesstradeoff}.

\rev{
\paragraph*{Incomplete Similarity Matrix} 
If we only have partial knowledge of the similarities, \systems{} can be naturally adapted to utilize or fix the partial data. The straightforward solution is to treat the partial similarity as if it is complete information where the pairs with unknown similarities are not included in our optimization formulation. Another solution is to ``extrapolate'' the partial similarity matrix to reconstruct the full similarity matrix. For example, one can learn similarity functions using the partial data. In this case, \systems{}'s performance will largely depend on how accurate the learned similarity function is.}

\stitle{Evaluating the Fairness of a Trained Model $h$.} To evaluate the final individual fairness of a trained model $h$, we compute the widely-used {\em consistency score}~\cite{pfr2019} of model predictions on an unseen test set as defined below. Here $W_{ij}$ is the similarity matrix on the unseen test set.
\[
\text{\em{Consistency Score}} = 1-\frac{\sum_{i}\sum_{j}|h(x_i)-h(x_j)| \times W_{ij}}{\sum_{i}\sum_{j} W_{ij}}
\]

Intuitively, if the model is trained on an individually-fair dataset, the predictions on the test set between similar individuals tend to be the same, so the consistency score on the test set increases. 
In the extreme case, a consistency score of 1 indicates that all similar pairs of test examples get the same predictions from the model. 


\stitle{Local Sensitive Hashing.} 
We construct a similarity matrix $W$ using locality sensitive hashing (LSH)~\cite{Andoni2015falconn} instead of materializing it entirely and thus avoid performing $O(n^2)$ comparisons. We exploit the fact that $W$ is sparse, because most examples are dissimilar. This approach is similar to blocking techniques in entity resolution~\cite{DBLP:books/daglib/0030287}.




\subsection{Label Flipping Optimization Problem}\label{sec:opt}

We define the label flipping optimization problem for individual fairness. Given a training dataset \at{D} and a limit $m$ of total error allowed, our goal is to flip the minimum number of labels in \at{D} such that it becomes \emph{m-individually fair}. This statement can be formalized as a mixed-integer quadratic programming (MIQP) problem:
\begin{equation}\label{equ:miqp}
\begin{split}
    \text{(MIQP)} \quad \min \quad &\sum_{i=1}^{n}{(y_i-y_i')^2} \\
    \text{s.t.} \quad &\sum_{i=1}^{n} \sum_{j=1}^{n} W_{ij} (y_i-y_j)^2 \leq m \\
    & y_i \in \text{\{0,1\}}, \forall i 
\end{split}
\end{equation}
where $y_i$ indicates an output label, and $y_i'$ is its original value. Intuitively, we count the number of flips ($(y_i-y_i')^2$ = 1) while ensuring that the total error ($(y_i-y_j)^2$ = 1 and $W_{ij} > 0$) is within the limit $m$. We call a solution {\em feasible} if it satisfies the error constraints in Equation~\ref{equ:miqp}, but may or may not be optimal.

MIQP is an NP-hard problem in general, and we prove that our specific instance of the MIQP problem is also NP-hard.

\begin{theorem} \label{lem:nphard}
The MIQP problem in Equation~\ref{equ:miqp} is NP-hard.
\end{theorem}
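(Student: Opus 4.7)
\begin{hproof}
The plan is to establish NP-hardness by a polynomial-time reduction from a known NP-complete problem to the decision version of the MIQP in Equation~\ref{equ:miqp}, which asks: given $(W, y', m, k)$, does there exist $y \in \{0,1\}^n$ with total error at most $m$ and at most $k$ flips? Membership in NP is immediate, since the labeling $y$ itself serves as a polynomial-size certificate whose flip count and total error can both be evaluated in $O(n^2)$ time.

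For hardness, I would first try a reduction from a graph-theoretic problem whose combinatorial structure most closely resembles ours, namely a cardinality-constrained graph cut or covering problem such as \textsc{Min Bisection} or (weighted) \textsc{Vertex Cover}. Given an instance $G = (V, E)$ of the source problem with parameter $k$, my construction proceeds in three steps. First, for each vertex $v \in V$ I create one ``main'' node in the similarity graph, assigning initial label $y'_v \in \{0,1\}$ to encode the intended side of the partition or cover. Second, for each edge of $G$ I add a similarity edge (and, where useful, an auxiliary ``edge'' node with a predetermined label) whose weight mirrors the source edge weight. Third, I set the error bound $m$ equal to the target cut value in the source instance, and choose the flip budget $k'$ so that a ``yes'' instance of the source problem corresponds exactly to a feasible labeling with at most $k'$ flips and total error at most $m$.

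The main technical obstacle is the mismatch between our \emph{uniform} per-flip cost and the \emph{weighted} quadratic cut in the constraint: most natural source problems come with weighted vertex or edge costs, so one cannot embed them directly into our uniform-cost formulation. The standard device to handle this is \emph{node duplication}: to simulate a flip of cost $w$ at a source vertex $v$, I replace its corresponding node by $w$ copies tied together with similarity edges of very large weight $M \gg m$, so that all copies must be flipped together (any partial flip instantly exceeds the error budget) and therefore contribute exactly $w$ to the flip count. Verifying that the duplications introduce no feasible labelings outside the intended bijection with the source problem's solutions -- and pinning down the precise thresholds for $M$, $m$, and $k'$ -- is where the detailed case analysis will live and constitutes the bulk of the proof.
\end{hproof}
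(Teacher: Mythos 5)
Your high-level instinct---reduce from a cardinality-constrained cut problem---matches the paper, which reduces from the \emph{at most $k$-cardinality $s$-$t$ cut problem} (NP-hard even with unit weights). But the specific plan has two genuine gaps. First, neither of your candidate source problems maps cleanly onto the MIQP's semantics. For \textsc{Min Bisection} there is no natural initial labeling: if you start all nodes at the same label, the identity labeling has zero flips and zero error and is trivially optimal, so the reduction is degenerate; and the balance constraint has no counterpart in the MIQP. For \textsc{Vertex Cover}, ``edge covered'' means at least one endpoint is selected, whereas a violation occurs when the endpoints \emph{differ}---an uncovered edge (both endpoints 0) incurs no violation, so the constraint does not encode coverage. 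The paper avoids both problems by using the $s$-$t$ structure itself to induce the initial labels: vertices adjacent to $s$ start at label $1$, those adjacent to $t$ start at label $0$, and unattached vertices contribute no objective term. Under this encoding, cut edges incident to $s$ or $t$ correspond exactly to label flips and internal cut edges correspond exactly to violations, so the cut cardinality equals (number of flips) $+$ (total error).

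Second, you do not address the structural mismatch between the source problem, which bounds a \emph{single} quantity (cut cardinality $\le k$), and the MIQP, which splits that quantity into an objective (flips) and a constraint (error $\le m$) with an unknown division between the two. The paper resolves this by solving $k+1$ MIQP instances, one for each $m \in \{0, \dots, k\}$, and checking whether any yields flips $+$ error $\le k$; this enumeration is polynomial in the input and is essential to the argument. Finally, your node-duplication gadget for simulating weighted flip costs is solving a problem that does not arise: the unit-weight version of the source cut problem is already NP-hard, so no weight simulation is needed. As written, the proposal would require substantial new gadgetry to close these gaps, whereas the $s$-$t$ anchoring plus the $m$-enumeration make the reduction essentially immediate.
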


\ifthenelse{\boolean{techreport}}{The full proof for Theorem~\ref{lem:nphard} is in Section~\ref{sec:proofnphard}}{The full proof for Theorem~\ref{lem:nphard} can be found in our technical report~\cite{iflippertr}}. The key idea is to reduce the well-known NP-hard \textit{at most $k$-cardinality $s$-$t$ cut problem}~\cite{DBLP:journals/dam/BruglieriME04} to our MIQP problem.
\subsection{Baseline Algorithms}
\label{sec:naivesolutions}

Our key contribution is to propose  algorithms for the label flipping problem that not only scales to large datasets, but also provides feasible and high-quality solutions. We present three na\"ive algorithms that are efficient, but may fail to produce feasible solutions. In comparison, our method in Section~\ref{sec:iflipper} always produces feasible solutions with theoretical guarantees.


\stitle{Greedy Algorithm.} 
The greedy algorithm repeatedly flips labels of nodes that reduce the total error the most. The algorithm terminates if the total error is $m$ or smaller, or if we cannot reduce the error anymore. For example, suppose we start from the graph in Figure~\ref{fig:graphrep} where there are initially two violations (recall we assume that $W_{ij}=1$ for all edges for simplicity) and set $m$ = 1. We need to determine which label leads to the largest reduction in error when flipped. We can see that flipping node 1 will reduce the total error by 2 while flipping the other nodes does not change the total error. Hence, the greedy algorithm flips node 1 to reduce the total error to 0 and then terminates. While the greedy approach seems to work for Figure~\ref{fig:graphrep},  in general it does not always find an optimal result and may fail to produce a feasible solution even if one exists. Consider the example in Figure~\ref{fig:chain} where there is one violation between nodes 2 and 3. Again, we assume that $W_{ij} = 1$ for all edges for simplicity. If we set $m$ = 0, the feasible solution is to flip nodes 1 and 2 or flip nodes 3 and 4 together. Unfortunately, the greedy algorithm immediately terminates because it only flips one node at a time, and no single flip can reduce the error.

\begin{figure}[ht]
\centering
  \vspace{-0.4cm}
  \includegraphics[width=0.75\columnwidth]{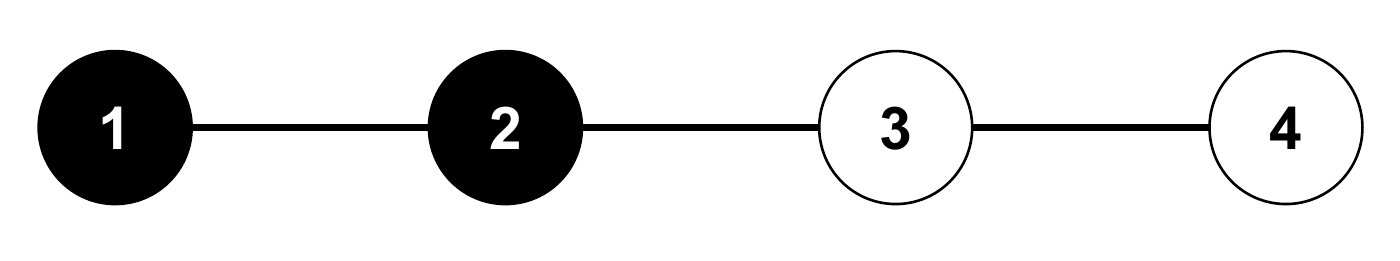}
  \vspace{-0.4cm}
  \caption{A graph where the greedy algorithm fails to find a feasible solution with zero violations.}
 \vspace{-0.4cm}
  \label{fig:chain}
\end{figure}

The computational complexity of the greedy algorithm is $O(n^2)$ because we flip at most $O(n)$ labels, and each time a label is flipped, we update the total error of each neighbor of the node. 

\stitle{Gradient Based Algorithm.}
The second na\"ive approach is to use a Lagrangian multiplier to move the error constraint into the objective function and solve the problem via gradient descent. This approach is common in machine learning, and the following equation shows the problem setup:
\begin{equation}\label{equ:lagrangian}
\begin{split}
    \text{(Lagrangian)    } \min \text{    } &\sum_{i=1}^{n}{(y_i-y_i')^2} + \lambda \sum_{i=1}^{n} \sum_{j=1}^{n} W_{ij} (y_i-y_j)^2 \\
    & y_i \in \text{[0,1]}, \forall i 
\end{split}
\end{equation}
where $\lambda$ is a hyperparameter that controls the trade-off between fairness and accuracy. A higher $\lambda$ favors better fairness. Although this gradient based algorithm is efficient, it shares the same problem as the greedy algorithm where it may get stuck in a local minima and thus fail to find a feasible solution. 

\stitle{Clustering Based Algorithm.}
The third approach is to use a clustering algorithm for clustering examples and assigning the same label to each cluster. We use k-means as a representative clustering algorithm. If $k=1$, then all examples will have the same label, and we can simply choose the majority label of the examples. If $k$ is set correctly, and the clustering is perfect, then only the similar examples will be clustered together. However, this approach is unsuitable for our problem, which is to flip the minimum number of labels to have at most $m$ total error. Reducing the total error to 0 is not the primary goal as there may be a large degradation in accuracy. To cluster for our purposes, one would have to adjust $k$ to find the clusters with just the right amount of total error, but this fine-tuning is difficult as we show in
Section~\ref{sec:optimizationcomparison}.


\section{iFlipper}\label{sec:iflipper}

We explain how \systems{} converts the MIQP problem into an approximate linear program (LP) problem and produces a feasible solution with theoretical guarantees. The conversion is done in two steps: from MIQP to an equivalent integer linear program (ILP) using linear constraints (Section~\ref{sec:milp}) and from the ILP problem to an approximate LP problem (Section~\ref{sec:linearrelaxation}). We present \systems{}'s algorithm to solve the approximate LP problem and show why its result always leads to a feasible solution (Section~\ref{sec:algorithm}). We then provide theoretical guarantees on how far the result is from the optimal solution of the ILP problem, and propose a reverse-greedy approach to further optimize the solution (Section~\ref{sec:optimality}). Finally, we present \systems{}'s overall workflow with a complexity analysis (Section~\ref{sec:systems}).

\subsection{From MIQP to Equivalent ILP} \label{sec:milp}

We convert the MIQP problem to an equivalent ILP problem. We first replace the squared terms in Equation~\ref{equ:miqp} to absolute terms:
\vspace{-0.25mm}
\begin{equation}\label{equ:miap}
\begin{split}
     \quad \min \quad &\sum_{i=1}^{n}{|y_i-y_i'|} \\
    \text{s.t.} \quad &\sum_{i=1}^{n} \sum_{j=1}^{n} W_{ij} |y_i-y_j| \leq m \\
    & y_i \in \text{\{0,1\}}, \forall i 
\end{split}
\end{equation}
\vspace{-0.25mm}
The resulting formulation is equivalent to the original MIQP because $y_i$ and $y'_i$ have binary values. 

To convert this problem into an equivalent ILP problem, we replace each absolute term with an XOR expression, which can be expressed as four linear constraints. For two binary variables $x$ and $y$, one can easily see that $|x - y| = x \text{ XOR } y$. Also, each expression $z = x \text{ XOR } y$ is known to be equivalent to the following four linear constraints: $z \leq x + y$, $z \geq y - x$, $z \geq x - y$, and $z \leq 2 - x - y$. For example, if $x=1$ and $y=1$, $z$ is bounded by $z \leq 2 - x - y$ and is thus 0. For other combinations of $x$ and $y$, $z$ will be bounded to its correct $\text{XOR}$ value as well. We thus introduce the auxiliary variables $z_i$ and $z_{ij}$ to represent $y_i \text{ XOR } y'_i$ and $y_{i} \text{ XOR } y_j$, respectively, and obtain the following integer linear programming (ILP) problem:
\vspace{-0.25mm}
\begin{equation}\label{equ:milp}
\begin{split}
    \text{(ILP)} \quad \min \quad &\sum_{i=1}^{n}{z_i} \\
    \text{s.t.} \quad &\sum_{i=1}^{n} \sum_{j=1}^{n} W_{ij }z_{ij} \leq m \\
    & y_i, z_i\in \text{\{0,1\}}, \forall i, \quad z_{ij}\in \text{\{0,1\}}, \forall i,j\\
    & z_i - y_i\leq y'_i, \quad z_i - y_i \geq  - y'_i \\
    &z_i + y_i \geq y'_i, \quad z_i + y_i \leq 2 - y'_i \\
    & z_{ij} - y_i - y_j \leq 0, \quad z_{ij} - y_i + y_j \geq 0 \\
    & z_{ij} + y_i - y_j\geq 0, \quad z_{ij} + y_i + y_j \leq 2 \\
\end{split}
\end{equation}
\vspace{-0.25mm}
Since the ILP problem is equivalent to the MIQP problem, we know it is NP-hard as well. In the next section, we convert the ILP problem to an approximate linear program (LP), which can be solved efficiently.

\subsection{From ILP to Approximate LP} \label{sec:linearrelaxation}
We now relax the ILP problem to an approximate LP problem. At this point, one may ask why we do not use existing solvers like CPLEX~\cite{cplex2009v12}, MOSEK~\cite{mosek}, and Gurobi~\cite{gurobi}, which also use LP relaxations. All these approaches repeatedly solve LP problems using branch-and-bound methods and have time complexities exponential to the number of variables in the worst case. In Section~\ref{sec:optimizationcomparison}, we demonstrate how slow the ILP solvers are. Our key contribution is finding a near-exact solution to the original ILP problem by solving the LP problem \emph{only once}.

We first replace the integer constraints in \rev{Equation~\ref{equ:milp}} with range constraints to obtain the following LP problem:
\begin{equation}\label{equ:mlp}
\begin{split}
    \text{(LP)} \quad \min \quad &\sum_{i=1}^{n}{z_i} \\
    \text{s.t.} \quad &\sum_{i=1}^{n} \sum_{j=1}^{n} W_{ij} z_{ij} \leq m \\
    & y_i, z_i\in \text{[0,1]}, \forall i, \quad z_{ij}\in \text{[0,1]}, \forall i,j\\
    & z_i - y_i\leq y'_i, \quad z_i - y_i \geq  - y'_i \\
    &z_i + y_i \geq y'_i, \quad z_i + y_i \leq 2 - y'_i \\
    & z_{ij} - y_i - y_j \leq 0, \quad z_{ij} - y_i + y_j \geq 0 \\
    & z_{ij} + y_i - y_j\geq 0, \quad z_{ij} + y_i + y_j \leq 2 \\
\end{split}
\end{equation}

Now the violation between two points becomes the product of the weight of the edge $W_{ij}$ and the absolute difference between the two labels $z_{ij}=|y_i-y_j|$. Although this problem can be solved more efficiently than the ILP problem, its result cannot be used as is because of the continuous values. Hence, we next convert the result into a binary solution that is close to the optimal solution of the ILP problem. A na\"ive method is to round the continuous values to their nearest integers, but this does not guarantee a feasible solution.

\begin{example}
Suppose we start from the graph in Figure~\ref{fig:rounding}a. Just like previous examples, we again assume that $W_{ij}$ is 1 for simplicity. The total error in Figure~\ref{fig:rounding}a is 4, and we set $m = 2$. Solving the LP problem in Equation~\ref{equ:mlp} can produce the solution in Figure~\ref{fig:rounding}b where the labels of nodes 1 and 4 are flipped from 1 to 0.5 (gray color). One can intuitively see that the labels are minimally flipped while the total error is exactly 2. However, rounding the labels changes the two 0.5's back to 1's as shown in Figure~\ref{fig:rounding}c, resulting in an infeasible solution, because the total error becomes 4 again.
\end{example}

\vspace{-0.2cm}
\begin{figure}[h]
\centering
  \includegraphics[width=\columnwidth]{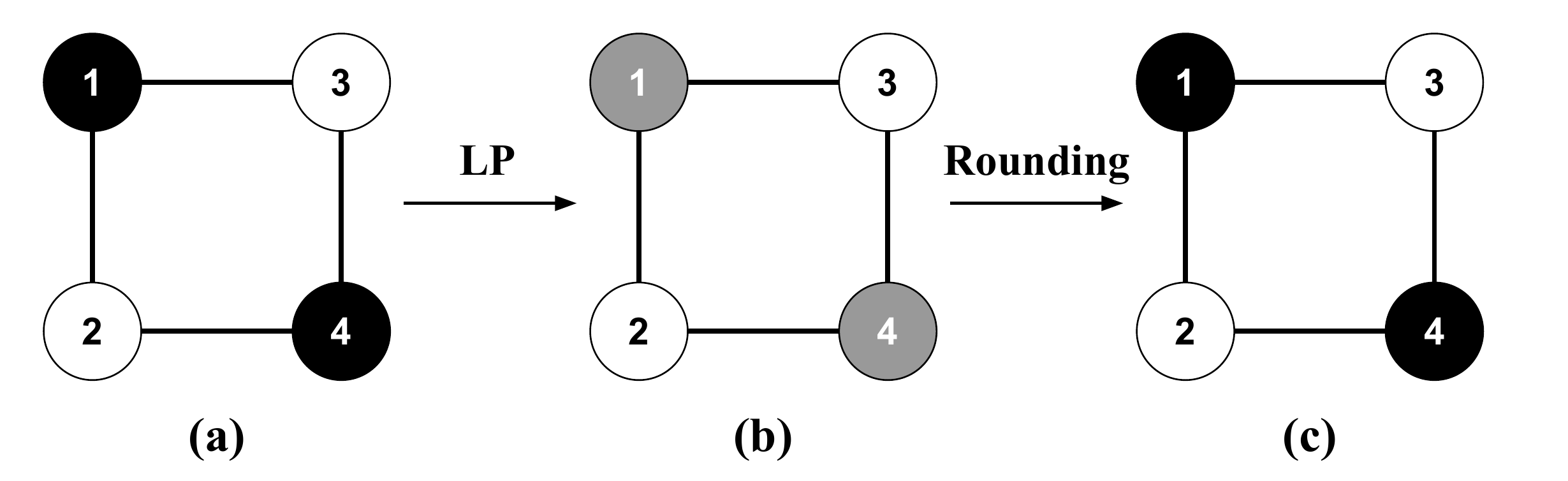}
  \vspace{-0.5cm}
  \caption{Performing simple roundings on the optimal solution's values of the LP problem may result in an infeasible solution for the ILP problem.}
  \label{fig:rounding}
\end{figure}




\subsection{Constructing a Feasible Solution}\label{sec:algorithm}

We now explain how to construct a feasible integer solution for the ILP problem (Equation~\ref{equ:milp}) from an optimal solution of the LP problem (Equation~\ref{equ:mlp}). We first prove a surprising result that any optimal solution $y^*$ can be converted to another optimal solution $\check{y}$ where all of its variables have one of the three values: 0, 1, or some $\alpha \in (0, 1)$. That is, $\alpha$ is between 0 and 1, but not one of them. Next, we utilize this property and propose an adaptive rounding algorithm that converts $\check{y}$ into a feasible solution whose values are only 0's and 1's.

\stitle{Optimal Solution Conversion.}
We prove the following lemma for converting an optimal solution to our desired form.

\begin{lemma} \label{lem:converting}
    Given an optimal solution $y^*$ of Equation~\ref{equ:mlp}, we can always convert $y^{*}$ to a new solution $\check{y}$ where $\check{y}$ is also optimal and only has 0, 1, or a unique $\alpha \in (0, 1)$ as its values.
\end{lemma}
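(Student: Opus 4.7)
\begin{hproof}
The plan is an iterative perturbation argument that reduces the number of distinct fractional values in an optimal solution until only one remains. First I would observe that in any optimal solution of the LP in Equation~\ref{equ:mlp}, the auxiliary variables can (without loss of optimality) be set to $z_i = |y_i - y_i'|$ and $z_{ij} = |y_i - y_j|$: the objective $\sum_i z_i$ forces the first, and since $z_{ij}$ only appears on the left of the single budget constraint $\sum W_{ij} z_{ij} \le m$, lowering it to its lower bound $|y_i - y_j|$ never hurts. So I work with the reduced form: minimize $\phi(y) := \sum_i |y_i - y_i'|$ over $y \in [0,1]^n$ subject to $\psi(y) := \sum_{i,j} W_{ij} |y_i - y_j| \le m$.

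Let $y^*$ be optimal, and let $\alpha_1 < \cdots < \alpha_k$ denote its distinct values lying in $(0,1)$. If $k \le 1$ we are done, so assume $k \ge 2$. I would define a $k$-parameter perturbation $y(\epsilon)$ that shifts every coordinate with value $\alpha_\ell$ by $\epsilon_\ell$ and leaves integer coordinates alone. For $\epsilon$ small enough that no $\alpha_\ell + \epsilon_\ell$ crosses $0$, $1$, or another $\alpha_m$, all the signs inside the absolute values freeze and both quantities become affine in $\epsilon$:
\begin{equation*}
\phi(y(\epsilon)) = \phi(y^*) + C \cdot \epsilon, \qquad \psi(y(\epsilon)) = \psi(y^*) + D \cdot \epsilon,
\end{equation*}
where $C_\ell$ counts the $\alpha_\ell$-nodes whose original label is $0$ minus those whose original label is $1$, and $D_\ell$ aggregates the signed edge contributions incident to the $\alpha_\ell$-nodes.

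The crux is to show that $C$ and $D$ are linearly dependent in $\mathbb{R}^k$. If $\psi(y^*) < m$, the budget constraint is slack, so every sufficiently small $\epsilon$ is feasible in either direction; optimality of $y^*$ forces $C \cdot \epsilon \ge 0$ for all such $\epsilon$, hence $C = 0$. If $\psi(y^*) = m$, feasibility requires $D \cdot \epsilon \le 0$ and optimality requires $C \cdot \epsilon \ge 0$ on that half-space; Farkas' lemma then yields $C = -\lambda D$ for some $\lambda \ge 0$. In either case the joint null space $N = \{\epsilon : C \cdot \epsilon = 0,\, D \cdot \epsilon = 0\}$ has dimension at least $k - 1 \ge 1$.

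To finish, I would pick any nonzero $\epsilon^0 \in N$ and move along the ray $\{t \epsilon^0 : t \ge 0\}$. Because $\phi$ and $\psi$ are exactly affine in $t$ on the sign-preserving region and both affine slopes vanish, $y(t \epsilon^0)$ remains optimal and feasible up to the first $t^*$ at which the linearity breaks. Since the coordinates $\alpha_\ell + t \epsilon_\ell^0$ are nonconstant linear functions of $t$ confined to $[0,1]$, such a $t^*$ must exist, and at that moment at least one coordinate either merges with another active value or hits $\{0, 1\}$; either event strictly decreases the number of distinct fractional values. Iterating produces the desired $\check{y}$ with at most one fractional value. The main obstacle is the linear-dependence step for $C$ and $D$, which hinges on carefully invoking optimality on the right half-space; once that Farkas-style argument is in place, the collapse to a three-valued solution is mechanical.
\end{hproof}
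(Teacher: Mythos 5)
Your proof is correct, and it reaches the same destination by a genuinely different mechanism. Both arguments are iterative perturbation arguments: group the coordinates sharing a fractional value, move those groups rigidly, exploit that the objective and the error are affine in the shift until a coordinate hits $0$, $1$, or another fractional value, and iterate until at most one fractional value survives. The difference is in how the admissible direction is found. The paper constructs it explicitly and locally: it isolates one cluster with $N_\alpha=0$ (your $C_\ell=0$) and moves it alone toward $a_k$ or $a_{k+1}$ according to the sign of $S_\alpha$ (Lemma~\ref{lem:onecluster}), or pairs two clusters with $N_\alpha,N_\beta\neq 0$ and moves them along $\epsilon_\alpha N_\alpha+\epsilon_\beta N_\beta=0$, branching on the signs of $\tfrac{N_\alpha}{N_\beta}$ and $\tfrac{(S_\alpha-E)N_\beta-(S_\beta+E)N_\alpha}{N_\beta}$ to pick the direction in which the total error does not increase (Lemma~\ref{lem:twocluster}). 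You instead perturb all $k$ fractional values at once and invoke optimality through a Farkas-type argument to show the gradients $C$ and $D$ of the flip count and the total error are linearly dependent, so their common null space has dimension at least $k-1\ge 1$; any nonzero null direction then preserves both quantities exactly. What your route buys is brevity and uniformity: you avoid the two-sublemma case analysis entirely, and optimality does the work of guaranteeing a good direction exists rather than having to verify one case at a time. What the paper's route buys is constructiveness: the explicit sign conditions and target values $(a_k,\beta+\tfrac{N_\alpha}{N_\beta}(a_k-\alpha))$, etc., are exactly what Algorithm~\ref{alg:convertingfunctions} implements, and the paper's moves may strictly decrease the total error whereas yours holds it fixed (both are fine for feasibility). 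Your preliminary reduction setting $z_i=|y_i-y_i'|$ and $z_{ij}=|y_i-y_j|$ is a step the paper leaves implicit but is easily justified as you state. One small point to make explicit when writing this up: the first breakpoint $t^*$ is finite because at least one coordinate of $\epsilon^0$ is nonzero and the corresponding value is a nonconstant affine function confined to $[0,1]$ -- you say this, and it is the only place where existence of the terminating event needs an argument.
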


\begin{proof}
Suppose there are at least two distinct values $\alpha$ and $\beta$ that are not 0 or 1 in $y^*$. We can combine all the $y^*$ nodes whose value is $\alpha$ to one node while maintaining the same total error because an $\alpha$-$\alpha$ edge has no violation (see Figure~\ref{fig:clusterexample} for an illustration). We call this collection an $\alpha$-cluster. Likewise, we can generate a $\beta$-cluster.

\vspace{-0.4cm}
\begin{figure}[ht]
\centering
  \includegraphics[width=0.7\columnwidth]{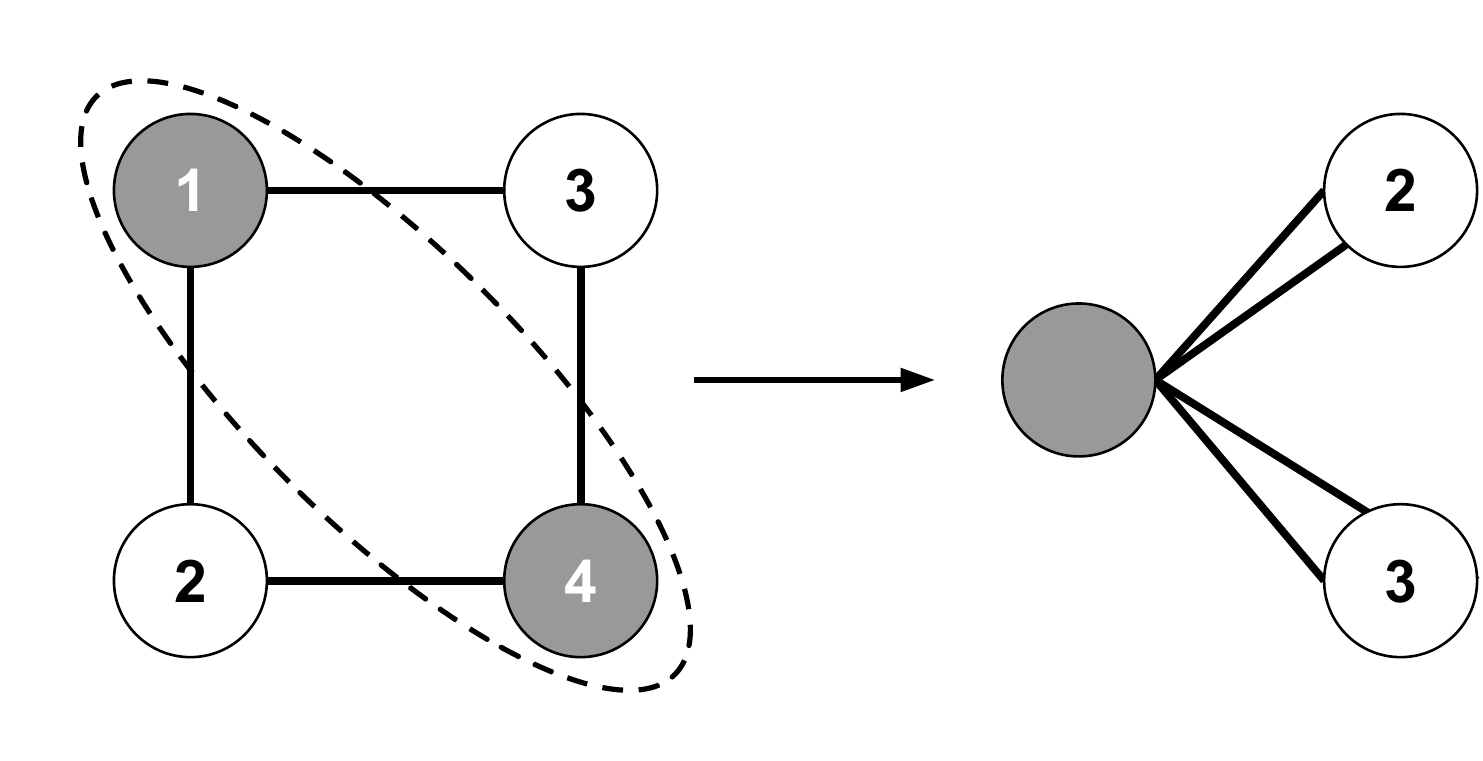}
  \vspace{-0.3cm}
  \caption{Continuing from Figure~\ref{fig:rounding}, recall that the optimal LP solution has nodes 1 and 4 with 0.5 labels. We can combine nodes 1 and 4 to construct a 0.5-cluster connected to nodes 2 and 3 with four edges as shown on the right. The total error is still 2.}
  \label{fig:clusterexample}
\end{figure}
\vspace{-0.1cm}

Without loss of generality, let us assume that 0 < $\alpha$ < $\beta$ < 1, and the sum of the pairwise node similarities between the two clusters is $E$. Suppose an $\alpha$-cluster and a $\beta$-cluster have $A_0$ and $B_0$ nodes whose initial labels are 0, respectively, and $A_1$ and $B_1$ nodes whose initial values are 1, respectively. Let $N_\alpha=A_0-A_1$ and $N_\beta=B_0-B_1$. Now suppose there are $U$ nodes connected to the $\alpha$-cluster by an edge $W_{a_i\alpha}$ and $V$ nodes connected to the $\beta$-cluster by an edge $W_{b_i\beta}$, whose values satisfy
\begin{equation*}
\begin{split}
    &0 \leq a_{1} \leq ... \leq a_{k} < \alpha < a_{k+1} \leq ... \leq a_{U} \leq 1 \text{ and } \\
    &0 \leq b_{1} \leq ... \leq b_{l} < \beta < b_{l+1} \leq ... \leq b_{V} \leq 1.
\end{split}
\end{equation*}

Note that there is no connected node with a value of $\alpha$ or $\beta$ by construction. Let $S_\alpha = \sum_{i=1}^{k}W_{a_i \alpha} - \sum_{i=k+1}^{U}W_{a_i \alpha}$ and $S_\beta = \sum_{i=1}^{l}W_{b_i \beta} - \sum_{i=l+1}^{V}W_{b_i \beta}$. Let us also add the following nodes for convenience: $a_{0}=0$, $a_{U+1}=1$, $b_{0}=0$, and $b_{V+1}=1$. We can then reduce at least one unique non-0/1 value in $y^*$, by either changing $\alpha$ to $a_{k}$ or $a_{k+1}$, or changing $\beta$ to $b_{l}$ or $b_{l+1}$, or changing both $\alpha$ and $\beta$ to the same value, while keeping the solution optimal using the following Lemmas \ref{lem:onecluster} and \ref{lem:twocluster}. The key idea is that at least one of the conversions allows the solution to have the same number of label flippings and the same or even smaller amount of total error, keeping the solution optimal. The conversion process only depends on $N_\alpha$, $S_\alpha$, $N_\beta$, $S_\beta$, and $E$. The complete proof and detailed conditions for each case are given in \ifthenelse{\boolean{techreport}}{Sections~\ref{sec:proofonecluster} and~\ref{sec:prooftwocluster}}{our technical report~\cite{iflippertr}}.


\begin{restatable}{sublemma}{primelemmaone}
\label{lem:onecluster}
For an $\alpha$-cluster with $N_\alpha=0$ in the optimal solution, we can always convert $\alpha$ to either $a_k$ or $a_{k+1}$ while maintaining an optimal solution. As a result, we can reduce exactly one non-0/1 value in the optimal solution.
\end{restatable}

\begin{restatable}{sublemma}{primelemmatwo}
\label{lem:twocluster}
For an $\alpha$-cluster with $N_\alpha\ne0$ and a $\beta$-cluster with $N_\beta\ne0$ in the optimal solution, we can always convert $(\alpha, \beta)$ to one of $(a_k, \beta+\frac{N_\alpha}{N_\beta}(a_k-\alpha))$, $(a_{k+1}, \beta-\frac{N_\alpha}{N_\beta}(a_{k+1}-\alpha))$, $(\alpha+\frac{N_\beta}{N_\alpha}(\beta-b_l), b_{l})$, $(\alpha-\frac{N_\beta}{N_\alpha}(b_{l+1}-\beta), b_{l+1})$, or $(\frac{\alpha N_\alpha+\beta N_\beta}{N_\alpha+N_\beta}, \frac{\alpha N_\alpha+\beta N_\beta}{N_\alpha+N_\beta})$, while maintaining an optimal solution. As a result, we can reduce at least one non-0/1 value in the optimal solution.
\end{restatable}

We can repeat this adjustment until the solution $\check{y}$ has at most one unique value $\alpha$ that is neither 0 nor 1.
\end{proof}
\vspace{-2mm}
Based on Lemma~\ref{lem:converting}, Algorithm~\ref{alg:converting} shows how to convert the optimal LP solution $y^*$ to another optimal solution $\check{y}$ whose values are in \{0, $\alpha$, 1\}. We first obtain all the unique non-0/1 values in $y^*$ and for each value $v$ construct a $v$-cluster. We then pick a cluster (say the $\alpha$-cluster) with $N_\alpha=0$ and change $\alpha$ using the \textsc{TransformWithOneCluster} function, which implements the converting process of Lemma~\ref{lem:onecluster} (see \ifthenelse{\boolean{techreport}}{Section~\ref{sec:proofonecluster}}{our technical report~\cite{iflippertr}} for details) until there are no more such clusters (Lines 4--7). Among the rest of the clusters, we choose two (say the $\alpha$-cluster and $\beta$-cluster) and transform them using the \textsc{TransformWithTwoClusters} function, which implements the combining process of Lemma~\ref{lem:twocluster} described in \ifthenelse{\boolean{techreport}}{Section~\ref{sec:prooftwocluster}}{our technical report~\cite{iflippertr}} (Lines 8--11). We repeat these steps until there is at most one non-0/1 value in $\check{y}$. The details of the \textsc{TransformWithOneCluster} and \textsc{TransformWithTwoClusters} functions are shown in Algorithm~\ref{alg:convertingfunctions}.


\SetKwComment{Comment}{// }{}
\begin{algorithm}[t]
    \SetKwInput{Input}{Input}
    \SetKwInOut{Output}{Output}
    \Input{Optimal solution $y^*$ for the LP problem, similarity matrix $W$}
    \Output{Transformed optimal solution $\check{y}$ where each value is one of \{0, $\alpha$, 1\}}
    \SetKwFunction{TFWO}{\textnormal{\textsc{TransformWithOneCluster}}}
    \SetKwFunction{TFWT}{\textnormal{\textsc{TransformWithTwoClusters}}}
    \SetKwProg{Fn}{Function}{:}{}
    
    $\check{y}$ = $y^*$\;
    \Comment{There are T unique non-0/1 values in $y^*$}
    $T$ = \textsc{GetNumClusters}($y^*$)\;
    \While{$T > 1$}{
        $ZeroClusterList$ = \textsc{GetZeroClusters}($\check{y}$)\;
        \For{$\alpha$ in $ZeroClusterList$}{
            \Comment{Details are in Algorithm~\ref{alg:convertingfunctions}}
            \textsc{TransformWithOneCluster}($\check{y}$, $\alpha$, $W$)\;
            $T$ = $T$ - 1\;
        }
        \If {$T > 1$}{
        $\alpha$, $\beta$ = \textsc{GetNonZeroTwoClusters}($\check{y}$)\;
        \Comment{Details are in Algorithm~\ref{alg:convertingfunctions}}
        \textsc{TransformWithTwoClusters}($\check{y}$, $\alpha$, $\beta$, $W$)\;
        $T$ = \textsc{GetNumClusters}($\check{y}$)\;
        }
    }
    \Comment{There is at most one unique non-0/1 value in $\check{y}$}
    {\bf return $\check{y}$}\;
    \caption{\systems{}'s LP solution conversion algorithm.}
    \label{alg:converting}
\end{algorithm}

\SetKwComment{Comment}{// }{}
\begin{algorithm}[t]
    \SetKwFunction{TFWO}{\textnormal{\textsc{TransformWithOneCluster}}}
    \SetKwFunction{TFWT}{\textnormal{\textsc{TransformWithTwoClusters}}}
    \SetKwProg{Fn}{Function}{:}{}
    
    
    \Fn{\TFWO{$y$, $\alpha$, $W$}}{
    
    \Comment{\rev{Replace all $\alpha$ with $\alpha_{new}$ to reduce one unique non-0/1 value}}
        
    $a_k$, $a_{k+1}$, $S_\alpha$ = \textsc{GetOneClusterInfo}($y$, $\alpha$, $W$)\;

    \If {$S_\alpha \leq 0$}{
    $\alpha \leftarrow{} a_{k+1}$\;}
    \Else{
    $\alpha \leftarrow{} a_{k}$\;}
    \Comment{Now $\alpha \in \{a_k, a_{k+1}\}$}
    
    }
    \Fn{\TFWT{$y$, $\alpha$, $\beta$, $W$}}{
    \Comment{\rev{Replace all ($\alpha$, $\beta$) with ($\alpha_{new}$, $\beta_{new}$) using one of the five cases in Lemma~\ref{lem:twocluster} to reduce at least one unique non-0/1 value}}

    $a_k$, $a_{k+1}$, $N_\alpha$, $S_\alpha$, $b_l$, $b_{l+1}$, $N_\beta$, $S_\beta$, $E$ = \textsc{GetTwoClustersInfo}($y$, $\alpha$, $\beta$, $W$)\;

    $X$, $Y$ = $\frac{N_\alpha}{N_\beta}$, $\frac{(S_\alpha-E)N_\beta-(S_\beta+E)N_\alpha}{N_\beta}$\;
    
    \Comment{\rev{Details are in our technical report~\cite{iflippertr}}}
    
    \If {$X<0$, $Y\leq0$}{
    \If {$X+1 \leq 0$}{$\alpha \leftarrow{} \alpha+min(a_{k+1}-\alpha, -\frac{N_\beta}{N_\alpha}(b_{l+1}-\beta))$\;}
    \Else{$\alpha \leftarrow{} \alpha+min(a_{k+1}-\alpha, -\frac{N_\beta}{N_\alpha}(b_{l+1}-\beta), \frac{N_\beta(\beta-\alpha)}{N_\alpha+N_\beta})\;$
    }
    $\beta \leftarrow{} \beta-\frac{N_\alpha}{N_\beta}\epsilon_\alpha$\;
    }
    \ElseIf{$X<0$, $Y>0$}{
    \If {$X+1 \geq 0$}{$\alpha \leftarrow{} \alpha-min(\alpha-a_{k}, -\frac{N_\beta}{N_\alpha}(\beta-b_l))$\;
    }
    \Else{$\alpha \leftarrow{} \alpha-min(\alpha-a_{k}, -\frac{N_\beta}{N_\alpha}(\beta-b_l), -\frac{N_\beta(\beta-\alpha)}{N_\alpha+N_\beta})$\;}
$\beta \leftarrow{} \beta+\frac{N_\alpha}{N_\beta}\epsilon_\alpha$\;
    }
    \ElseIf{$X>0$, $Y\leq0$}{
        $\alpha \leftarrow{} \alpha+min(a_{k+1}-\alpha, \frac{N_\beta}{N_\alpha}(\beta-b_l), \frac{N_\beta(\beta-\alpha)}{N_\alpha+N_\beta})$\;
        $\beta \leftarrow{} \beta-\frac{N_\alpha}{N_\beta}\epsilon_\alpha$\;
    }
    \ElseIf{$X>0$, $Y>0$}{
        $\alpha \leftarrow{} \alpha-min(\alpha-a_{k}, \frac{N_\beta}{N_\alpha}(b_{l+1}-\beta))$\;
        $\beta \leftarrow{} \beta+\frac{N_\alpha}{N_\beta}\epsilon_\alpha$\;
    }
    \Comment{Now $(\alpha, \beta)$ is one of the cases in Lemma~\ref{lem:twocluster} } 
    }
    \caption{Transformation functions in Algorithm~\ref{alg:converting}.}
    \label{alg:convertingfunctions}
\end{algorithm}

\begin{example}
To illustrate Algorithm~\ref{alg:converting}, consider Figure~\ref{fig:rounding}a again where $m = 2$. Say we obtain an optimal solution from an LP solver as shown in Figure~\ref{fig:convertingexample}a where nodes 1 and 4 have the labels $\alpha = 0.1$ and $\beta = 0.9$, respectively, while the other nodes have 0 labels. This solution uses one flip and has a total error of 2. We first construct an $\alpha$-cluster and a $\beta$-cluster where each cluster only contains a single node, and there is no edge between them as shown in Figure~\ref{fig:convertingexample}b. We then convert ($\alpha$, $\beta$) using the \textsc{TransformWithTwoClusters}. 
As a result, we set ($\alpha$, $\beta$) to (0.5, 0.5) and reduce one non-0/1 unique value in the solution. The solution now has only one non-0/1 value (i.e., 0.5) and still has a total error of 2 while using one flip.
\end{example}

\begin{figure}[ht]
\centering
  \vspace{-0.4cm}
  \includegraphics[width=\columnwidth]{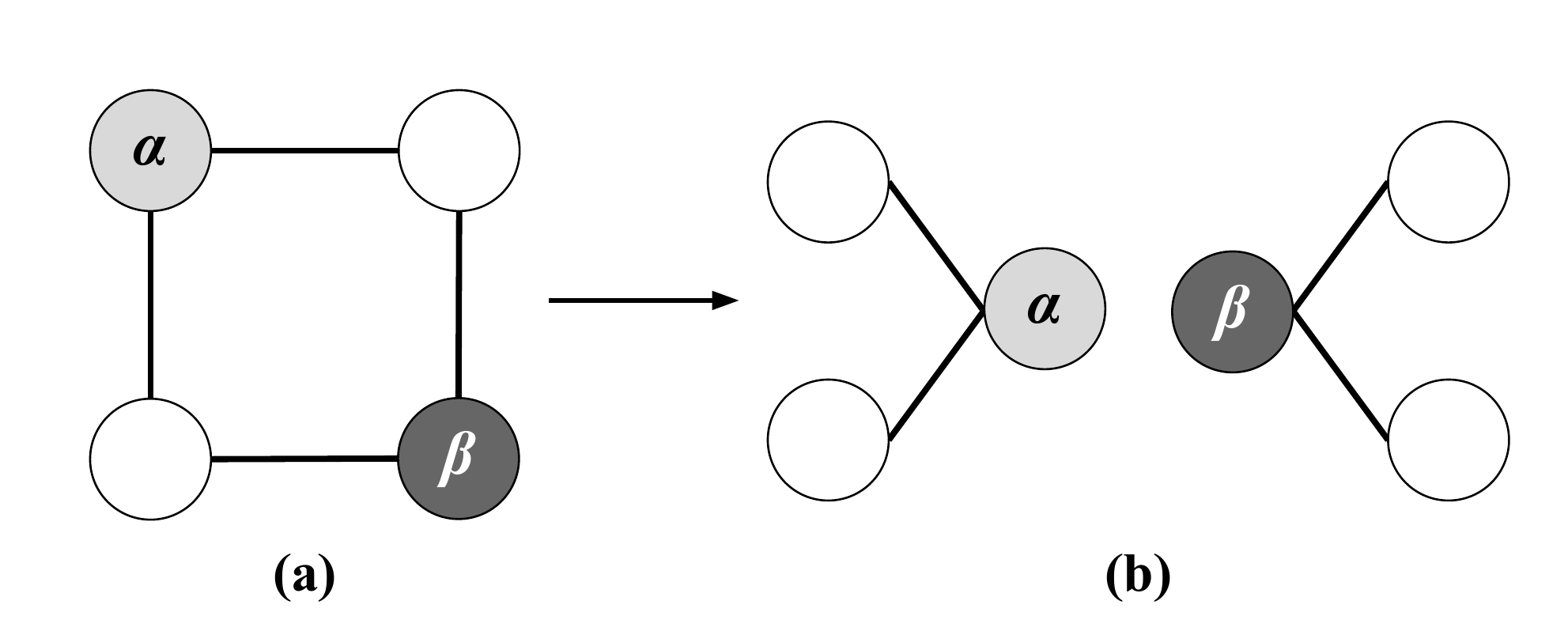}
  \vspace{-0.6cm}
  \caption{Cluster construction example for Algorithm~\ref{alg:converting}.}
  \label{fig:convertingexample}
\end{figure}

\stitle{Adaptive Rounding into a Binary Solution.} We now convert $\check{y}$ into a feasible integer solution with only 0's and 1's. If we use simple rounding to change $\alpha$ to 0 or 1 as in the na\"ive method in Section~\ref{sec:linearrelaxation}, the resulting integer solution may not be feasible like the example in Figure~\ref{fig:rounding}c. Fortunately, the fact that we only have three possible values allows us to propose an adaptive rounding algorithm (Algorithm~\ref{alg:rounding}) that guarantees feasibility. We first denote $M_{ab}$ as the sum of similarity values of ($a$-$b$) label pairs with edges.
For example, $M_{0\alpha}$ is the sum of similarities of similar node pairs whose labels are 0 and $\alpha$, respectively. Then the key idea is to round $\alpha$ to 1 if the (1-$\alpha$) label pairs have a higher similarity sum than that of the (0-$\alpha$) label pairs (i.e., $M_{0\alpha} \leq M_{1\alpha}$), and round $\alpha$ to 0 otherwise. For example, consider Figure~\ref{fig:rounding}b again. Here $\alpha=0.5$, $M_{0\alpha}=4$, and $M_{1\alpha}=0$. We have $M_{0\alpha} > M_{1\alpha}$, so rounding $\alpha$ to 0 will result in a feasible solution with zero violations.


\begin{algorithm}[t]
    \SetKwInput{Input}{Input}
    \SetKwInOut{Output}{Output}
    \Input{Transformed optimal solution $\check{y}$ for the LP problem whose values are in \{0, $\alpha$, 1\}, similarity matrix $W$}
    \Output{Feasible binary integer solution $\overline{y}$}
    \Comment{Get the total similarity values of ($0$-$\alpha$) and ($1$-$\alpha$) label pairs}
    $M_{0\alpha}, M_{1\alpha}$ = \textsc{GetSimilaritySumOfPairs}($\check{y}$, $W$)\;
    \If{$M_{0\alpha} \leq M_{1\alpha}$}{
        $\overline{\alpha} \leftarrow{} 1$\;
    }
    \Else{
        $\overline{\alpha} \leftarrow{} 0$\;
    }

    
    \Comment{Replace $\alpha$ with $\overline{\alpha} \in \{0,1\}$}
    $\overline{y}$ = \textsc{ApplyRounding}($\check{y}, \overline{\alpha}$)\;
    {\bf return $\overline{y}$}\;
    
    \caption{\systems{}'s adaptive rounding algorithm.}
    \label{alg:rounding}
\end{algorithm}

We prove the correctness of Algorithm~\ref{alg:rounding} in Lemma~\ref{lem:rounding}.

\begin{lemma} \label{lem:rounding}
Given an optimal solution $\check{y}$ of Equation~\ref{equ:mlp} where each value is one of \{0, $\alpha$, 1\}, applying adaptive rounding (Algorithm~\ref{alg:rounding}) always results in a feasible solution.
\end{lemma}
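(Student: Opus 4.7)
My plan is to directly compare the total error of the rounded integer solution $\overline{y}$ with the total error of the LP-optimal solution $\check{y}$, using the fact that $\check{y}$ only takes three distinct values $\{0,\alpha,1\}$. Since $\check{y}$ is feasible for the LP in Equation~\ref{equ:mlp}, its total error is at most $m$, so it suffices to show that the adaptive rounding never increases the total error.

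First I would partition the edges of the similarity graph (i.e., the nonzero entries of $W$) according to the pair of labels assigned to their endpoints by $\check{y}$. Because $\check{y}$'s values are in $\{0,\alpha,1\}$, the only edge classes that contribute to the total error are the $(0,1)$, $(0,\alpha)$, and $(1,\alpha)$ pairs (same-label edges contribute $0$). Using the notation $M_{ab}$ from Algorithm~\ref{alg:rounding}, the LP total error can be written as
\[
\mathrm{err}(\check{y}) \;=\; M_{01} \;+\; \alpha\, M_{0\alpha} \;+\; (1-\alpha)\, M_{1\alpha} \;\leq\; m.
\]
Next, I would compute the total error after rounding. If $\overline{\alpha}=0$, every $\alpha$-node becomes a $0$-node, so the $(0,\alpha)$ edges contribute $0$ and each $(1,\alpha)$ edge contributes its full weight, giving $\mathrm{err}(\overline{y})=M_{01}+M_{1\alpha}$. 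Symmetrically, if $\overline{\alpha}=1$, then $\mathrm{err}(\overline{y})=M_{01}+M_{0\alpha}$. Thus Algorithm~\ref{alg:rounding}'s choice of $\overline{\alpha}$ is exactly the choice that yields $\mathrm{err}(\overline{y})=M_{01}+\min(M_{0\alpha},M_{1\alpha})$.

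The final step is to show $\mathrm{err}(\overline{y})\leq \mathrm{err}(\check{y})$. Without loss of generality assume $M_{0\alpha}\leq M_{1\alpha}$, so $\overline{\alpha}=1$ and $\mathrm{err}(\overline{y})=M_{01}+M_{0\alpha}$. The desired inequality reduces to
\[
M_{0\alpha} \;\leq\; \alpha\, M_{0\alpha} + (1-\alpha)\, M_{1\alpha},
\]
i.e., $(1-\alpha)M_{0\alpha}\leq (1-\alpha)M_{1\alpha}$, which holds because $\alpha\in(0,1)$ implies $1-\alpha>0$ and $M_{0\alpha}\leq M_{1\alpha}$ by assumption. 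The symmetric case is identical. Chaining with the LP feasibility bound gives $\mathrm{err}(\overline{y})\leq m$, so $\overline{y}$ is feasible for the ILP.

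I do not anticipate a serious obstacle here: the only subtle point is to recognize that the adaptive rounding rule is precisely the greedy choice that minimizes the post-rounding error among the two possible roundings of $\alpha$, and that this minimum lies below the convex combination given by the LP error because $0<\alpha<1$. The case analysis on which of $M_{0\alpha},M_{1\alpha}$ is larger is symmetric and routine.
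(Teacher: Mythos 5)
Your proposal is correct and follows essentially the same route as the paper's proof: both express the total error of $\check{y}$ as $M_{01}+\alpha M_{0\alpha}+(1-\alpha)M_{1\alpha}\leq m$ and observe that the rounding rule selects $M_{01}+\min(M_{0\alpha},M_{1\alpha})$, which is bounded by that convex combination since $0<\alpha<1$. Your framing of the rounding rule as the greedy minimizer over the two candidate roundings is a nice one-line summary, but the underlying argument is identical.
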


\begin{proof}
The total error in $\check{y}$ can be expressed as a function of $\alpha$:
\begin{equation}\label{equ:optimalviolation}
\begin{split}
    \text{Total Error} &= \sum_{i=1}^{n} \sum_{j=1}^{n} W_{ij}\check{z}_{ij} = \sum_{i=1}^{n} \sum_{j=1}^{n} W_{ij}|\check{y}_i-\check{y}_j|\\
    &= f(\alpha) = M_{01}+\alpha M_{0\alpha}+(1-\alpha) M_{1\alpha} \leq m \\
\end{split}
\end{equation}

In addition, the error constraint in Equation~\ref{equ:mlp} is satisfied because $\check{y}$ is an optimal solution.

We now want to round $\alpha$ to either 0 or 1 while satisfying the error constraint. If $M_{0\alpha} \leq M_{1\alpha}$, we have $f(1) = M_{01}+M_{0\alpha} \leq M_{01}+\alpha M_{0\alpha}+(1-\alpha) M_{1\alpha} \leq m$ $(\because 0<\alpha<1)$, so setting $\alpha = 1$ guarantees a feasible solution. On the other hand, if $M_{1\alpha} < M_{0\alpha}$, we have $f(0) = M_{01}+M_{1\alpha} < M_{01}+\alpha M_{0\alpha}+(1-\alpha) M_{1\alpha} \leq m$ $(\because 0<\alpha<1)$, so setting $\alpha = 0$ ensures feasibility.
\end{proof}


Lemma~\ref{lem:rounding} shows that the rounding choice of $\alpha$ depends on how $M_{0\alpha}$ and $M_{1\alpha}$ compare instead of $\alpha$'s value itself. This result explains why simple rounding as in the na\"ive method does not necessarily lead to a feasible solution. 


In the next section, we analyze how accurate the rounded solution is compared to the optimal solution of the ILP problem and investigate whether it can be further improved.

\subsection{Optimality and Improvement}\label{sec:optimality}

We analyze the optimality of Algorithm~\ref{alg:rounding} and propose a technique called {\em reverse greedy} for further optimizing it without exceeding the total error limit.

\stitle{Optimality Analysis.} We prove the theoretical bounds of Algorithm~\ref{alg:rounding} in Lemma~\ref{lem:optimalbound}:

\begin{lemma} \label{lem:optimalbound}
For a given optimal solution $\check{y}$ of Equation~\ref{equ:mlp}, let us denote $N_{ab}$ as the number of nodes in $\check{y}$ where the label $a$ was flipped to $b$. Then the objective value of the output $\overline{y}$ from Algorithm~\ref{alg:rounding} is at most $C$ more than the optimal objective value of the original ILP problem where the value of $C$ depends on $\alpha$:
\begin{equation}\label{equ:test}
\begin{split}
    &\alpha=1 \rightarrow{} C=(1-\alpha)(N_{0\alpha}-N_{1\alpha})\\
    &\alpha=0 \rightarrow{} C=\alpha(N_{1\alpha}-N_{0\alpha})\\
\end{split}
\end{equation}

\end{lemma}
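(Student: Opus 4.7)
The plan is to compare the objective value of $\overline{y}$ directly with the LP objective value of $\check{y}$, and then exploit the standard fact that, since Equation~\ref{equ:mlp} is a relaxation of Equation~\ref{equ:milp}, the optimum of the LP is a lower bound on the optimum of the ILP. First I would introduce $N_{01}$ and $N_{10}$ for the nodes whose labels $\check{y}$ already flips to $0$ or $1$ (these contribute $1$ each to the objective), together with $N_{0\alpha}$ and $N_{1\alpha}$ as in the statement. Writing the LP objective of $\check{y}$ as
\begin{equation*}
\mathrm{Obj}(\check{y}) \;=\; N_{01} + N_{10} + \alpha N_{0\alpha} + (1-\alpha) N_{1\alpha},
\end{equation*}
follows immediately from the definition $z_i = |y_i - y_i'|$ and the fact that $\check{y}$ takes only three values.

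Next I would compute the integer objective $\mathrm{Obj}(\overline{y})$ produced by Algorithm~\ref{alg:rounding} in the two rounding cases. If $\alpha$ is rounded to $1$, then the $N_{0\alpha}$ nodes each contribute $1$ and the $N_{1\alpha}$ nodes contribute $0$, giving $\mathrm{Obj}(\overline{y}) = N_{01} + N_{10} + N_{0\alpha}$; subtracting $\mathrm{Obj}(\check{y})$ yields exactly $(1-\alpha)(N_{0\alpha}-N_{1\alpha})$. Symmetrically, if $\alpha$ is rounded to $0$, then $\mathrm{Obj}(\overline{y}) = N_{01} + N_{10} + N_{1\alpha}$ and the gap is $\alpha(N_{1\alpha}-N_{0\alpha})$. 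In both cases $\mathrm{Obj}(\overline{y}) - \mathrm{Obj}(\check{y}) = C$.

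To close the argument I would invoke LP relaxation: since every feasible solution of the ILP is feasible for the LP with the same objective value, the LP optimum $\mathrm{Obj}(\check{y})$ is at most the ILP optimum $\mathrm{Obj}^\star_{\mathrm{ILP}}$. Chaining this with the previous step gives
\begin{equation*}
\mathrm{Obj}(\overline{y}) \;=\; \mathrm{Obj}(\check{y}) + C \;\leq\; \mathrm{Obj}^\star_{\mathrm{ILP}} + C,
\end{equation*}
which is the claimed bound. Feasibility of $\overline{y}$ is already guaranteed by Lemma~\ref{lem:rounding}, so nothing further is needed on that front.

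There is no real obstacle beyond careful bookkeeping; the only subtlety is being precise that $N_{ab}$ counts flips relative to the original labels $y_i'$, so that the terms for label-$0$ originals becoming $\alpha$ and label-$1$ originals becoming $\alpha$ are tracked separately. Once the four-term decomposition of $\mathrm{Obj}(\check{y})$ is written down, each case of the adaptive rounding reduces to a one-line cancellation that produces exactly the constant $C$ stated in the lemma.
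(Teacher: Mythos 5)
Your proposal is correct and follows essentially the same route as the paper: the same four-term decomposition $N_{01}+N_{10}+\alpha N_{0\alpha}+(1-\alpha)N_{1\alpha}$ of the LP objective of $\check{y}$, the same case-by-case computation of the rounded objective, and the same chaining through the relaxation bound $OPT_{LP}\leq OPT_{ILP}$. The only cosmetic difference is that the paper rearranges the inequality rather than writing $\mathrm{Obj}(\overline{y})=\mathrm{Obj}(\check{y})+C$ explicitly, and it additionally remarks that $C\geq 0$ follows from the optimality of $\check{y}$.
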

\begin{proof}

Since the optimal solution of the ILP problem is always one of the feasible solutions of the LP problem, the optimal objective values ($OPT$) of the two optimization problems should satisfy:
\begin{equation}\label{equ:optimalityrelation}
\begin{split}
    OPT_{LP} \leq  OPT_{ILP}\\
\end{split}
\end{equation}

The objective value of $\check{y}$ can then be expressed as follows:
\begin{equation}\label{equ:optimalflip}
\begin{split}
    OPT_{LP}(\check{y}) &= \sum_{i=1}^{n} \check{z}_{ij} = (N_{01}+N_{10})+\alpha N_{0\alpha}+(1-\alpha) N_{1\alpha} \\
\end{split}
\end{equation}

We first consider the case where $\alpha=1$. Here the objective value of $\overline{y}$ is $OPT_{LP}(\overline{y})=(N_{01}+N_{10})+N_{0\alpha}$. We can then derive the bound on $OPT_{LP}(\overline{y})$ from Equation~\ref{equ:optimalityrelation} and Equation~\ref{equ:optimalflip} as follows:
\begin{equation}\label{equ:optimalitybound}
\begin{split}
    &OPT_{LP}(\check{y})-OPT_{LP}(\overline{y}) \leq  OPT_{ILP}-OPT_{LP}(\overline{y}) \\
    &(1-\alpha)(N_{1\alpha}-N_{0\alpha}) \leq OPT_{ILP} - OPT_{LP}(\overline{y})\\
    &OPT_{LP}(\overline{y}) \leq OPT_{ILP} + (1-\alpha)(N_{0\alpha}-N_{1\alpha})\\
\end{split}
\end{equation}

We note that $(1-\alpha)(N_{0\alpha}-N_{1\alpha})$ is always non-negative because $\check{y}$ is an optimal solution, i.e., $OPT_{LP}(\check{y}) \leq OPT_{LP}(\overline{y})$. 

Similarly, if $\alpha=0$, we can obtain the objective value bound of $\alpha(N_{1\alpha}-N_{0\alpha})$.
\end{proof}

\stitle{Reverse Greedy Algorithm.} Lemma~\ref{lem:optimalbound} shows that the bound may sometimes be loose because it depends on $\alpha$ and the number of nodes whose labels are flipped to $\alpha$. There is a possibility that Algorithm~\ref{alg:rounding} flips nodes unnecessarily, which results in a total error smaller than $m$ (see Section~\ref{sec:ablationstudy} for empirical results). Hence, we propose an algorithm called {\em reverse greedy} (Algorithm~\ref{alg:reversegreedy}), which unflips flipped labels as long as the total error does not exceed $m$. As the name suggests, we run the greedy algorithm in Section~\ref{sec:opt} in the reverse direction where for each iteration, we unflip nodes that increase the error the least to recover the original labels as much as possible. Thus, reverse greedy can only improve the optimality of Algorithm~\ref{alg:rounding}.

\begin{algorithm}[t]
    \SetKwInput{Input}{Input}
    \SetKwInOut{Output}{Output}
    \Input{Feasible binary integer solution $\overline{x}$, total error limit $m$}
    \Output{Improved binary integer solution $\tilde{x}$}
    
    $\tilde{x}$ = $\overline{x}$\;
    $flipped\_labels$ = \textsc{GetFlippedLabel}($\tilde{x}$)\;
    $total\_error$ = \textsc{GetTotalError}($\tilde{x}$)\;
    \While{$total\_error \leq m$}{
        \Comment{Unflip flipped labels of the nodes so that the total error is increased the least}
        $\tilde{x}$ = \textsc{FlipLabelLeastViolation}($\tilde{x}$, $flipped\_labels$)\;
        $flipped\_labels$ = \textsc{GetFlippedLabel}($\tilde{x}$)\;
        $total\_error$ = \textsc{GetTotalError}($\tilde{x}$)\;
    }
    
    {\bf return $\tilde{x}$}\;
    
    \caption{\systems{}'s reverse greedy algorithm.}
    \label{alg:reversegreedy}
\end{algorithm}

\begin{example}
Continuing from our example using Figure~\ref{fig:rounding} and $m = 2$, suppose we run Algorithm~\ref{alg:rounding} on Figure~\ref{fig:rounding}b and round the 0.5 labels to 0 to obtain the feasible solution in Figure~\ref{fig:reversegreedy}a. A total of two flips are performed compared to the initial graph Figure~\ref{fig:rounding}a. However, there exists an optimal solution like Figure~\ref{fig:reversegreedy}b where only one flip is necessary. Running Algorithm~\ref{alg:reversegreedy} will unflip nodes 1 or 4, and unflipping node 1 produces this solution. 
\end{example}

\begin{figure}[ht]
\centering
  \vspace{-0.5cm}
  \includegraphics[width=0.7\columnwidth]{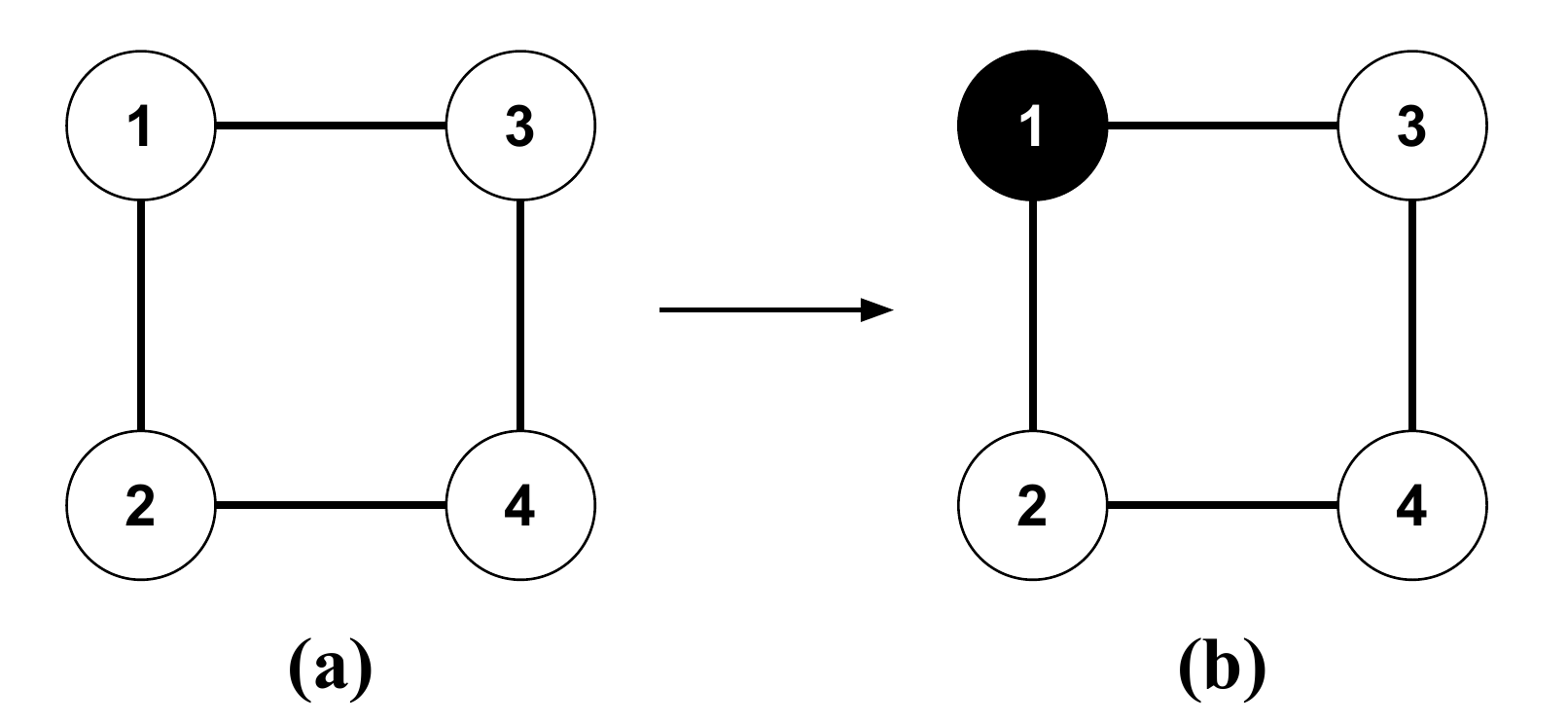}
  \vspace{-0.4cm}
  \caption{The reverse greedy algorithm can further optimize a rounded solution from Algorithm~\ref{alg:rounding}.}
  \vspace{-0.3cm}
  \label{fig:reversegreedy}
\end{figure}

The time complexity of reverse greedy is the same as the greedy algorithm, i.e., $O(n^2)$, but in practice requires fewer iterations than greedy because the total error is relatively close to $m$.

\subsection{Putting Everything Together}\label{sec:systems}
We now describe the overall workflow of \systems{}. For a given ILP problem, we first convert it into an approximate LP problem. We then solve the approximate LP problem and convert its optimal solution to another optimal solution that only has values in \{0, $\alpha$, 1\} using Algorithm~\ref{alg:converting}. We then apply adaptive rounding using Algorithm~\ref{alg:rounding}. Finally, we possibly improve the rounded solution by unflipping labels using Algorithm~\ref{alg:reversegreedy}.


\stitle{Complexity Analysis.}
\systems{} consists of four parts: solving the approximate LP problem, the converting algorithm, the adaptive rounding algorithm, and the reverse-greedy algorithm. The interior-points algorithm for solving LP has a time complexity of $O(n^{2+\varepsilon})$, where $\varepsilon$ is smaller than 1. The converting algorithm has a time complexity of $O(n^2)$: in the worst case, we may need to combine clusters $n$ times, and for each combining process, we need to check all neighbors of a particular cluster, which is up to $n$ points. For the adaptive rounding algorithm, the time complexity is also $O(n^2)$, because we need to count the number of $(1,\alpha)$ edges and $(0,\alpha)$ edges, which is $O(n^2)$ in the worst case. The reverse greedy algorithm, as analyzed in Section~\ref{sec:optimality}, is $O(n^2)$. Overall, the complexity is bounded by the LP solver, which is $O(n^{2+\varepsilon})$. This result may look worse than the greedy algorithm, but \systems{} runs much faster than greedy in practice. The reason is that we use efficient LP solvers, and the empirical running times of the adaptive rounding and reverse greedy algorithms are much faster than their theoretical worst case complexities. We show the empirical runtimes in Section~\ref{sec:exp}.

\section{Experiments} \label{sec:exp}
In this section, we evaluate \systems{} on real datasets and address the following key questions:
\begin{itemize}
    \item Is there an accuracy-fairness trade-off for \systems{}?
    \item How does \systems{} compare with various baselines in terms of model accuracy, individual fairness, and efficiency?
    \item How useful is each component of \systems{}?
    \item Does the LP solution conversion run correctly?
    \item Can \systems{} be integrated with in-processing techniques?
\end{itemize}
We implement \systems{} in Python and use Scikit-learn~\cite{scikit-learn} and PyTorch~\cite{NEURIPS2019_9015} libraries for model training. For performing optimization, we use two software packages: CPLEX~\cite{cplex2009v12} and MOSEK~\cite{mosek}. We run all experiments on Intel Xeon Gold 5115 CPUs.

\subsection{Setting}
\label{sec:experimentalsetting}
\paragraph*{Datasets} We experiment on the following three popular datasets in the fairness literature. We randomly split each dataset into training, test, and validation sets. We then use the same feature pre-processing as in IBM's AI Fairness 360 toolkit~\cite{DBLP:journals/corr/abs-1810-01943} where examples with missing values are discarded. See Table~\ref{tbl:datasets} for more details.

\begin{itemize}
    \item COMPAS~\cite{machinebias}: Contains 6,167 people examples and is used to predict criminal recidivism rates. The features include gender, race, and criminal history.
    \item AdultCensus~\cite{DBLP:conf/kdd/Kohavi96}: Contains 45,222 people examples and is used to predict whether an individual's income exceeds \$50K per year. The features include gender, age, and occupation.
    \item Credit~\cite{Dua:2019}: Contains 1,000 examples and is used to predict an individual's credit risk. The features contain race, credit amount, and credit history.
    \item \rev{Synthetic~\cite{pmlr-v54-zafar17a}: We generate 200,000 samples with two attributes ($x_1$, $x_2$) and a binary label $y$ following a process introduced in~\cite{pmlr-v54-zafar17a}. Tuples with a positive label ($x_1$, $x_2$, $y=1$) are drawn from a Gaussian distribution and, tuples with a negative label ($x_1$, $x_2$, $y=0$) are drawn from another Gaussian distribution.}
\end{itemize}

We also considered two additional fairness datasets: Bank~\cite{bankdataset} and LSAC~\cite{lsacdataset}. However, these datasets exhibit fewer individual fairness issues as shown in \ifthenelse{\boolean{techreport}}{Section~\ref{sec:banklawschool}}{our technical report~\cite{iflippertr}}. Hence, we do not include them in our main experiments because the fairness improvements are not as significant.


\paragraph*{Similarity Matrices} We consider two similarity matrices using Euclidean distance (i.e., $d(x_i, x_j)=\| x_{i}-x_{j}\|^2 $). When measuring distances, we follow previous approaches~\cite{ifair,pmlr-v28-zemel13} and exclude sensitive attributes like gender and race. The rationale is that individuals who have similar information other than the sensitive attributes values must be treated similarly. We quantify the similarity as $W_{ij} = e^{-\theta d(x_i, x_j)}$ where $\theta>0$ is a scale parameter. Our design follows a related work on individual fairness that also uses similarity matrices~\cite{petersen2021post}. See Table~\ref{tbl:datasets} for the configurations. 


\begin{itemize}
    \item {\em kNN-based}: Considers $(x_i, x_j)$ as a similar pair if $x_i$ is one of $x_j$'s nearest neighbors or vice versa:
    \[
    W_{ij} = 
    \begin{cases}
      e^{-\theta d(x_i, x_j)} & \text{if $x_i \in NN_{k}(x_j)$ or $x_j \in NN_{k}(x_i)$}\\
      0 & \text{otherwise}
    \end{cases}
    \]
    where $NN_{k}(x_j)$ denotes the set of $k$ nearest examples of $x_j$ in a Euclidean space.
    \item {\em threshold-based}: Considers $(x_i, x_j)$ as a similar pair if their distance is smaller than a threshold $T$:
    \[
    W_{ij} = 
    \begin{cases}
      e^{-\theta d(x_i, x_j)} & \text{if $d(x_i, x_j) \leq T $}\\
      0 & \text{otherwise}
    \end{cases}
    \]
\end{itemize}

We use the FALCONN LSH library~\cite{Andoni2015falconn} to efficiently construct both similarity matrices. For all datasets, we set the LSH hyperparameters (number of hash functions and number of hash tables) to achieve a success probability of 0.98 on a validation set where 98\% of similar pairs of examples are in the same buckets.

\begin{table}[t]
  \setlength{\tabcolsep}{3pt}
  \centering
  \begin{tabular}{cccccccc}
    \toprule
    {\bf Dataset} & {\bf \# Train} & {\bf \# Test}& {\bf \# Valid} & {\bf Sen. Attr} & {\bf $k$} & {\bf $T$} & {\bf $\theta$}\\
    \midrule
    COMPAS & 3,700 & 1,850 & 617 & gender & 20 & 3 & 0.05\\
    AdultCensus & 27,133 & 13,566 & 4,523 & gender & 20 & 3 & 0.1\\
    Credit & 700 & 201 & 199 & age & 20 & 7 & 0.05\\
    \rev{Synthetic} & \rev{100,000} & \rev{60,000} & \rev{40,000} & \rev{N/A}\footnotemark & \rev{20} & \rev{3} & \rev{0.05}  \\
    \bottomrule
  \end{tabular}
  \caption{Settings for the four datasets.}
  \label{tbl:datasets}
  \vspace{-0.4cm}
\end{table}
\addtocounter{footnote}{0}
\footnotetext{\rev{There is no sensitive attribute to exclude when measuring distances.}}

\ifthenelse{\boolean{revision}}{
\rev{There are two important choices to make for constructing the similarity matrix: the similarity/distance function (e.g., Euclidean) and similarity/distance thresholds (e.g., $k$ and $T$). The similarity function depends on the user’s domain knowledge on what similarity means when defining individual fairness. For example, if the user thinks each feature contributes the same to individual fairness, then Euclidean distance would be a good choice, otherwise the users can consider distances like Mahalanobis distance. The thresholds can be tuned based on the following trade-off: a more relaxed threshold makes the similarity matrix denser, which slows down the optimizations as more edges need to be considered. Hence, the goal is to keep enough information about the similarities, but ignore enough edges with small weights to make \systems{} run fast enough. 
}
}{}

\paragraph*{Measures} 
We evaluate a model's accuracy by computing the portion of correctly-predicted data points. We evaluate individual fairness using the consistency score~\cite{pfr2019} defined in Section~\ref{sec:preliminaries}, which measures the consistency of predictions on the test set between similar individuals. 
For both scores, higher values are better. We report mean values over five trials for all measures.

\paragraph*{Hyperparameter Tuning}


\systems{} provides a total error limit hyperparameter $m$, which impacts the trade-off between the model accuracy and fairness where a lower $m$ results in better fairness, but worse accuracy (see Section~\ref{sec:tradeoff}). Given a desired consistency score, we can use binary search to adjust $m$ by comparing the consistency score of the trained model with the desired score.


\paragraph*{Methods Compared} We compare \systems{} with the following existing pre-processing algorithms for individual fairness:
\begin{itemize}
    \item {\em LFR}~\cite{pmlr-v28-zemel13}: A fair representation learning algorithm that optimizes between accuracy, group fairness, and individual fairness in terms of data reconstruction loss.
    \item {\em iFair}~\cite{ifair}: A fair representation learning algorithm that optimizes accuracy and individual fairness. Compared to LFR, iFair does not optimize group fairness, which enables the classifier to have better individual fairness. If two data points are close to each other in the input space, iFair aims to map them close to each other in the feature space as well.
    \item {\em PFR}~\cite{pfr2019}: A fair representation learning algorithm that tries to learn an embedding of the fairness graph. The fairness graph used in PFR is similar to the one used in \systems{}. Similar to iFair, PFR optimizes individual fairness while preserving the original data by mapping similar individuals to nearby points in the learned representation. Among the three baselines, only PFR is able to support a general similarity matrix. PFR uses an efficient trace optimization algorithm, which can learn representations much faster than iFair.
    

\end{itemize}

For all baselines, there are multiple hyperparameters that can balance the competing objectives. We start with the same sets of hyperparameters as described in their papers, and tune them to provide the best results. 

\paragraph*{Optimization Solutions Compared.} We compare \systems{} with the following optimization baselines described in Sections~\ref{sec:naivesolutions} and~\ref{sec:milp}.
\begin{itemize}
    \item {\em Greedy}: Flips labels that reduce the total error the most.
    \item {\em Gradient}: Solves an unconstrained optimization problem.
    \item {\em kMeans}: Applies k-means clustering and, for each cluster, make its examples have the majority label.
    \item {\em ILP Solver}: Solves the ILP problem exactly using CPLEX~\cite{cplex2009v12}, which is a state-of-the-art solver.
\end{itemize}

For \gradient{} and \kmeans{}, we tune $\lambda$ and $k$, respectively, to satisfy the total error limit. We use CPLEX to solve ILP problems as it is faster than MOSEK. When solving LP problems in \systems{}, however, CPLEX turns out to be much slower than MOSEK due to its long construction time, so we use MOSEK instead.


An interesting behavior of MOSEK is that empirically all of its optimal solutions only contain values in $\{0, \alpha, 1\}$ without having to run our Algorithm~\ref{alg:converting}. Note that this behavior is limited to the label flipping problems we are solving and does not necessarily occur for other LP problems. We suspect that MOSEK's algorithm effectively contains the functionality of Algorithm~\ref{alg:converting}. We also make the same observations when using CPLEX for our LP problems. We find these results encouraging because it means that Algorithm~\ref{alg:converting} is not a significant runtime overhead and can even be tightly integrated with the LP solving code. A detailed analysis of MOSEK's code is an interesting future work. For any other solver that does not have this behavior, one can always run Algorithm~\ref{alg:converting} on its solutions. Hence to make sure our Algorithm~\ref{alg:converting} is correct, we separately evaluate it in Section~\ref{sec:conversioneval}.


\paragraph*{Model Setup}
We use logistic regression (LR), random forest (RF), and neural network (NN) models for our experiments. We tune the model hyperparameters (e.g., learning rate and maximum depth of the tree) such that the model has the highest validation accuracy. Our goal is to pre-process the training data to make these models train fairly.


\subsection{Accuracy and Fairness Trade-off}
\label{sec:accuracyfairnesstradeoff}

Figure~\ref{fig:tradeoffcompas} shows the trade-off between fairness and accuracy for the COMPAS dataset when running \systems{} with different allowed amounts of total error. The x-axis is accuracy, and the y-axis is the consistency score. There are two curves, which correspond to two different similarity matrices: kNN-based and threshold-based. Each data point on the curve has two other results: (total error after repairing, number of flips). As we flip more labels, there is less total error in the repaired dataset, and the resulting model has a higher consistency score on the test set, which means it has better individual fairness. However, the accuracy drops as we flip more labels. The trade-off curves of the other datasets are similar and \ifthenelse{\boolean{techreport}}{shown in Section~\ref{sec:tradeoffotherdatasets}}{can be found in our technical report~\cite{iflippertr}.}

\vspace{-0.35cm}
\begin{figure}[ht]
\centering
  \includegraphics[width=0.75\columnwidth]{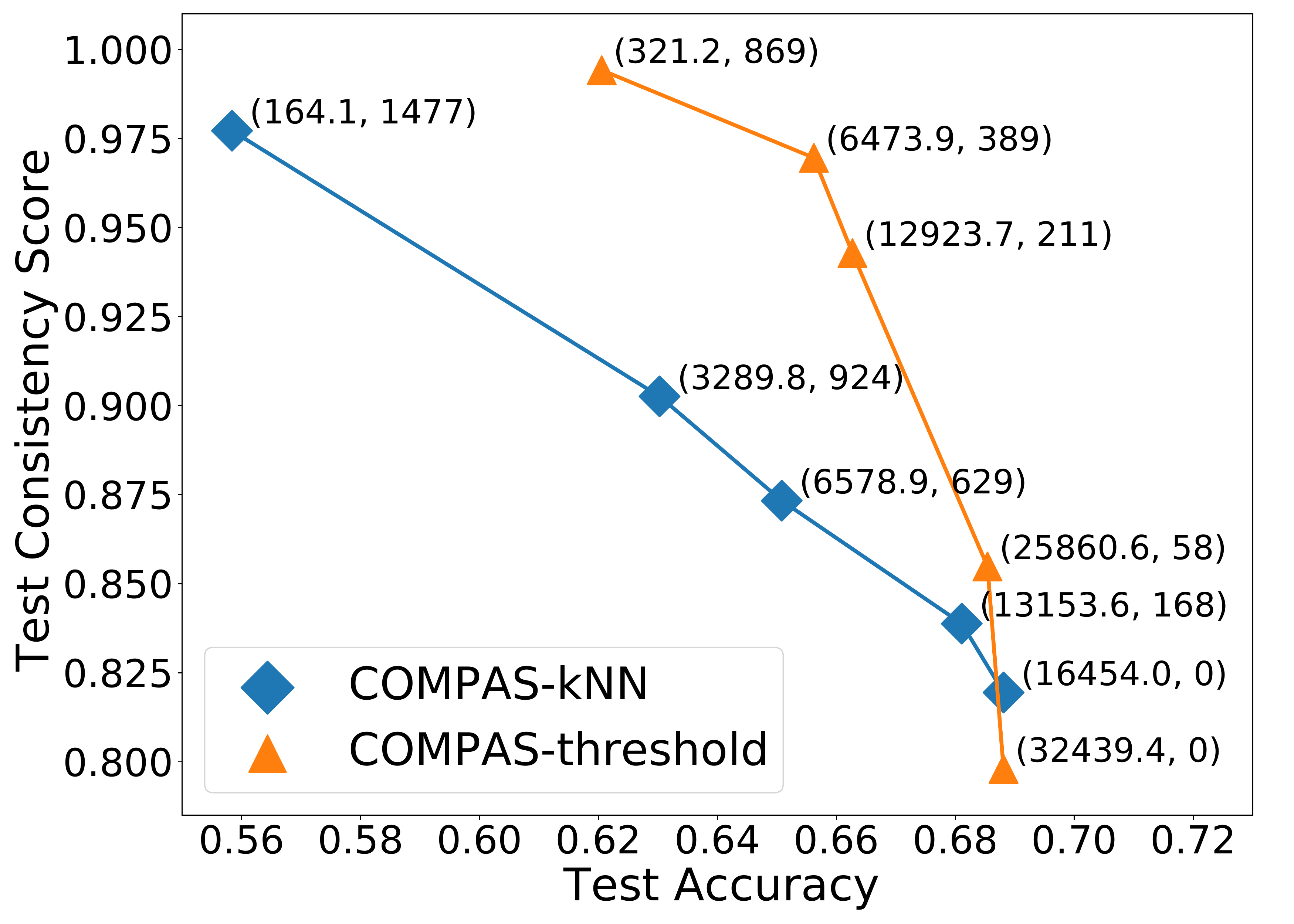}
  \vspace{-0.4cm}
  \caption{Accuracy-fairness trade-off curves for \systems{}. For each data point, we also show the total error after repairing and the number of flips in parentheses.}
  \label{fig:tradeoffcompas}
   \vspace{-0.6cm}
\end{figure}

\begin{figure*}[t]
  \centering
  \begin{subfigure}{0.33\textwidth}
     \includegraphics[width=\columnwidth]{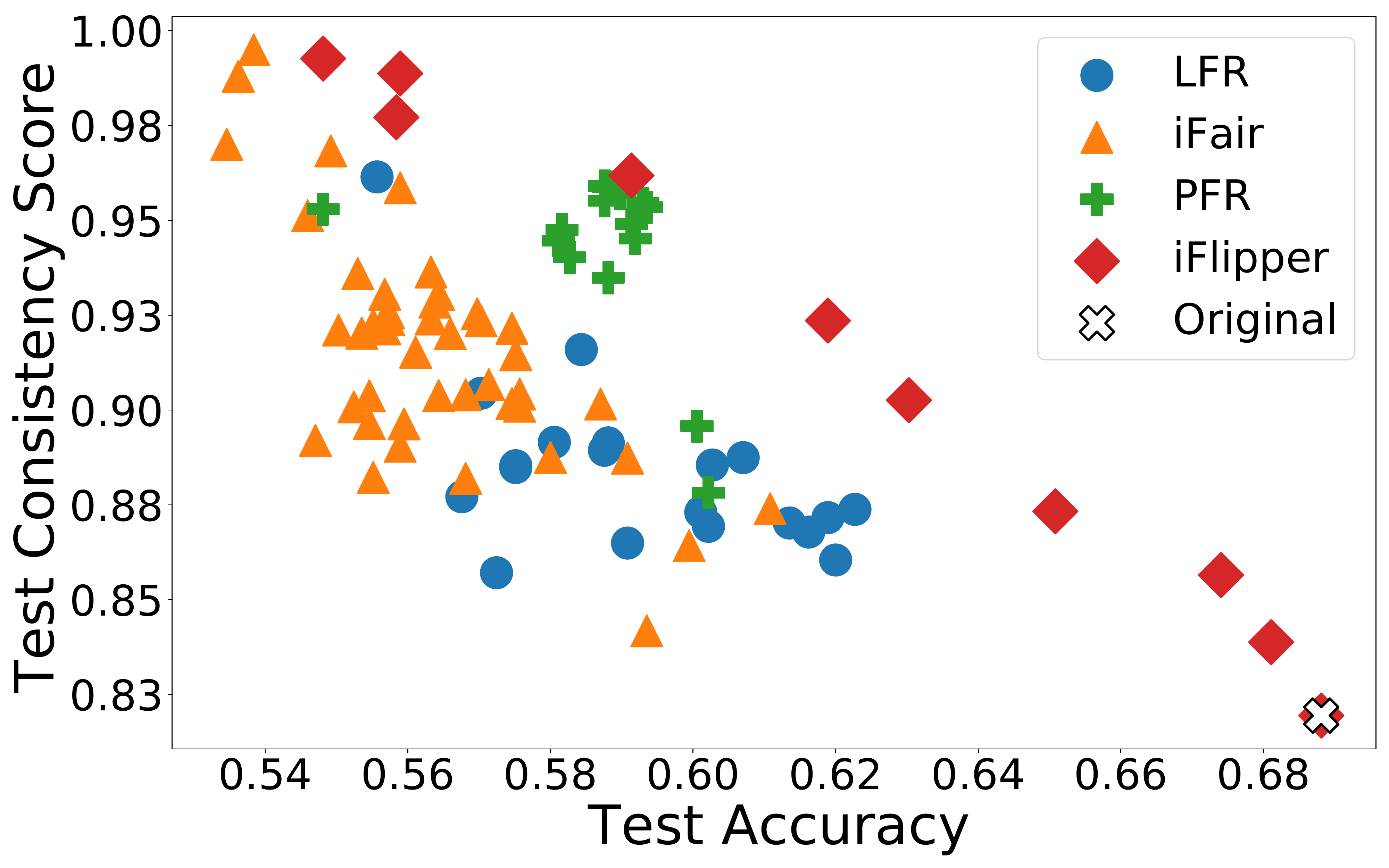}
     \vspace{-0.6cm}
     \caption{{\sf COMPAS-kNN}}
     \label{fig:COMPAS-kNN}
  \end{subfigure}
  \begin{subfigure}{0.33\textwidth}
     \includegraphics[width=\columnwidth]{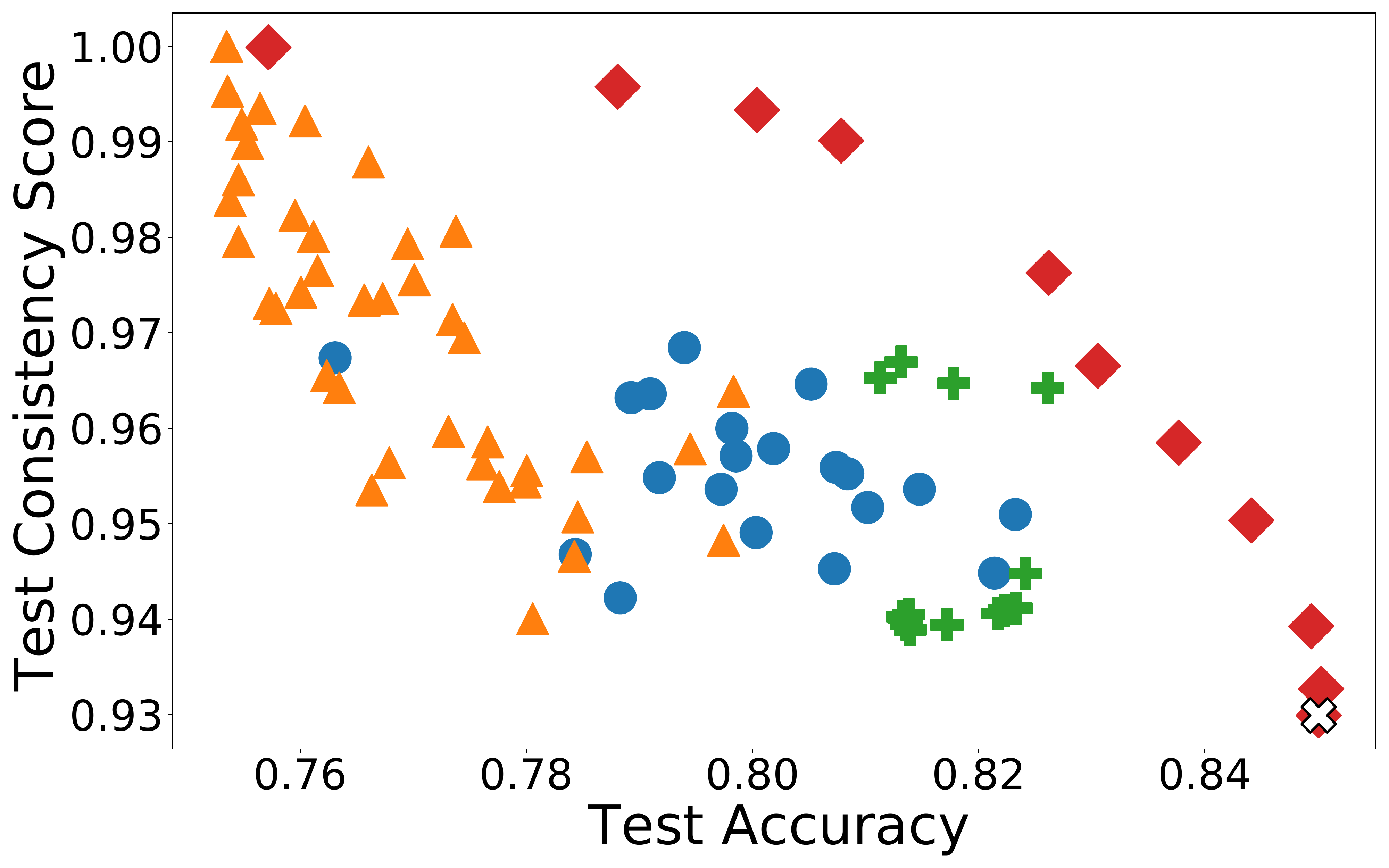}
     \vspace{-0.6cm}
     \caption{{\sf AdultCensus-kNN}}
     \label{fig:AdultCensus-kNN}
  \end{subfigure} 
  \begin{subfigure}{0.33\textwidth}
     \includegraphics[width=\columnwidth]{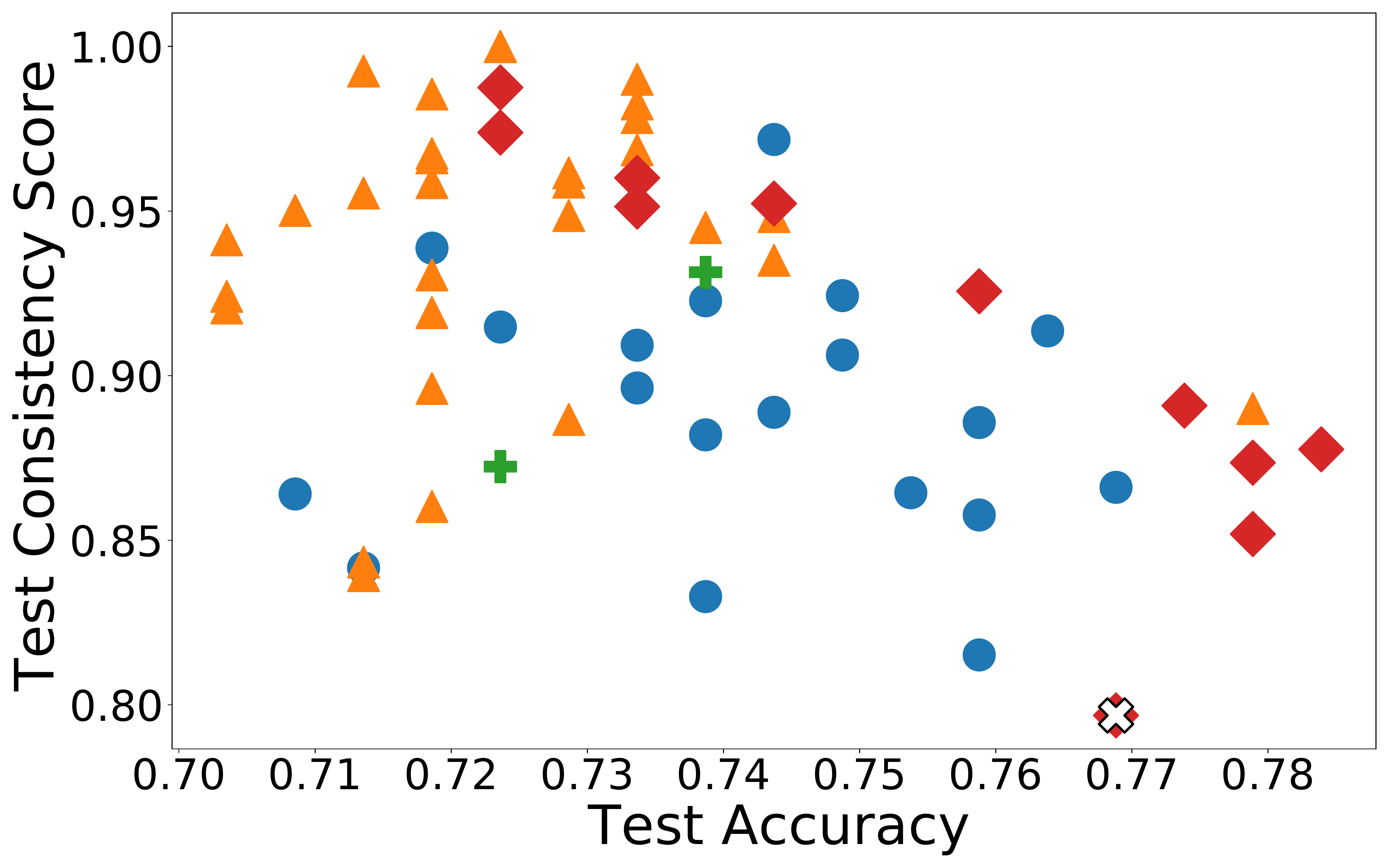}
     \vspace{-0.6cm}
     \caption{{\sf Credit-kNN}}
     \label{fig:Credit-kNN}
  \end{subfigure}
  \begin{subfigure}{0.33\textwidth}
    \includegraphics[width=\columnwidth]{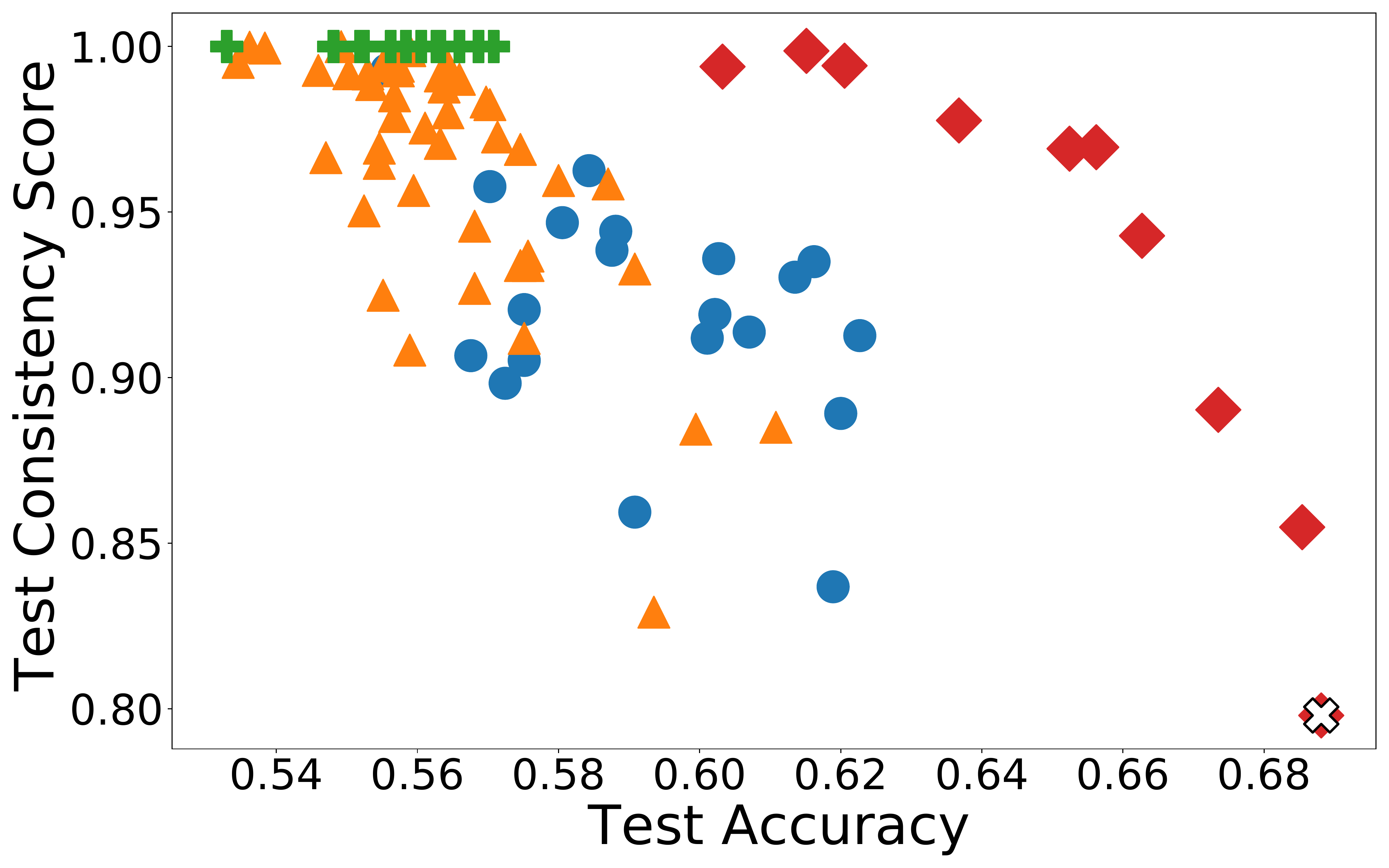}
     \vspace{-0.6cm}
     \caption{{\sf COMPAS-threshold}}
     \label{fig:COMPAS-Threshold}
  \end{subfigure} 
  \begin{subfigure}{0.33\textwidth}
    \includegraphics[width=\columnwidth]{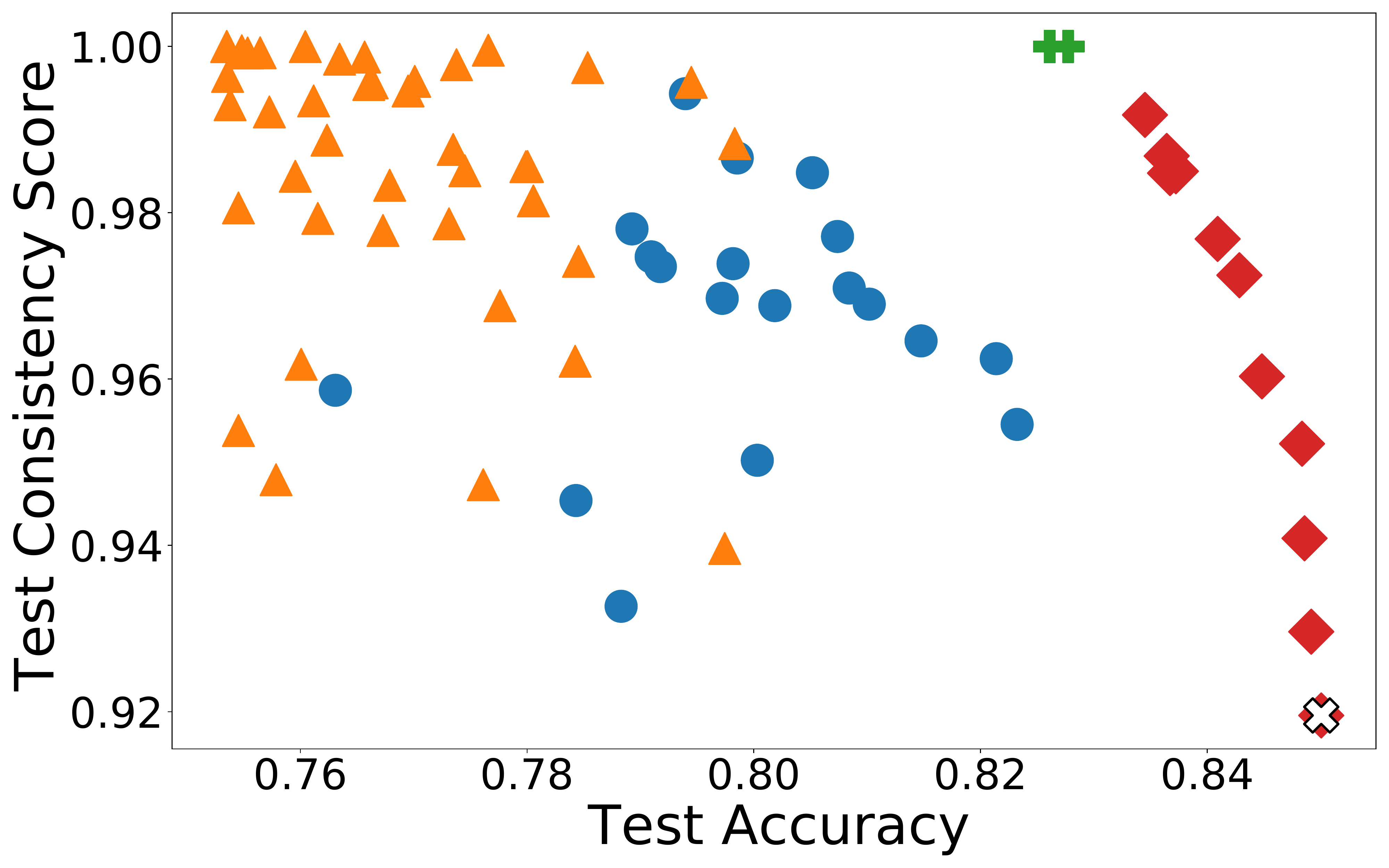}
     \vspace{-0.6cm}
     \caption{{\sf AdultCensus-threshold}}
     \label{fig:AdultCensus-Threshold}
  \end{subfigure}
  \begin{subfigure}{0.33\textwidth}
     \includegraphics[width=\columnwidth]{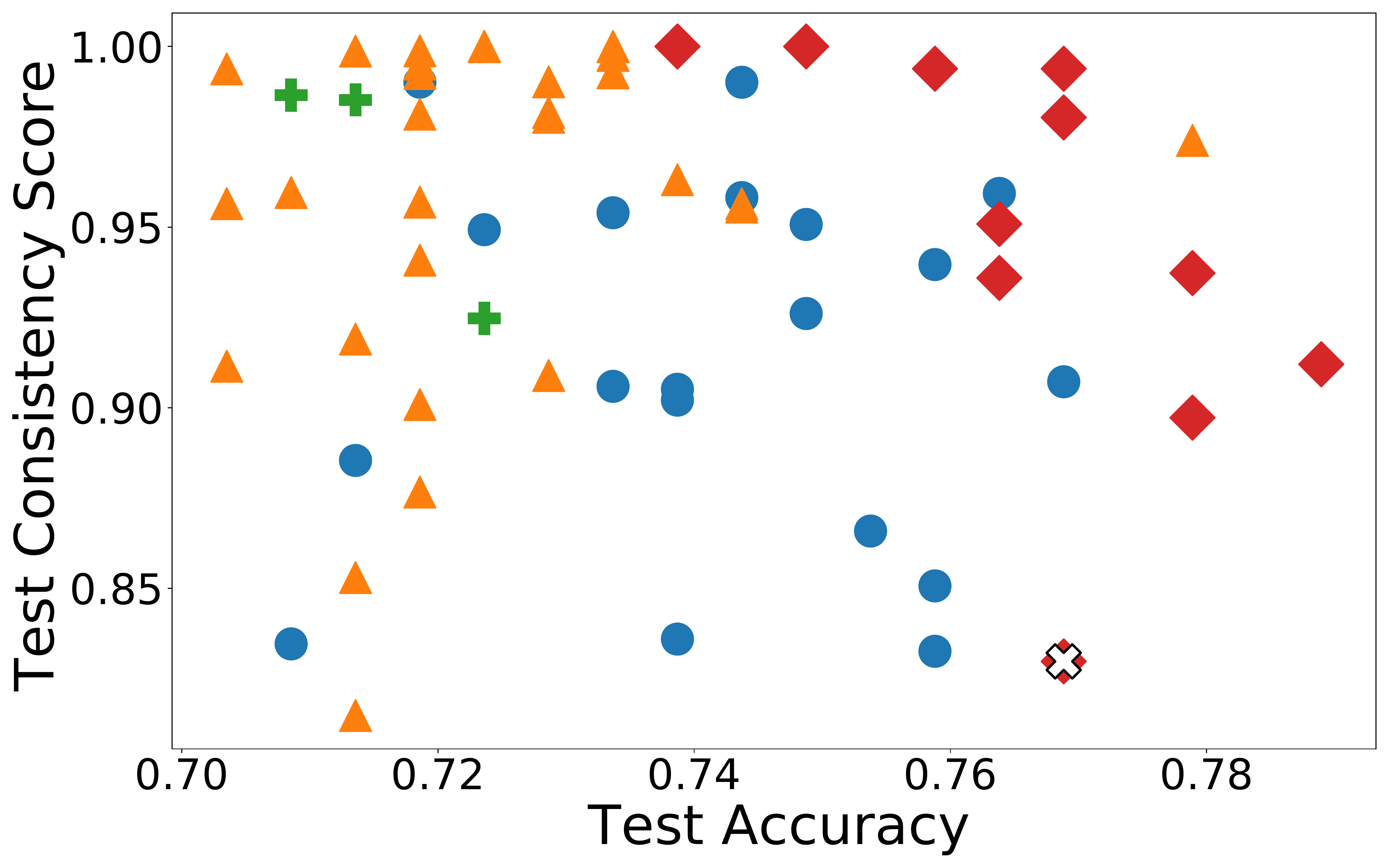}
         \vspace{-0.6cm}
     \caption{{\sf Credit-threshold}}
     \label{fig:Credit-Threshold}
  \end{subfigure}
  \vspace{-0.4cm}
     \caption{Accuracy-fairness trade-offs of logistic regression on the three datasets using the two similarity matrices. In addition to the four methods LFR, iFair, PFR, and \systems{}, we add the result of model training without any pre-processing and call it ``Original.'' As a result, only \systems{} shows a clear accuracy and fairness trade-off.}
 \label{fig:tradeoffcurves}
 \vspace{-0.35cm}
\end{figure*}

\subsection{Performance and Runtime Results}
\label{sec:baselinecomparison}

We now compare \systems{} with other baselines using the three datasets and two similarity matrices.
\subsubsection{Accuracy and Fairness Comparison} 
\label{sec:tradeoff}
We first compare the accuracy and fairness results of \systems{} with the other methods. Figure~\ref{fig:tradeoffcurves} shows the trade-off results with logistic regression where the x-axis is the accuracy, and the y-axis is the consistency score on a test set. {\em Original} is where we train a model on the original data with no data pre-processing. For LFR, iFair, and PFR, we employ 20, 45, and 15 different hyperparameter sets, respectively, as described in their papers.  For \systems{}, we use 10--11 different $m$ values to control accuracy and fairness. For a clear visualization, we omit some points in the bottom left region of the graph for methods other than \systems{}. As a result, \systems{} significantly outperforms the baselines in terms of both test accuracy and consistency score for all cases, which demonstrates that our label flipping approach is effective in improving individual fairness while maintaining high accuracy. We also observe that \systems{} performs better than the three baselines on both similarity matrices. This result is important because the similarity matrix may vary depending on the application. The results for a random forest and neural network are in \ifthenelse{\boolean{techreport}}{Section~\ref{sec:accuracyfairnessothermodels}}{our technical report~\cite{iflippertr}}, and the key trends are similar to Figure~\ref{fig:tradeoffcurves} where \systems{} shows a better trade-off than the baselines. We note that the advantage of \systems{} compared to the baselines is relatively less clear on the Credit dataset mainly because the data is small, making the variance of the results higher.

\begin{table}[t]
\setlength{\tabcolsep}{6.5pt}
  \centering
  \begin{tabular}{ccccccc}
    \toprule
    {\bf Dataset} &\multicolumn{3}{c}{\bf Adult-kNN} & \multicolumn{3}{c}{\bf Adult-threshold} \\
    \midrule
    {\bf Model} & {\bf LR} & {\bf RF}& {\bf NN} & {\bf LR} & {\bf RF}& {\bf NN} \\
    {(C. Score)} & (0.94) & (0.90) & (0.90) & (0.95) & (0.83) & (0.95)\\
    \midrule
    LFR & .815 & .827  & .806 & .821 & .827 & .806\\
    iFair & .797 & .820 & .805 & .798 & .820 & .806\\
    PFR & .826 & .808 & .844 & .826 & .808 & .829\\
    \systems{} & {\bf .844 } & {\bf .851}& {\bf .848}& {\bf .845}& {\bf .850}& {\bf .845}\\
    \bottomrule
  \end{tabular}
  \caption{Accuracy comparison of methods with similar individual fairness on different models. In order to align the individual fairness, for each model we make the methods have similar consistency scores on the validation data (C. Score) by tuning their hyperparameters.}
  \label{tbl:detailcomparison}
  \vspace{-0.5cm}
\end{table}

Table~\ref{tbl:detailcomparison} shows a more detailed comparison with the baselines using the AdultCensus dataset. For each ML model, we fix the consistency score (denoted as ``C. Score'') to be some value as shown below each model name in Table~\ref{tbl:detailcomparison} in parentheses. Then for each method, we report the test accuracy when we tune its hyperparameter such that the trained model has the highest validation accuracy while achieving the fixed consistency score on the validation set. As a result, \systems{} consistently shows the highest test accuracy compared to the other baselines for all models.



In comparison to the other baselines, \systems{}'s accuracy-fairness trade-off is also much cleaner as shown in Figure~\ref{fig:tradeoffcurves} where the other baselines have noisy trade-offs and even overlapping results for different hyperparameter sets. The benefit of having a clean trade-off is that \systems{} can easily balance between accuracy and fairness to obtain a desired fairness by varying the total error limit $m$. In Figure~\ref{fig:tradeoffcurves}, we also observe that \systems{} is flexible and can obtain a wide range of test consistency scores, all the way up to nearly 1.0.

\begin{table}[t]
  \centering
  \begin{tabular}{ccccccc}
    \toprule
    {\bf Datasets} & {\bf Sim. Matrix} & \multicolumn{4}{c}{\bf Avg. Runtime (sec)} \\
    \midrule
    & & {\bf LFR} & {\bf iFair} & {\bf PFR} & {\bf \systems{}} \\
    \midrule
    \multirow{2}{*}{COMPAS} & kNN & 786 & 29,930 & 0.41 & 5.70 \\
    \cmidrule{2-6}
    & threshold & 786 & 29,930 & 0.35 & 8.69\\
    \midrule
    \multirow{2}{*}{\begin{tabular}{@{}c@{}}Adult- \\ Census\end{tabular}} & kNN & 700 & 21,321$\dagger$ & 15.62 & 140\\
    \cmidrule{2-6}
    & threshold & 700 & 21,321$\dagger$ & 14.98 & 685\\
    \midrule
    \multirow{2}{*}{Credit} & kNN & 22.78 & 394 & 0.01 & 1.74\\
    \cmidrule{2-6}
    & threshold & 22.78 & 394 & 0.01 & 0.68\\
    \bottomrule
  \end{tabular}
  \caption{Runtime results of LFR, iFair, PFR, and \systems{} on the COMPAS, AdultCensus, and Credit datasets. For each method, we show the average runtime for all experiments in Figure ~\ref{fig:tradeoffcurves}. The symbol $\dagger$ indicates that we reduce the size of the training data for better efficiency.}
  \label{tbl:runtimecomparison}
  \vspace{-0.8cm}
\end{table}

\begin{figure}[t]
\centering
  \includegraphics[width=0.7\columnwidth]{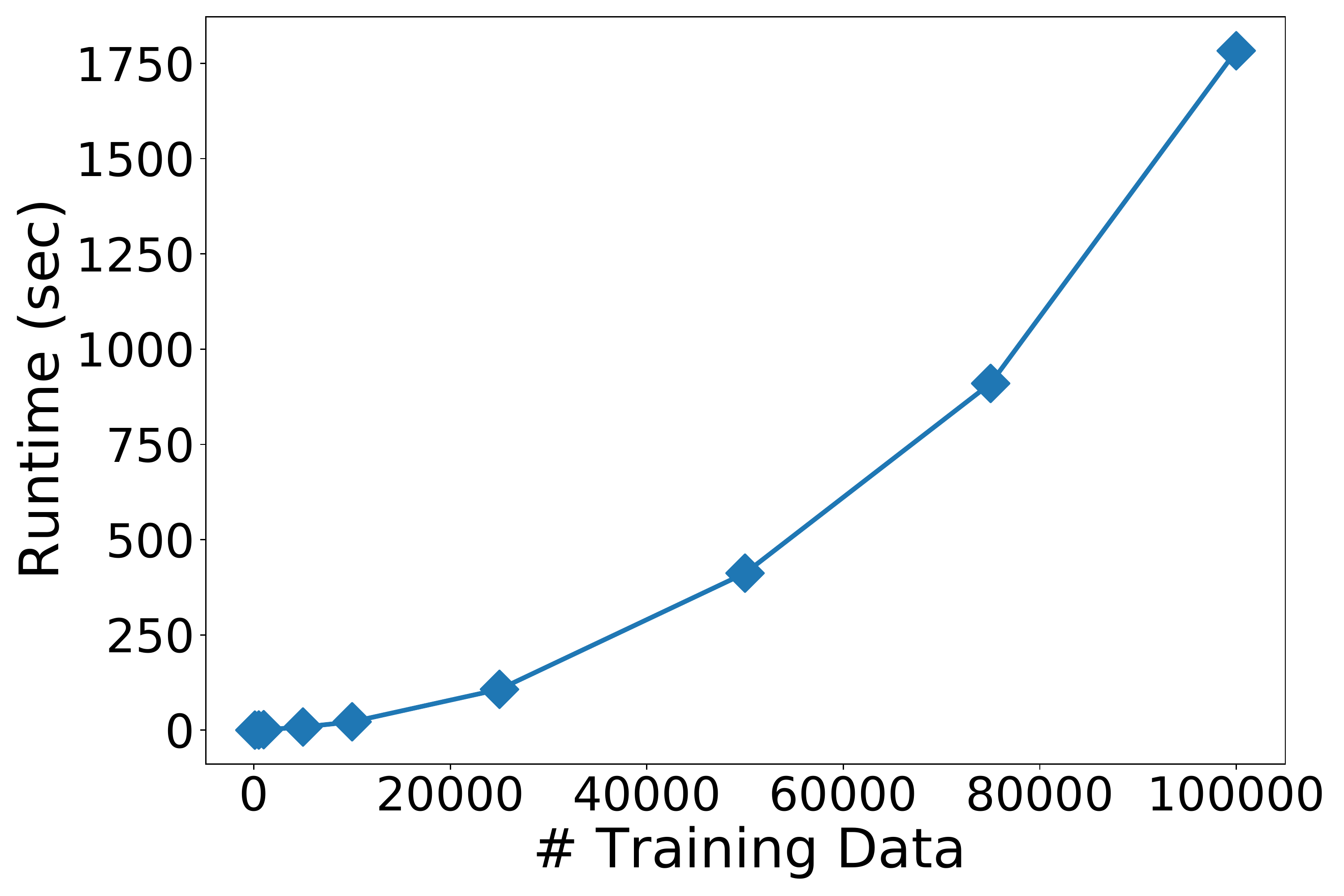}
  \vspace{-0.4cm}
  \caption{\rev{Runtime results of \systems{} on synthetic datasets.}}
  \label{fig:synthetic}
\end{figure}

\begin{figure*}[t]
  \centering
  \begin{subfigure}{0.32\textwidth}
     \includegraphics[width=\columnwidth]{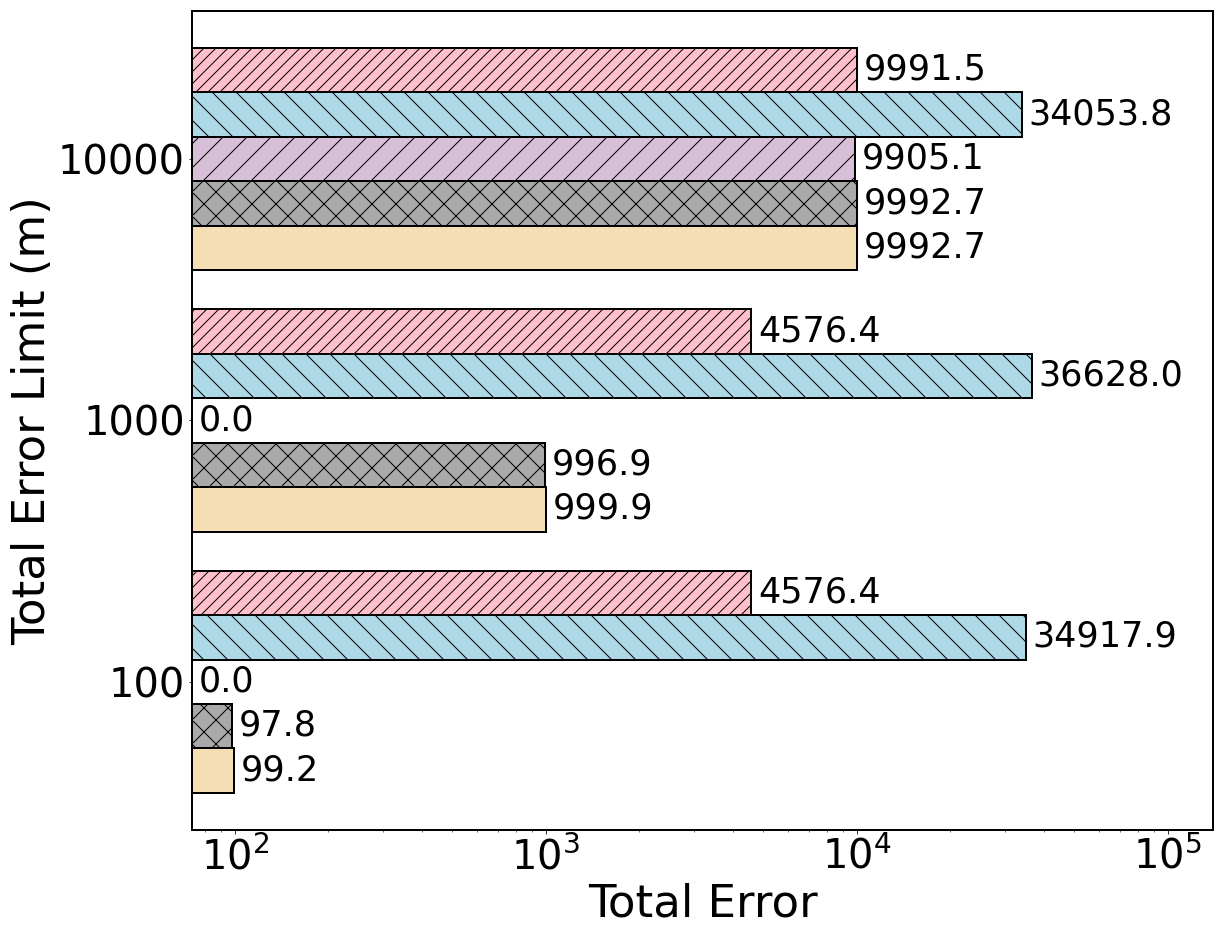}
      \vspace{-0.5cm}
     \caption{{\sf Total error}}
     \label{fig:solutionviolation}
  \end{subfigure}
  \begin{subfigure}{0.32\textwidth}
    \includegraphics[width=\columnwidth]{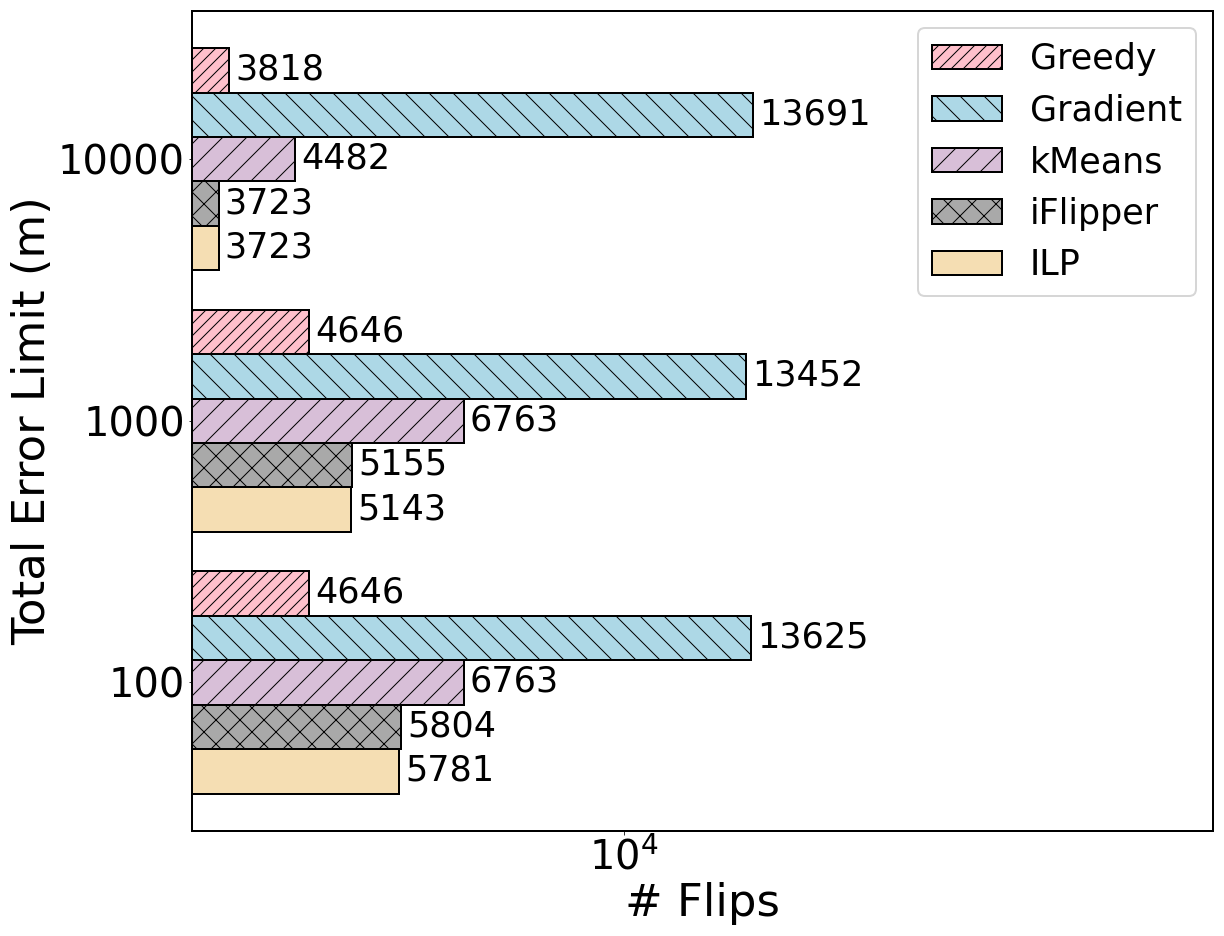}
     \vspace{-0.5cm}
     \caption{{\sf Number of flips}}
     \label{fig:solutionflip}
  \end{subfigure} 
  \begin{subfigure}{0.32\textwidth}
     \includegraphics[width=\columnwidth]{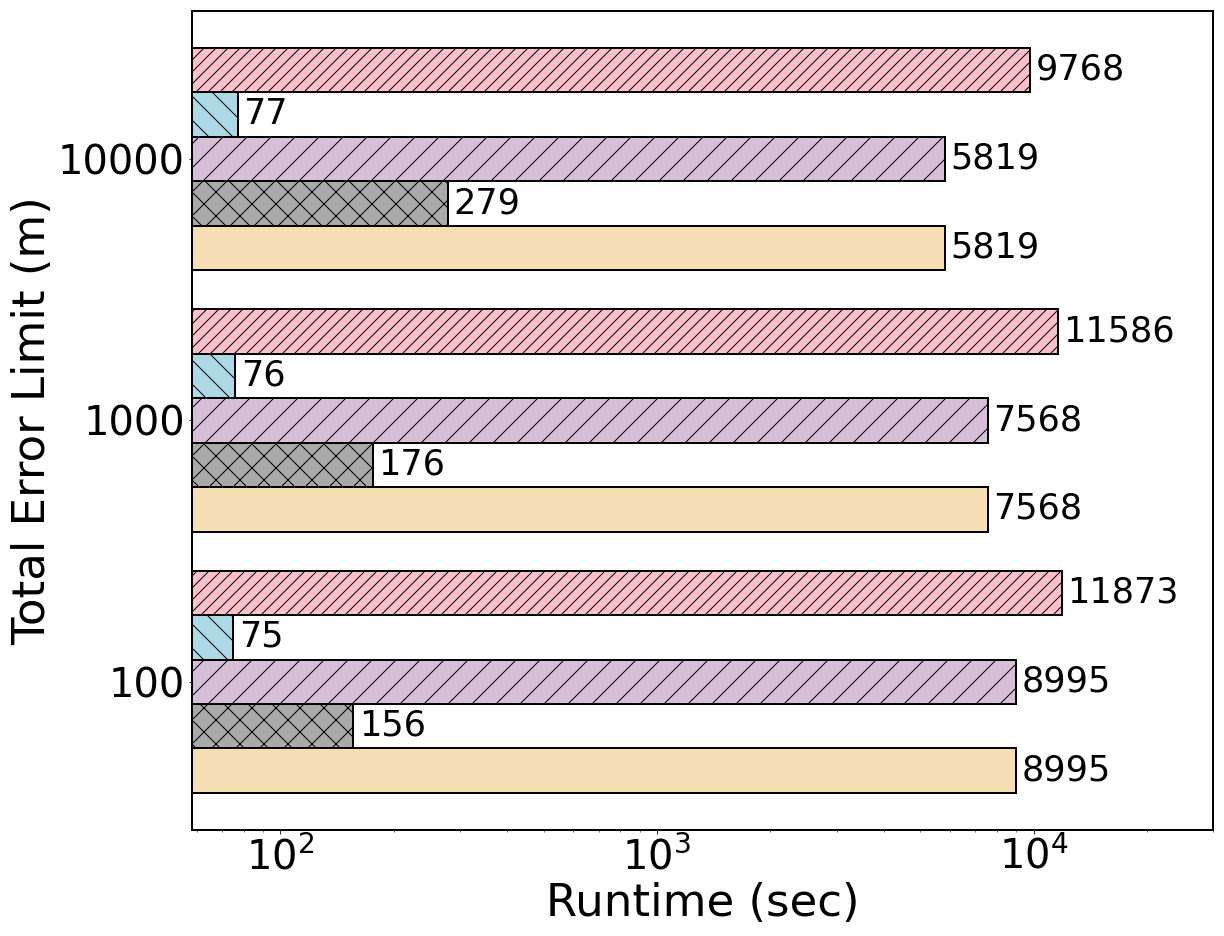}
      \vspace{-0.5cm}
     \caption{{\sf Runtime (sec)}}
     \label{fig:solutionruntime}
  \end{subfigure}
  \vspace{-0.3cm}
     \caption{A detailed comparison of \systems{} against the three na\"ive optimization solutions (\greedy{}, \gradient{}, and \kmeans{}) and ILP solver on the AdultCensus dataset where we use the kNN-based similarity matrix. Here the initial amount of total error is 65,742.5. We show the results for three different total error limits ($m$).}
 \label{fig:solutioncomparison}
 \vspace{-0.3cm}
\end{figure*}

\subsubsection{Runtime Comparison} We evaluate the efficiency of \systems{} and the three baselines in Table~\ref{tbl:runtimecomparison}. For each method, we show the average runtime for all experiments in Figure~\ref{fig:tradeoffcurves}. We note that the runtimes of LFR and iFair do not depend on the similarity matrix. For the iFair experiments on the AdultCensus dataset, we lower the training data using uniform sampling to fit iFair because fitting iFair on the entire data takes more than 24 hours. We confirm that this relaxation does not hurt the original performance reported in the iFair paper~\cite{ifair}. As a result, \systems{} is much faster than LFR and iFair for all cases. Although PFR seems to be the fastest, it performs much worse than \systems{} in terms of accuracy and fairness as shown in Figure~\ref{fig:tradeoffcurves}.

\rev{Another observation is that as the dataset size increases, \systems{}’s runtime increases following the time complexity analysis in Section~\ref{sec:systems}. For a detailed evaluation, we conduct the experiment using random subsets of the synthetic dataset. For each experiment, we set the total error limit $m$ to 20\% of the initial amount of total error. Figure~\ref{fig:synthetic} shows runtime results of \systems{} on datasets of different sizes. As a result, we observe a quadratic increase in running time as the size of the training data increases, which is consistent with our theoretical analysis.}

\subsection{Optimization Solution Comparison}
\label{sec:optimizationcomparison}
We now compare \systems{} with other optimization solutions: \greedy{}, \gradient{}, \kmeans{}, and an ILP solver. Figure~\ref{fig:solutioncomparison} makes a comparison in terms of optimality and efficiency on the AdultCensus dataset where we use the kNN-based similarity matrix. We set the total error limit $m$ to three different values for a detailed comparison. Note that all three subfigures within Figure~\ref{fig:solutioncomparison} use the same legends.

We first compare the resulting total error of the methods in Figure~\ref{fig:solutionviolation}. Obviously, the optimal solution from the ILP solver is the closest to the total error limit. We observe that \systems{} never exceeds the target. On the other hand, both \greedy{} and \gradient{} are unable to find a feasible solution in some cases. For example, \greedy{} and \gradient{} cannot reduce the total error to less than 4,576.4 and 34,053.8, respectively. This result is expected because these methods may fall into local minima as we explained in Section~\ref{sec:naivesolutions}. Also, \kmeans{} does find feasible solutions, but flips too many labels unnecessarily. For example, for $m$ = 100 and $m$ = 1,000, \kmeans{} has zero violations using more flips than \systems{}, which defeats the purpose of the optimization altogether. 



We also compare the number of flips in Figure~\ref{fig:solutionflip}. As a result, \systems{} produces solutions that are close to the optimal ones. When $m$ = 100 and 1,000, \greedy{} shows the fewest number of label flips because it fails to find a feasible solution. \gradient{} has the largest number of flips. We suspect this poor performance is due to the fact that \gradient{} (1) solves an unconstrained optimization problem that makes it hard to satisfy the limit on total error, and (2) provides continuous values, which introduces rounding errors. \kmeans{} also performs worse than \systems{} because its clustering cannot be fine-tuned to have close to $m$ total error.


Finally, we compare the average runtimes (i.e., wall clock times in seconds) of each method in Figure~\ref{fig:solutionruntime}. As a result, \systems{} is much faster than the ILP solver as it solves an approximate LP problem only once. In addition, \greedy{} and \kmeans{} are slower than \systems{} because they use many iterations to reduce the total error and adjust the number of clusters, respectively. \gradient{} is the most efficient, but the result is not meaningful because the total error and the numbers of flips are even worse than \greedy{}.

We also perform the above experiments on the COMPAS dataset, and the results are similar (see \ifthenelse{\boolean{techreport}}{Section~\ref{sec:optimizationcopmas}}{our technical report~\cite{iflippertr}}).

\subsection{Ablation Study}\label{sec:ablationstudy} 
We perform an ablation study in Figure~\ref{fig:ablationperformance} to demonstrate the effectiveness of each component in \systems{} using the AdultCensus dataset and kNN-based similarity matrix. The results for the COMPAS dataset are similar and shown in \ifthenelse{\boolean{techreport}}{Section~\ref{sec:ablationcompas}}{our technical report~\cite{iflippertr}}. As we explained in Section~\ref{sec:experimentalsetting}, MOSEK always produces an optimal LP solution that only has values in \{0, $\alpha$, 1\}, so we do not have an ablation study without the LP solution conversion. Instead, we separately evaluate the conversion algorithm in Section~\ref{sec:conversioneval}. Again, the fact that MOSEK effectively contains the conversion functionality indicates that the conversion overhead is not significant and that a tight integration with the LP solving is possible. We thus compare \systems{} (LP solver with both adaptive rounding and reverse greedy algorithms) with the following variants: (1) LP solver with simple rounding (LP-SR); and (2) LP solver with adaptive rounding (LP-AR). We also consider an optimal solution from the ILP solver to compare the optimality of each solution.

Figure~\ref{fig:ablationsolutionviolation} shows that the adaptive rounding algorithm always provides a feasible solution while simple rounding to a fractional optimal solution results in an infeasible solution as we discussed in Lemma~\ref{alg:rounding}. However, the rounding algorithm does flip more labels than the optimal solution as shown in Figure~\ref{fig:ablationsolutionflip}, resulting in an unintended fairness improvement and accuracy drop. In this case, the reverse greedy algorithm in \systems{} is used to reduce the optimality gap with the optimal solution by recovering as many original labels as possible without exceeding the total error limit.

\begin{figure}[t]
  \centering
  \begin{subfigure}{0.54\columnwidth}
     \includegraphics[width=\columnwidth]{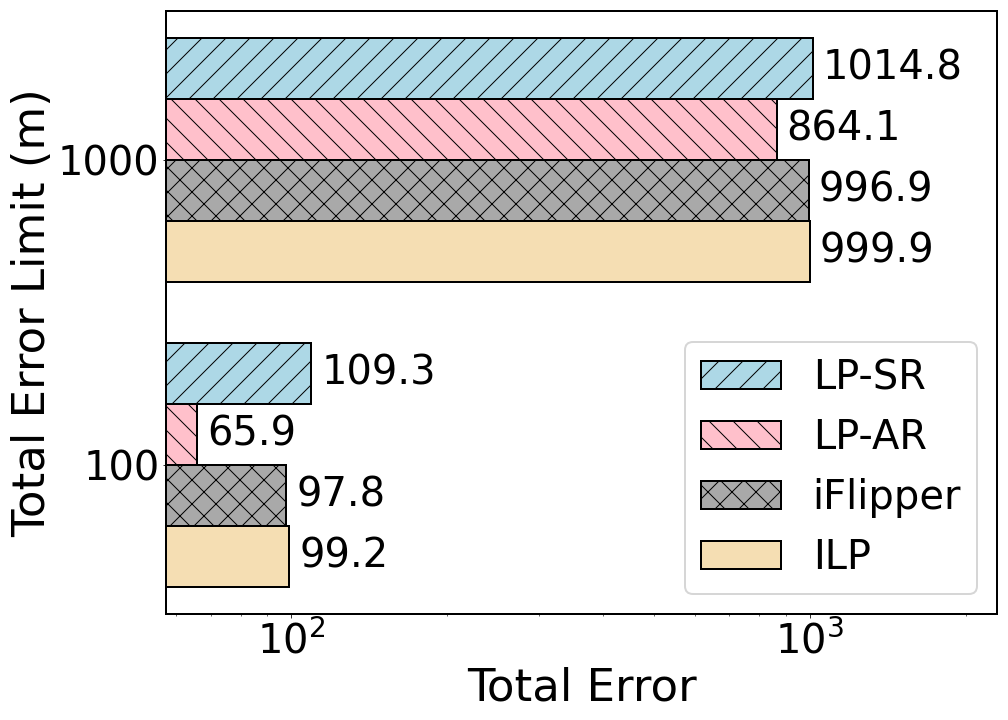}
     \vspace{-0.5cm}
     \caption{{\sf Total error}}
     \label{fig:ablationsolutionviolation}
  \end{subfigure}
  \begin{subfigure}{0.44\columnwidth}
     \includegraphics[width=\columnwidth]{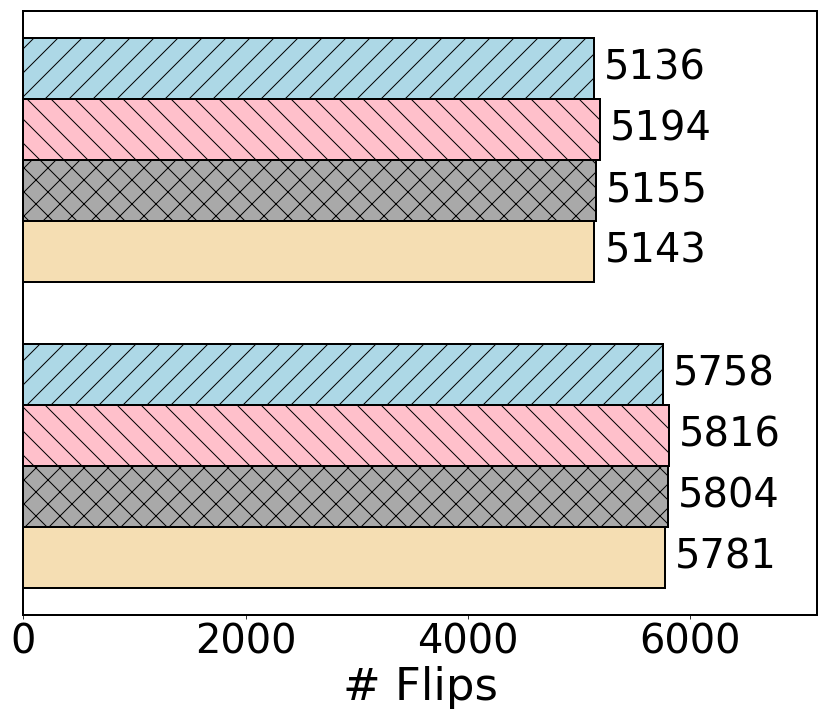}
     \vspace{-0.5cm}
     \caption{{\sf Number of flips}}
     \label{fig:ablationsolutionflip}
  \end{subfigure} 
  \vspace{-0.3cm}
     \caption{Ablation study for \systems{} on the AdultCensus dataset and kNN-based similarity matrix.}
 \label{fig:ablationperformance}
\end{figure}

Table~\ref{tbl:ablationruntime} shows the average runtime of each component (LP solver, adaptive rounding, and reverse greedy) in \systems{} in Figure~\ref{fig:ablationperformance}. As a result, the runtime for solving the LP problem is dominant, which demonstrates that \systems{} is able to provide a near-exact solution with minimal time overhead.

\vspace{-0.1cm}
\begin{table}[ht]
  \centering
  \begin{tabular}{lc}
    \toprule
    \multicolumn{1}{c}{\bf Method} & {\bf Avg. Runtime (sec)} \\ 
    \midrule
    LP Solver (effectively includes Alg.~\ref{alg:converting}) & 153.61\\
    + Adaptive Rounding (Alg.~\ref{alg:rounding}) & 0.48\\
    + Reverse Greedy (Alg.~\ref{alg:reversegreedy}) & 13.26\\
    \bottomrule
  \end{tabular}
  \caption{Avg. runtimes of \systems{}'s components in Figure~\ref{fig:ablationperformance}.}
  \label{tbl:ablationruntime}
  \vspace{-0.5cm}
\end{table}

\subsection{Algorithm~\ref{alg:converting} Evaluation}
\label{sec:conversioneval}
We now evaluate the LP solution conversion (Algorithm~\ref{alg:converting}) separately using the three datasets and kNN-based similarity matrices. For all datasets, we start with an LP problem solution by assigning random values and perform the conversion. As a result, we successfully convert random solutions to new solutions whose values are in \{0, $\alpha$, 1\} where the exact value of $\alpha$ for each dataset is shown in Table~\ref{tbl:converting}. In addition, each converted solution always has the same number of label flips and a smaller total error. This result is consistent with Lemma~\ref{lem:converting}, which says the conversion maintains the optimality of the original solution. Finally, the conversion is much faster than the overall runtime of \systems{} (see Table~\ref{tbl:runtimecomparison}) even in this worst-case scenario where the labels are randomly initialized. 

\vspace{-0.1cm}
\begin{table}[ht]
  \setlength{\tabcolsep}{1.9pt}
  \centering
  \begin{tabular}{ccccccc}
    \toprule
    & & \multicolumn{2}{c}{\bf Before Conv.} & \multicolumn{2}{c}{\bf After Conv.} &  \\
    \cmidrule(lr){1-2}\cmidrule(lr){3-4} \cmidrule(lr){5-6}\cmidrule(lr){7-7}
    {\bf Dataset} & $\boldsymbol{\alpha}$ & {\bf Tot. Err.} & {\bf \# Flips} & {\bf Tot. Err.} & {\bf \# Flips} & {\bf Time(s)} \\
    \midrule
    COMPAS & 0.471 & 12645.1 & 1,838 & 1038.2 & 1,838 & 2.47\\
    AdultCensus & 0.494 & 93519.7 & 13,514 & 3164.4 & 13,514 & 93.63\\
    Credit & 0.448 & 2447.0 & 361 & 128.7 & 361 & 0.25 \\
    \bottomrule
  \end{tabular}
  \caption{\systems{}'s conversion algorithm (Algorithm ~\ref{alg:converting}) on LP problem solutions that have random labels.}
  \label{tbl:converting}
  \vspace{-0.5cm}
\end{table}

\subsection{Compatibility with In-processing Method}
In this section, we demonstrate how \systems{} can be combined with an in-processing algorithm to further improve individual fairness. We evaluate \systems{} with SenSR~\cite{DBLP:conf/iclr/YurochkinBS20}, which is an in-processing fair training algorithm that is robust to sensitive perturbations to the inputs, on the AdultCensus dataset. For \systems{}, we use the same distance function in SenSR, which computes the distance using the features that are not correlated to sensitive attributes and use the threshold $T$ = 2 to construct the similarity matrix. For a fair comparison, we report both our consistency score and the GR-Con.\@ (gender and race consistency) / S-Con.\@ (spouse consistency) metrics used by SenSR to evaluate the individual fairness. Here Con.\@ measures the consistency between $h(x_i)$ and $h(x_i)$ when $x_i$ and $x_j$ are the same except the sensitive attributes. To evaluate Con.\@, we use the same method in SenSR where the test data examples are duplicated, but assigned different sensitive attribute values. 
As a result, Table~\ref{tbl:inprocessing} shows that using both methods gives the best individual fairness for both metrics while having little accuracy drop. Thus, \systems{} complements in-processing algorithms by removing the bias inherent in the data during the pre-processing step.

\vspace{-0.1cm}
\begin{table}[ht]
  \centering
  \begin{tabular}{cccccc}
    \toprule
    {\bf Dataset} & {\bf Method} & {\bf Test Acc.} & {\bf Con. Score} & {\bf GR/S-Con.} \\ 
    \midrule
    \multirow{4}{*}{\begin{tabular}{@{}c@{}}Adult- \\ Census\end{tabular}} & Original & 0.855 & 0.917 & 0.919 / 0.867\\
    & \systems{} & 0.853 & 0.955 & 0.931 / 0.907\\
    & SenSR & 0.836 & 0.953 & 0.990 / 0.945\\
    & Both & 0.829 & {\bf 0.960} & {\bf 0.992} / {\bf 0.984}\\
    \bottomrule
  \end{tabular}
  \caption{Using \systems{}, SenSR, or both on the AdultCensus dataset.}
  \label{tbl:inprocessing}
   \vspace{-0.3cm}
\end{table}


\section{Related Work} \label{sec:related_work}

\ifthenelse{\boolean{revision}}
{\rev{Various fairness measures have been proposed to capture legal and social issues~\cite{narayanan2018translation, verma2018definition}}}
{Various fairness measures have been proposed to capture legal and social issues~\cite{narayanan2018translation}}. The prominent measures include individual fairness~\cite{dwork2012fairness}, group fairness~\cite{zafar2017fairness,agarwal2018reductions,zhang2021omnifair}, and causal fairness~\cite{kusner2017counterfactual}. Individual fairness captures the notion that similar individuals should be treated similarly. Group fairness measures like equal opportunity~\citep{DBLP:conf/nips/HardtPNS16}, equalized odds~\citep{DBLP:conf/nips/HardtPNS16}, and demographic parity~\citep{DBLP:conf/kdd/FeldmanFMSV15} ensure that two sensitive groups have similar statistics. Causal fairness identifies causal relationships among attributes. \rev{Although optimizing for a different objective function, the causality-based methods are often used to improve group or individual fairness as well.} All these measures complement each other, and there is a known conflict between group fairness and individual fairness~\cite{binns2020apparent,friedler2021possibility} that they cannot be satisfied at the same time because of their different assumptions. Our primary focus is on individual fairness.

Recently, many unfairness mitigation techniques for individual fairness~\cite{DBLP:journals/corr/abs-1810-01943} have been proposed where they can be categorized into pre-processing~\cite{pmlr-v28-zemel13,ifair,pfr2019}, in-processing~\cite{DBLP:conf/iclr/YurochkinBS20,yurochkin2021sensei,vargo2021individually}, and post-processing~\cite{petersen2021post} techniques depending on whether the fix occurs before, during, or after model training, respectively. Among the categories, we focus on pre-processing because fixing the data can solve the root cause of unfairness. 

The recent pre-processing works LFR~\cite{pmlr-v28-zemel13}, iFair~\cite{ifair}, and PFR~\cite{pfr2019} all propose fair representation learning algorithms that optimize accuracy and individual fairness. Using an adjacency matrix that represents a fairness graph, the goal is to optimize for a combined objective function with reconstruction loss and individual fairness loss based on the fairness graph. In comparison, \systems{} takes the alternative approach of flipping labels to fix bias, which results in better accuracy-fairness trade-offs and efficiency.

It is also important to understand the in-processing and post-processing techniques. SenSR~\cite{DBLP:conf/iclr/YurochkinBS20} enforces the model to be invariant under certain sensitive perturbations to the inputs using adversarial learning. Several extensions have been proposed as well. SenSeI~\cite{yurochkin2021sensei} enforces invariance on certain sensitive sets and uses a regularization-based approach for jointly optimizing accuracy and fairness, and BuDRO~\cite{vargo2021individually} extends the fairness loss used in SenSR to use gradient boosting. For post-processing, GLIF~\cite{petersen2021post} formulates a graph smoothening problem and uses Laplacian regularization to enforce individual fairness. In comparison, \systems{} can complement all these techniques as we demonstrated with SenSR. 

Pre-processing techniques for other fairness measures have been proposed as well. For group fairness, Kamiran et al.~\cite{kamiran2012data} and Calmon et al.~\cite{calmon2017optimized} change features of the training data to remove dependencies between the sensitive attribute and label and thus achieve statistical parity. It is worth noting that Kamiran et al.~\cite{kamiran2012data} also proposed label flipping (called massaging) to reduce bias in the data for group fairness, but not for individual fairness, which is our focus. 
\rev{There is also a possibility of extending \systems{} to support group fairness. However, we do not believe \systems{} can directly support group fairness with the current optimization objective, so an interesting direction is to develop new label flipping techniques that can support group fairness notions beyond statistical parity by formulating the right optimization problems. }

For causal fairness, Capuchin~\cite{salimi2019interventional} makes a connection between multivalued dependencies (MVDs) and causal fairness, and proposes a data repair algorithm for MVDs using tuple inserts and deletes that also ensures causal fairness. 

\ifthenelse{\boolean{revision}}{
\rev{Finally, there is a line of recent work on data programming~\cite{ratner2017snorkel, ratner2017snorkel2}, which generates probabilistic labels for machine learning datasets by combining weak labels. In comparison, \systems{} focuses on generating (flipping) labels with the different purpose of satisfying individual fairness constraints. \systems{} can be potentially used in combination with data programming systems to provide an individual fairness guarantee on the weakly-supervised labels. For example, the labels generated by data programming paradigm may have uncertainty due to the combination of multiple weak labels. A data programming model can output a label that is say 80\% positive and 20\% negative. Here we can view the label as having a continuous value of 0.8. \systems{} already supports continuous similarity matrices when solving its LP problems (c.f. Section~\ref{sec:milp}), and thus can support continuous labels in a similar fashion.}
}{}

\section{Conclusion and Future Work} \label{sec:conclusion}
We proposed \systems{}, which flips labels in the training data to improve the individual fairness of trained binary classification models.
\systems{} uses pairwise similarity information and minimizes the number of flipped labels to limit the total error to be within a threshold. We proved that this MIQP optimization problem is NP-hard. We then converted the problem to an approximate LP problem, which can be solved efficiently. Our key finding is that the proposed LP algorithm has theoretical guarantees on how close its result is to the optimal solution in terms of the number of label flips. In addition, we further optimized the algorithm without exceeding the total error limit. We demonstrated on real datasets that \systems{} outperforms pre-processing unfairness mitigation baselines in terms of fairness and accuracy, and can complement existing in-processing techniques for better results.

\rev{

In the future, we would like to support multi-class classification and regression tasks. For multi-class classification, we can group the labels to favorable and unfavorable labels and make the problem a binary classification one. Alternatively, we can solve a one-vs-all problem for each class (binary classification) and use a softmax to calculate the final label. For regression, the action becomes changing the label instead of flipping it. Since the current optimization problem can be extended to support continuous labels, we plan to change the optimization setup to support regression problems. 

}

\section*{Acknowledgments}
Ki Hyun Tae, Jaeyoung Park, and Steven Euijong Whang were supported by a Google Research Award and by the National Research Foundation of Korea(NRF) grant funded by the Korea government(MSIT) (No. NRF-2018R1A5A1059921 and NRF-2022R1A2C2004382).

\balance
\bibliographystyle{abbrv}
\bibliography{ref}

\ifthenelse{\boolean{techreport}}{\clearpage 
\newpage
\appendix

\section{Proof for Theorem~\ref{lem:nphard}}
\label{sec:proofnphard}
We continue from Section~\ref{sec:opt} and provide a full proof for Theorem~\ref{lem:nphard}.

\vspace{5pt}

\noindent
{\bf Theorem 1. }{\em The MIQP problem in Equation~\ref{equ:miqp} is NP-hard.}

\begin{proof}
We prove that the MIQP problem in Equation~\ref{equ:miqp} is NP-hard by showing its sub-problem where $W_{ij}$ is binary ($W_{ij} \in \{0,1\}^{n \times n}$) is NP-hard. We prove the sub-problem is NP-hard by reducing it from the well-known \textit{at most $k$-cardinality $s$-$t$ cut problem}.
Given an undirected graph $G =(V,E)$, the \textit{at most $k$-cardinality minimum $s$-$t$ cut problem} is to find the minimum sum of edge weights in the cut edge set $C$ to divide the vertex set $V$ into two sets $V_1$ and $V_2$, where $s \in V_1$ and $t \in V_2$, and the cut edge set $C=\{(v_1,v_2) \in E: v_1 \in V_1, v_2\in V_2 \}$ has at most $k$ edges. This problem is known to be NP-hard even if all the edge weights are 1~\cite{DBLP:journals/dam/BruglieriME04,kim2021solving,kratsch2020multi}. Furthermore, the \textit{existence of at most $k$-cardinality $s$-$t$ cut problem} is also known to be NP-hard~\cite{DBLP:journals/dam/BruglieriME04}.

We will show the reduction as a three-step approach. First, given an instance of the \textit{at most $k$-cardinality $s$-$t$ cut problem} on $G =(V,E)$ with all edge weights equal to 1, we show how to create $k+1$ label flipping problems in~Equation~\ref{equ:miqp} accordingly. Second, we show how a solution to any of the $k+1$ MIQP problems represents an $s$-$t$ cut to the original graph cut problem. Third, we establish that if there exists an \textit{at most $k$-cardinality $s$-$t$ cut}, it must be one of the $k+1$ $s$-$t$ cuts obtained in the previous step.

\underline{Step 1}: We show how to create MIQP problems for a given \textit{at most $k$-cardinality $s$-$t$ cut problem} on $G =(V,E)$. We create a variable $y_i$ for each $v_i \in V-\{s,t\}$. We make $W_{ij}=1$ for each edge $(i,j) \in E$ in the graph, and $W_{ij}=0$ for other pairs of $(i,j)$. If a node $v_i$ is only connected to $s$ ($t$), we make $y_i'=1 (0)$ and add $(y_i - y_i')^2$ to the objective function. That is, we set the initial label of nodes directly connected to $s$ ($t$) as 1 (0). If a node $v_i$ is directly connected to both $s$ and $t$, we add two terms to the objective function for this node, one with $y_i'=0$ and the other one with $y_i'=1$. If a node $v_i$ is not directly connected to $s$ or $t$, we do not add any terms to the objective. Now that we have defined all $y_i$, their initial assignments $y_i'$, and the weight $W_{ij}$, we vary $m$ from 0 to $k$ to create $k+1$ instances of MIQP problems with different allowed amounts of total error as the constraints.

The intuition for the above process of creating $k+1$ MIQP problem is that an $s$-$t$ cut is a binary partition of a graph and can be viewed as a binary-valued labeling process. The nodes that are connected to the source $s$ have label 1 and the nodes that are connected to sink $t$ have label 0. A cut consists of two types of edges: $e_f$ and $e_v$. A flip edge $e_f:(v_1,v_2):v_1 \in \{s,t\}, v_2 \in V - \{s,t\}$ is an edge between $s$ ($t$) and other nodes. If an $e_f$ edge exists in a cut, that means a node which was connected to $s$ ($t$) is no longer in the same partition as $s$ ($t$), which can be represented as a flip of label. See the $(s,1)$ edge in Figure~\ref{fig:mincutrep} as an example for flip edges. A violation edge $e_v:(v_1,v_2):v_1 \in V-\{s,t\}, v_2 \in V - \{s,t\}$ is an edge between two nodes that are not $s$ or $t$. If an $e_v$ edge exists in a cut, that means two nodes that are supposed to be similar and have the same label ended up in two different partitions, which can be represented as a violation. See the $(1,2)$ edge in Figure~\ref{fig:mincutrep} as an example for violation edges. The cardinality of the cut equals to the sum of the counts of the two types of edges, which equals to the number of flips and the total error in the MIQP solution. Our mapping from graph cutting to label flipping is inspired by a similar process used in image segmentation~\cite{kolmogorov2004energy} that aims at using graph cutting to split an image into foreground and background.

\underline{Step 2}: We show how a solution to any of the $k+1$ MIQP problems can be mapped to an $s$-$t$ cut. After solving a corresponding MIQP, we put all nodes with $y_i=1$ in $V_1$ and $y_i=0$ in $V_2$ as the resulting $s$-$t$ cut. The cardinality of the cut equals to the sum of the number of label flips and the total error in the label flipping problem. 

\vspace{-0.15cm}
\begin{figure}[ht]
\centering
  \includegraphics[width=0.8\columnwidth]{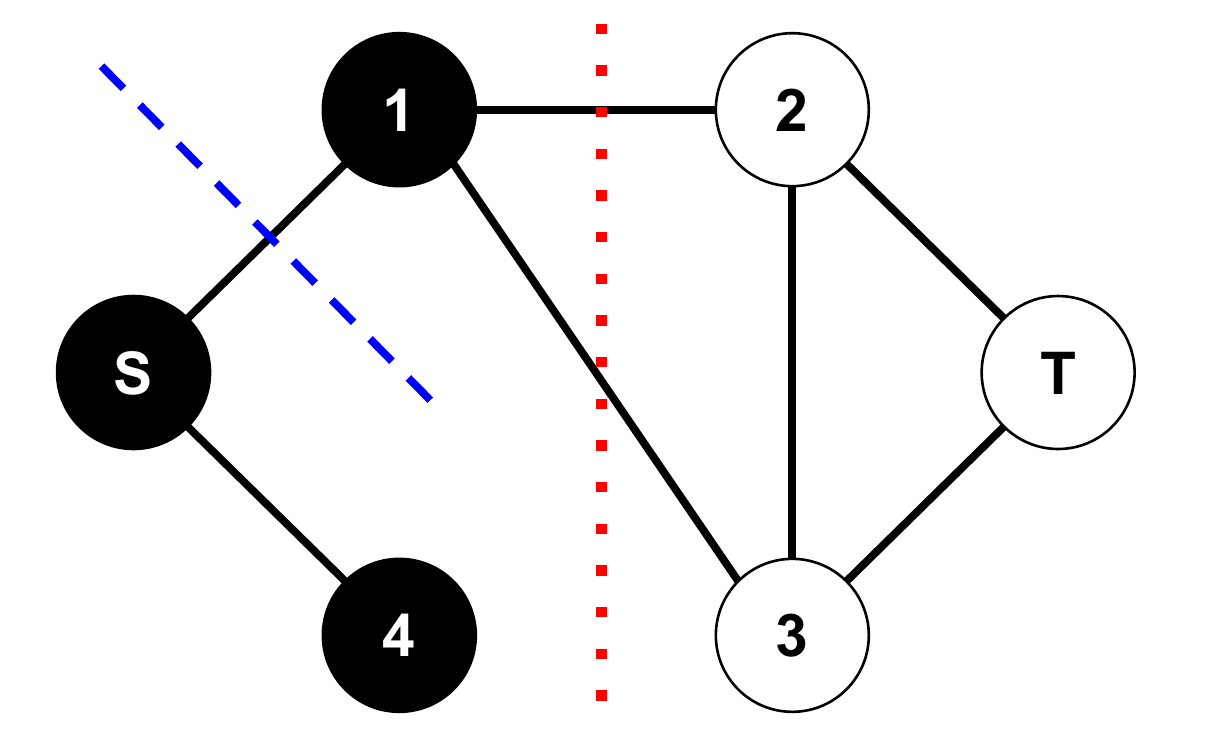}
  \vspace{-0.1cm}
  \caption{An example for the $s$-$t$ cut. The cut represented by the blue dashed line consists of one $e_f$ edge, and the cut represented by the red dotted line consists of two $e_v$ edges.}
  \label{fig:mincutrep}
\end{figure}

In Figure~\ref{fig:mincutrep}, after removing $s$ and $t$, we get the same graph representation as Figure~\ref{fig:graphrep}. By solving the label flipping problem with $m$ equal to 0 or 1, the resulting label assignment is $y_4=1$ and $y_1=y_2=y_3=0$. The corresponding cut is shown as the blue dashed line in Figure~\ref{fig:mincutrep}. The cut is $V_1=\{s,4\}, V_2=\{1,2,3,t\},\text{ and } C=\{(s,1)\}$. This label assignment has one flip (node 1 flips to label 0) and zero violations. This flip is represented as an $e_f$ edge (s,1) in the cut. If we solve the problem with $m=2$, the result is $y_1=y_4=1$ and $y_2=y_3=0$. The corresponding cut is shown as the red dotted line in Figure~\ref{fig:mincutrep}. The cut is $V_1=\{s,1,4\}, V_2=\{2,3,t\},\text{ and } C=\{(1,2),(1,3)\}$. This label assignment has zero flips and a total error of 2, represented by two $e_v$ edges in the cut: (1,2) and (1,3). 

\underline{Step 3}: Since the cardinality of the cut equals to the sum of the number of label flips and the total error in the label flipping problem, finding an at most $k$-cardinality $s$-$t$ cut can be solved by finding a label assignment where the sum of flips and the total error equals to at most $k$. To find such a label assignment, we can repeatedly solve the MIQP problem $k+1$ times, for all $m$ values in \{0, $\ldots$, $k$\}, and check if the sum of label flips and the total error is less than or equal to $k$. The $k$-cardinality minimum $s$-$t$ cut, if exists, must equal to the result of the MIQP problem with at least one of the possible $m$ values. Therefore, if the label flipping MIQP problem is not NP-hard, then the  $k$-cardinality minimum $s$-$t$ cut problem would also not be NP-hard, which contradicts the known results.

\end{proof}

\section{Proof for Lemma~\ref{lem:onecluster}}
\label{sec:proofonecluster}
We continue from Section~\ref{sec:algorithm} and provide a full proof for Lemma~\ref{lem:onecluster}. Here we restate Lemma~\ref{lem:onecluster} with the problem setup:
\vspace{5pt}

\noindent
{\bf Lemma 2.1. }{\em Suppose an $\alpha$-cluster has $A_0$ points whose initial labels are 0 and $A_1$ points whose initial values are 1. Let $N_\alpha=A_0-A_1$. Now suppose there are $U$ nodes connected to the $\alpha$-cluster (shown in Figure~\ref{fig:onecluster}) by an edge $W_{a_i \alpha}$ satisfying 
\begin{equation*}
\label{equ:oneclusercondition}
\begin{split}
    &0 \leq a_{1} \leq ... \leq a_{k} < \alpha < a_{k+1} \leq ... \leq a_{U} \leq 1.
\end{split}
\end{equation*}

Note that there is no connected node with a value of $\alpha$ by construction. Let $S_\alpha = \sum_{i=1}^{k}W_{a_i \alpha} - \sum_{i=k+1}^{U}W_{a_i \alpha}$. Let us also add the following nodes for convenience: $a_{0}=0$ and $a_{U+1}=1$. For an $\alpha$-cluster with $N_\alpha=0$ in the optimal solution, we can always convert $\alpha$ to either $a_k$ or $a_{k+1}$ while maintaining an optimal solution. As a result, we can reduce exactly one non-0/1 value in the optimal solution.
}

\begin{figure}[ht]
\centering
    \vspace{-0.4cm}
    \includegraphics[width=0.45\columnwidth]{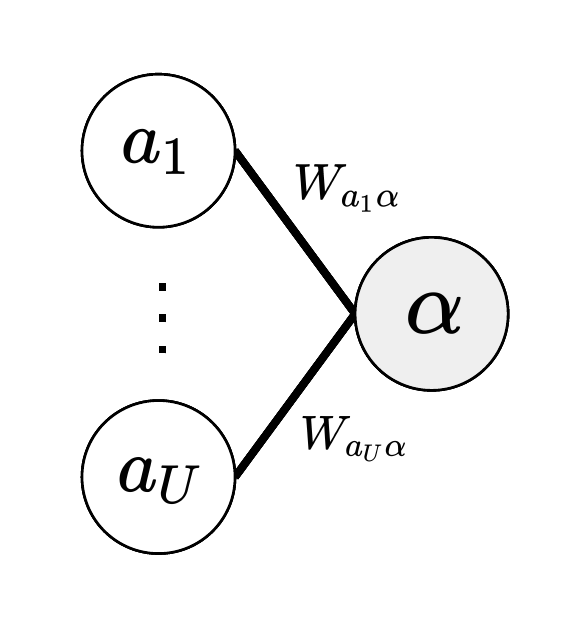}
    \vspace{-0.5cm}
    \caption{Problem setup for Lemma~\ref{lem:onecluster}.}
    \label{fig:onecluster}
\end{figure}

\vspace{-0.4cm}

\begin{proof}
We compute the initial number of flips and total error in Figure~\ref{fig:onecluster} as follows:
\begin{equation}
\label{equ:oneclusterinitial}
\begin{split}
    &\# \text{ Flips} = \alpha A_0 + (1-\alpha) A_1 = A_1 + \alpha N_\alpha = A_1 (\because N_\alpha=0) \\
    &\text{Total Error} = \sum_{i=1}^{U}W_{a_i\alpha}|a_i-\alpha| = S_\alpha \alpha + C \\
    & \qquad \qquad \quad (C=-\sum_{i=1}^{k} W_{a_i\alpha}a_i+\sum_{i=k+1}^{U}W_{a_i\alpha}a_i) \\
\end{split}
\end{equation}

We first observe that the number of flips is independent of the $\alpha$ value. Hence, even if we change $\alpha$ to an arbitrary value, the solution still has the same objective value. 

Now consider a small positive value $\epsilon_\alpha$. Suppose we change $\alpha$ by $\epsilon_\alpha$ such that 
\begin{equation}
\label{equ:oneclusterafterpositive}
    0 \leq a_{1} \leq ... \leq a_{k} \leq \alpha+\epsilon_\alpha \leq a_{k+1} \leq ... \leq a_{U} \leq 1\\
\end{equation}

Similarly, we also change $\alpha$ by $-\epsilon_\alpha$ such that
\begin{equation}
\label{equ:oneclusterafternegative}
    0 \leq a_{1} \leq ... \leq a_{k} \leq \alpha-\epsilon_\alpha \leq a_{k+1} \leq ... \leq a_{U} \leq 1\\
\end{equation}


Note that such $\epsilon_\alpha$ always exists because $a_{k} < \alpha < a_{k+1}$. If we change $\alpha$ to $\alpha+\epsilon_\alpha$ while satisfying Equation~\ref{equ:oneclusterafterpositive}, the total error becomes $S_\alpha (\alpha+\epsilon_\alpha) + C$. Similarly, if we change $\alpha$ to $\alpha-\epsilon_\alpha$ while satisfying Equation~\ref{equ:oneclusterafternegative}, the total error becomes $S_\alpha (\alpha-\epsilon_\alpha) + C$. Hence, the change in the total error for each case can be computed as follows:
\begin{equation}
\label{equ:onechangeinviolations}
\begin{split}
    &\alpha+\epsilon_\alpha \rightarrow{} \Delta(\text{Total Error}) = S_\alpha \epsilon_\alpha \\
    &\alpha-\epsilon_\alpha \rightarrow{} \Delta(\text{Total Error}) = -S_\alpha \epsilon_\alpha \\
\end{split}
\end{equation}


From Equation~\ref{equ:onechangeinviolations}, we observe that one of the transformations always maintains or even reduces the total error according to the sign of $S_\alpha$, i.e., the solution is feasible. Specifically, if $S_\alpha \leq 0$, we change $\alpha$ to $a_{k+1}$ by setting $\epsilon_\alpha=a_{k+1}-\alpha$, otherwise we change $\alpha$ to $a_{k}$ by setting $\epsilon_\alpha=\alpha-a_k$. As a result, we can remove one non-0/1 value $\alpha$ while maintaining an optimal solution.

\end{proof}

\section{Proof for Lemma~\ref{lem:twocluster}}
\label{sec:prooftwocluster}
We continue from Section~\ref{sec:algorithm} and provide a full proof for Lemma~\ref{lem:twocluster}. Here we restate Lemma~\ref{lem:twocluster} with the problem setup:
\vspace{5pt}

\noindent
{\bf Lemma 2.2. }{ \em Let us assume that 0 < $\alpha$ < $\beta$ < 1, and the sum of the pairwise node similarities between the two clusters between the two clusters is $E$. Suppose an $\alpha$-cluster and a $\beta$-cluster have $A_0$ and $B_0$ points whose initial labels are 0, respectively, and $A_1$ and $B_1$ points whose initial values are 1, respectively. Let $N_\alpha=A_0-A_1$ and $N_\beta=B_0-B_1$. Now suppose there are $U$ nodes connected to the $\alpha$-cluster by an edge $W_{a_i \alpha}$ and $V$ nodes connected to the $\beta$-cluster by an edge $W_{b_i \beta}$ satisfying

\begin{equation*}
\label{equ:twoclusercondition}
\begin{split}
    &0 \leq a_{1} \leq ... \leq a_{k} < \alpha < a_{k+1} \leq ... \leq a_{U} \leq 1 \text{ and } \\
    &0 \leq b_{1} \leq ... \leq b_{l} < \beta < b_{l+1} \leq ... \leq b_{V} \leq 1.
\end{split}
\end{equation*}

Note that there is no connected node with a value of $\alpha$ or $\beta$ by construction. Let $S_\alpha = \sum_{i=1}^{k}W_{a_i \alpha} - \sum_{i=k+1}^{U}W_{a_i \alpha}$ and $S_\beta = \sum_{i=1}^{l}W_{b_i \beta} - \sum_{i=l+1}^{V}W_{b_i \beta}$. Let us also add the following nodes for convenience: $a_{0}=0$, $a_{U+1}=1$, $b_{0}=0$, and $b_{V+1}=1$. For an $\alpha$-cluster with $N_\alpha\ne0$ and a $\beta$-cluster with $N_\beta\ne0$ in the optimal solution, we can always convert $(\alpha, \beta)$ to one of $(a_k, \beta+\frac{N_\alpha}{N_\beta}(a_k-\alpha))$, $(a_{k+1}, \beta-\frac{N_\alpha}{N_\beta}(a_{k+1}-\alpha))$, $(\alpha+\frac{N_\beta}{N_\alpha}(\beta-b_l), b_{l})$, $(\alpha-\frac{N_\beta}{N_\alpha}(b_{l+1}-\beta), b_{l+1})$, or $(\frac{\alpha N_\alpha+\beta N_\beta}{N_\alpha+N_\beta}, \frac{\alpha N_\alpha+\beta N_\beta}{N_\alpha+N_\beta})$, while maintaining an optimal solution. As a result, we can reduce at least one non-0/1 value in the optimal solution.}


\begin{figure}[ht]
\centering
  \vspace{-0.4cm}
  \includegraphics[width=0.85\columnwidth]{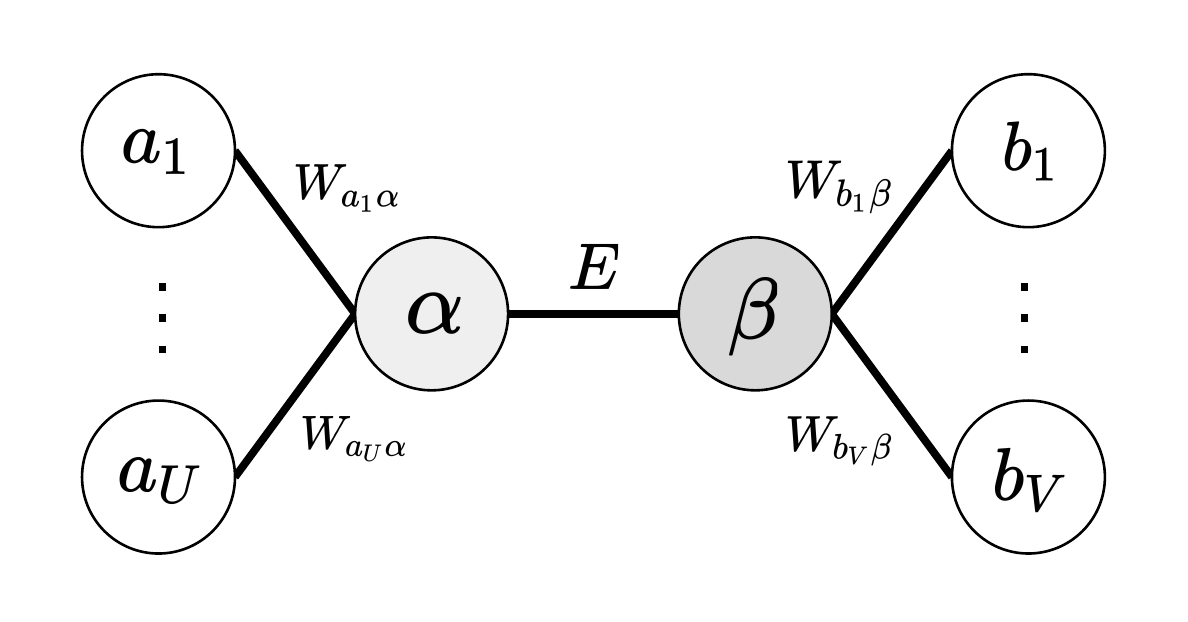}
  \vspace{-0.5cm}
  \caption{Problem setup for Lemma~\ref{lem:twocluster}.}
  \label{fig:twocluster}
\end{figure}

\vspace{-0.4cm}

\begin{proof}
We compute the initial number of flips and total error in Figure~\ref{fig:twocluster} as follows:
\begin{equation}
\label{equ:twoclusterinitial}
\begin{split}
    &\# \text{ Flips} = \alpha A_0 + (1-\alpha) A_1 + \beta B_0 + (1-\beta) B_1 = A_1 + B_1 + \alpha N_\alpha + \beta N_\beta \\
    &\text{Total Error} = \sum_{i=1}^{U}W_{a_i\alpha}|a_i-\alpha|+\sum_{i=1}^{V}W_{b_i\beta}|b_i-\beta|+E(\beta-\alpha) \\
    & \qquad \qquad \qquad \;= (S_\alpha-E)\alpha + (S_\beta+E)\beta + C \\
    & \quad (C =-\sum_{i=1}^{k}W_{a_i \alpha}a_i+\sum_{i=k+1}^{U}W_{a_i \alpha}a_i-\sum_{i=1}^{l}W_{b_i \beta}b_i +\sum_{i=l+1}^{V}W_{b_i \beta}b_i)\\
\end{split}
\end{equation}

Now consider two small positive values $\epsilon_\alpha$ and $\epsilon_\beta$. Both $N_\alpha$ and $N_\beta$ are non-zero, so we have only two cases: $\frac{N_\alpha}{N_\beta}<0$ and $\frac{N_\alpha}{N_\beta}>0$.

{\bf Case 1 $\frac{N_\alpha}{N_\beta}<0$:} Suppose we change $\alpha$ by $\epsilon_\alpha$ and $\beta$ by $\epsilon_\beta$, respectively, such that
\begin{equation}
\label{equ:twoclustercase1positive}
\begin{gathered}
    0 \leq a_{1} \leq ... \leq a_{k} \leq \alpha+\epsilon_\alpha \leq a_{k+1} \leq ... \leq a_{U} \leq 1 \\
    0 \leq b_{1} \leq ... \leq b_{l} \leq \beta+\epsilon_\beta \leq b_{l+1} \leq ... \leq b_{V} \leq 1\\
    \alpha+\epsilon_\alpha \leq \beta+\epsilon_\beta \\
\end{gathered}
\end{equation}

We then compute the number of flips for $(\alpha+\epsilon_\alpha, \beta+\epsilon_\beta)$ as $A_1+B_1+(\alpha+\epsilon_\alpha)N_\alpha+(\beta+\epsilon_\beta)N_\beta$. In order to have the same number of flips as the initial value in Equation~\ref{equ:twoclusterinitial}, ($\epsilon_\alpha$, $\epsilon_\beta$) should satisfy $\epsilon_\alpha N_\alpha+\epsilon_\beta N_\beta=0$.

Similarly, we also change $\alpha$ by $-\epsilon_\alpha$ and $\beta$ by $-\epsilon_\beta$, respectively, such that
\begin{equation}
\label{equ:twoclustercase1negative}
\begin{gathered}
    0 \leq a_{1} \leq ... \leq a_{k} \leq \alpha-\epsilon_\alpha \leq a_{k+1} \leq ... \leq a_{U} \leq 1 \\
    0 \leq b_{1} \leq ... \leq b_{l} \leq \beta-\epsilon_\beta \leq b_{l+1} \leq ... \leq b_{V} \leq 1\\
    \alpha-\epsilon_\alpha \leq \beta-\epsilon_\beta \\
\end{gathered}
\end{equation}

In this case, $(\alpha-\epsilon_\alpha, \beta-\epsilon_\beta)$ also maintains the same number of label flips if $\epsilon_\alpha N_\alpha+\epsilon_\beta N_\beta=0$. 

From now, we consider ($\epsilon_\alpha$, $\epsilon_\beta$) that also satisfies $\epsilon_\alpha N_\alpha+\epsilon_\beta N_\beta=0$, i.e., $\epsilon_\beta=-\frac{N_\alpha}{N_\beta}\epsilon_\alpha$. Note that such $\epsilon_\alpha$ and $\epsilon_\beta$ always exist because $a_{k} < \alpha < a_{k+1}$, $b_{l} < \beta < b_{l+1}$, and $\frac{N_\alpha}{N_\beta}<0$. Therefore, both $(\alpha+\epsilon_\alpha, \beta+\epsilon_\beta)$ and $(\alpha-\epsilon_\alpha, \beta-\epsilon_\beta)$ have the same number of label flipping as the initial number.


If we change $(\alpha, \beta)$ to $(\alpha+\epsilon_\alpha, \beta+\epsilon_\beta)$ while satisfying Equation~\ref{equ:twoclustercase1positive} and $\epsilon_\alpha N_\alpha+\epsilon_\beta N_\beta=0$, the total error becomes $(S_\alpha-E)(\alpha+\epsilon_\alpha)+(S_\beta+E)(\beta+\epsilon_\beta)+C$. Similarly, if we change $(\alpha, \beta)$ to $(\alpha-\epsilon_\alpha, \beta-\epsilon_\beta)$ while satisfying Equation~\ref{equ:twoclustercase1negative} and $\epsilon_\alpha N_\alpha+\epsilon_\beta N_\beta=0$, the total error becomes $(S_\alpha-E)(\alpha-\epsilon_\alpha)+(S_\beta+E)(\beta-\epsilon_\beta)+C$. Hence, the change in the total error for each case can be computed as follows:
\vspace{-0.015cm}
\begin{equation}
\label{equ:twocase1changeinviolations}
\begin{split}
    (\alpha+\epsilon_\alpha, \beta+\epsilon_\beta) \rightarrow{}& \Delta(\text{Total Error}) = (S_\alpha-E)\epsilon_\alpha+(S_\beta+E)\epsilon_\beta\\
    &=\frac{(S_\alpha-E)N_\beta-(S_\beta+E)N_\alpha}{N_\beta}\epsilon_\alpha\\
    (\alpha-\epsilon_\alpha, \beta-\epsilon_\beta) \rightarrow{}& \Delta(\text{Total Error}) = -(S_\alpha-E)\epsilon_\alpha-(S_\beta+E)\epsilon_\beta \\
    &=-\frac{(S_\alpha-E)N_\beta-(S_\beta+E)N_\alpha}{N_\beta}\epsilon_\alpha \\
\end{split}
\end{equation}
\vspace{-0.015cm}
From Equation~\ref{equ:twocase1changeinviolations}, we observe that one of the transformations always maintains or even reduces the total error according to the sign of $\frac{(S_\alpha-E)N_\beta-(S_\beta+E)N_\alpha}{N_\beta}$, i.e., the solution is feasible.

\begin{itemize}
    \item If $\frac{(S_\alpha-E)N_\beta-(S_\beta+E)N_\alpha}{N_\beta} \leq 0$, we change $(\alpha, \beta)$ to $(\alpha+\epsilon_\alpha, \beta+\epsilon_\beta)$ so that the solution is still optimal. Recall $(\epsilon_\alpha, \epsilon_\beta)$ satisfies the three inequalities and one condition: $\alpha+\epsilon_\alpha \leq a_{k+1}$, $\beta+\epsilon_\beta \leq b_{l+1}$, $\alpha+\epsilon_\alpha \leq \beta+\epsilon_\beta$, and $\epsilon_\alpha N_\alpha+\epsilon_\beta N_\beta=0$. Among the possible $(\epsilon_\alpha, \epsilon_\beta)$, we choose the upper bound of $\epsilon_\alpha$ and the corresponding $\epsilon_\beta$ ($\epsilon_\beta=-\frac{N_\alpha}{N_\beta}\epsilon_\alpha$). To get an upper bound of $\epsilon_\alpha$, we find the equality conditions for each inequality and take the smallest value among them. If $1+\frac{N_\alpha}{N_\beta} \leq 0$, the last inequality ($\alpha+\epsilon_\alpha \leq \beta+\epsilon_\beta$) always hold because $\epsilon_\alpha \leq \epsilon_\beta$. Hence, we consider only the first two inequalities and set $\epsilon_\alpha$ to $min(a_{k+1}-\alpha, -\frac{N_\beta}{N_\alpha}(b_{l+1}-\beta)$. On the other hand, if $1+\frac{N_\alpha}{N_\beta} > 0$, we set $\epsilon_\alpha$ to $min(a_{k+1}-\alpha, -\frac{N_\beta}{N_\alpha}(b_{l+1}-\beta), \frac{N_\beta(\beta-\alpha)}{N_\alpha+N_\beta})$ from the three inequalities. As a result, we can convert $(\alpha, \beta)$ to $(a_{k+1}, \beta-\frac{N_\alpha}{N_\beta}(a_{k+1}-\alpha))$, $(\alpha-\frac{N_\beta}{N_\alpha}(b_{l+1}-\beta), b_{l+1})$, or $(\frac{\alpha N_\alpha+\beta N_\beta}{N_\alpha+N_\beta}, \frac{\alpha N_\alpha+\beta N_\beta}{N_\alpha+N_\beta})$, which is one of the cases in Lemma~\ref{lem:twocluster}.
    \item If $\frac{(S_\alpha-E)N_\beta-(S_\beta+E)N_\alpha}{N_\beta} > 0$, we change $(\alpha, \beta)$ to $(\alpha-\epsilon_\alpha, \beta-\epsilon_\beta)$ so that the solution is still optimal. Recall $(\epsilon_\alpha, \epsilon_\beta)$ satisfies the three inequalities and one condition: $a_{k} \leq \alpha-\epsilon_\alpha$, $ b_{l} \leq \beta-\epsilon_\beta$, $\alpha-\epsilon_\alpha \leq \beta-\epsilon_\beta$, and $\epsilon_\alpha N_\alpha+\epsilon_\beta N_\beta=0$. Among the possible $(\epsilon_\alpha, \epsilon_\beta)$, we choose the upper bound of $\epsilon_\alpha$ and the corresponding $\epsilon_\beta$ ($\epsilon_\beta=-\frac{N_\alpha}{N_\beta}\epsilon_\alpha$). To get an upper bound of $\epsilon_\alpha$, we find the equality conditions for each inequality and take the smallest value among them. If $1+\frac{N_\alpha}{N_\beta} \geq 0$, the last inequality ($\alpha-\epsilon_\alpha \leq \beta-\epsilon_\beta$) always holds because $\epsilon_\alpha \geq \epsilon_\beta$. Hence, we consider only the first two inequalities and set $\epsilon_\alpha$ to $min(\alpha-a_{k}, -\frac{N_\beta}{N_\alpha}(\beta-b_l)$. On the other hand, if $1+\frac{N_\alpha}{N_\beta} < 0$, we set $\epsilon_\alpha$ to $min(\alpha-a_{k}, -\frac{N_\beta}{N_\alpha}(\beta-b_l), -\frac{N_\beta(\beta-\alpha)}{N_\alpha+N_\beta})$ from the three inequalities. As a result, we can convert $(\alpha, \beta)$ to $(a_{k}, \beta+\frac{N_\alpha}{N_\beta}(\alpha-a_{k}))$, $(\alpha+\frac{N_\beta}{N_\alpha}(\beta-b_{l}), b_{l})$, or $(\frac{\alpha N_\alpha+\beta N_\beta}{N_\alpha+N_\beta}, \frac{\alpha N_\alpha+\beta N_\beta}{N_\alpha+N_\beta})$, which is one of the cases in Lemma~\ref{lem:twocluster}.
\end{itemize}

{\bf Case 2: $\frac{N_\alpha}{N_\beta}>0$:} The proof is similar to the proof for Case 1 except that we consider $(\alpha+\epsilon_\alpha, \beta-\epsilon_\beta)$ and $(\alpha-\epsilon_\alpha, \beta+\epsilon_\beta)$ instead of $(\alpha+\epsilon_\alpha, \beta+\epsilon_\beta)$ and
$(\alpha-\epsilon_\alpha, \beta-\epsilon_\beta)$. We now write the full proof for Case 2 for completeness.

Suppose we change $\alpha$ by $\epsilon_\alpha$ and $\beta$ by $-\epsilon_\beta$, respectively, such that
\begin{equation}
\label{equ:twoclustercase2positive}
\begin{gathered}
    0 \leq a_{1} \leq ... \leq a_{k} \leq \alpha+\epsilon_\alpha \leq a_{k+1} \leq ... \leq a_{U} \leq 1 \\
    0 \leq b_{1} \leq ... \leq b_{l} \leq \beta-\epsilon_\beta \leq b_{l+1} \leq ... \leq b_{V} \leq 1\\
    \alpha+\epsilon_\alpha \leq \beta-\epsilon_\beta \\
\end{gathered}
\end{equation}

We then compute the number of flips for $(\alpha+\epsilon_\alpha, \beta-\epsilon_\beta)$ as $A_1+B_1+(\alpha+\epsilon_\alpha)N_\alpha+(\beta-\epsilon_\beta)N_\beta$. In order to have the same number of flips as the initial value in Equation~\ref{equ:twoclusterinitial}, ($\epsilon_\alpha$, $\epsilon_\beta$) should satisfy $\epsilon_\alpha N_\alpha-\epsilon_\beta N_\beta=0$.

Similarly, we also change $\alpha$ by $-\epsilon_\alpha$ and $\beta$ by $\epsilon_\beta$, respectively, such that
\begin{equation}
\label{equ:twoclustercase2negative}
\begin{gathered}
    0 \leq a_{1} \leq ... \leq a_{k} \leq \alpha-\epsilon_\alpha \leq a_{k+1} \leq ... \leq a_{U} \leq 1 \\
    0 \leq b_{1} \leq ... \leq b_{l} \leq \beta+\epsilon_\beta \leq b_{l+1} \leq ... \leq b_{V} \leq 1\\
    \alpha-\epsilon_\alpha \leq \beta+\epsilon_\beta \\
\end{gathered}
\end{equation}

In this case, $(\alpha-\epsilon_\alpha, \beta+\epsilon_\beta)$ also maintains the same number of label flips if $\epsilon_\alpha N_\alpha-\epsilon_\beta N_\beta=0$. 

From now, we consider ($\epsilon_\alpha$, $\epsilon_\beta$) that also satisfies $\epsilon_\alpha N_\alpha-\epsilon_\beta N_\beta=0$, i.e., $\epsilon_\beta=\frac{N_\alpha}{N_\beta}\epsilon_\alpha$. Note that such ($\epsilon_\alpha$ and $\epsilon_\beta$) always exist because $a_{k} < \alpha < a_{k+1}$, $b_{l} < \beta < b_{l+1}$, and $\frac{N_\alpha}{N_\beta}>0$. Therefore, both $(\alpha+\epsilon_\alpha, \beta-\epsilon_\beta)$ and $(\alpha-\epsilon_\alpha, \beta+\epsilon_\beta)$ have the same number of label flipping as the initial number.


If we change $(\alpha, \beta)$ to $(\alpha+\epsilon_\alpha, \beta-\epsilon_\beta)$ while satisfying Equation~\ref{equ:twoclustercase2positive} and $\epsilon_\alpha N_\alpha-\epsilon_\beta N_\beta=0$, the total error becomes $(S_\alpha-E)(\alpha+\epsilon_\alpha)+(S_\beta+E)(\beta-\epsilon_\beta)+C$. Similarly, if we change $(\alpha, \beta)$ to $(\alpha-\epsilon_\alpha, \beta+\epsilon_\beta)$ while satisfying Equation~\ref{equ:twoclustercase2negative} and $\epsilon_\alpha N_\alpha-\epsilon_\beta N_\beta=0$, the total error becomes $(S_\alpha-E)(\alpha-\epsilon_\alpha)+(S_\beta+E)(\beta+\epsilon_\beta)+C$. Hence, the change in the total error for each case can be computed as follows:

\begin{equation}
\label{equ:twocase2changeinviolations}
\begin{split}
    (\alpha+\epsilon_\alpha, \beta-\epsilon_\beta) \rightarrow{} &\Delta(\text{Total Error}) = (S_\alpha-E)\epsilon_\alpha-(S_\beta+E)\epsilon_\beta\\
    &=\frac{(S_\alpha-E)N_\beta-(S_\beta+E)N_\alpha}{N_\beta}\epsilon_\alpha\\
    (\alpha-\epsilon_\alpha, \beta+\epsilon_\beta) \rightarrow{} &\Delta(\text{Total Error}) = -(S_\alpha-E)\epsilon_\alpha+(S_\beta+E)\epsilon_\beta \\
    &=-\frac{(S_\alpha-E)N_\beta-(S_\beta+E)N_\alpha}{N_\beta}\epsilon_\alpha \\
\end{split}
\end{equation}

From Equation~\ref{equ:twocase2changeinviolations}, we observe that one of the transformations always maintains or reduces the total error according to the sign of $\frac{(S_\alpha-E)N_\beta-(S_\beta+E)N_\alpha}{N_\beta}$, i.e., the solution is feasible.

\begin{itemize}
    \item If $\frac{(S_\alpha-E)N_\beta-(S_\beta+E)N_\alpha}{N_\beta} \leq 0$, we can change $(\alpha, \beta)$ to $(\alpha+\epsilon_\alpha, \beta-\epsilon_\beta)$ so that the solution is still optimal. Recall $(\epsilon_\alpha, \epsilon_\beta)$ satisfies the three inequalities and one condition: $\alpha+\epsilon_\alpha \leq a_{k+1}$, $b_{l+1} \leq \beta-\epsilon_\beta$, $\alpha+\epsilon_\alpha \leq \beta+\epsilon_\beta$, and $\epsilon_\alpha N_\alpha-\epsilon_\beta N_\beta=0$. Among the possible $(\epsilon_\alpha, \epsilon_\beta)$, we choose the upper bound of $\epsilon_\alpha$ and the corresponding $\epsilon_\beta$ ($\epsilon_\beta=\frac{N_\alpha}{N_\beta}\epsilon_\alpha$). To get an upper bound of $\epsilon_\alpha$, we find the equality conditions for each inequality and take the smallest value among them. Specifically, we set $\epsilon_\alpha$ to $min(a_{k+1}-\alpha, \frac{N_\beta}{N_\alpha}(\beta-b_l), \frac{N_\beta(\beta-\alpha)}{N_\alpha+N_\beta})$ and $\epsilon_\beta$. As a result, we can convert $(\alpha, \beta)$ to $(a_{k+1}, \beta-\frac{N_\alpha}{N_\beta}(a_{k+1}-\alpha))$, $(\alpha+\frac{N_\beta}{N_\alpha}(\beta-b_l), b_{l})$, or $(\frac{\alpha N_\alpha+\beta N_\beta}{N_\alpha+N_\beta}, \frac{\alpha N_\alpha+\beta N_\beta}{N_\alpha+N_\beta})$, which is one of the cases in Lemma~\ref{lem:twocluster}.
    
    \item If $\frac{(S_\alpha-E)N_\beta-(S_\beta+E)N_\alpha}{N_\beta} > 0$, we can change $(\alpha, \beta)$ to $(\alpha-\epsilon_\alpha, \beta+\epsilon_\beta)$ so that the solution is still optimal. Recall $(\epsilon_\alpha, \epsilon_\beta)$ satisfies the three inequalities and one condition: $a_k \leq \alpha-\epsilon_\alpha$, $ \beta+\epsilon_\beta \leq b_{l+1}$, $\alpha-\epsilon_\alpha \leq \beta+\epsilon_\beta$, and $\epsilon_\alpha N_\alpha-\epsilon_\beta N_\beta=0$. Among the possible $(\epsilon_\alpha, \epsilon_\beta)$, we choose the upper bound of $\epsilon_\alpha$ and the corresponding $\epsilon_\beta$ ($\epsilon_\beta=\frac{N_\alpha}{N_\beta}\epsilon_\alpha$). To get an upper bound of $\epsilon_\alpha$, we find the equality conditions for each inequality and take the smallest value among them. In this case, the last inequality ($\alpha-\epsilon_\alpha \leq \beta+\epsilon_\beta$) always hold. Hence, we consider only the first two conditions and set $\epsilon_\alpha$ to $min(\alpha-a_{k}, \frac{N_\beta}{N_\alpha}(b_{l+1}-\beta))$. As a result, we can convert $(\alpha, \beta)$ to either $(a_{k}, \beta+\frac{N_\alpha}{N_\beta}(a_{k}-\alpha))$ or $(\alpha-\frac{N_\beta}{N_\alpha}(b_{l+1}-\beta), b_{l+1})$, which is one of the cases in Lemma~\ref{lem:twocluster}.
\end{itemize}

We summarize the main results for each case below and conclude that $(\alpha, \beta)$ can be transformed into one of the five cases in Lemma~\ref{lem:twocluster} while maintaining an optimal optimal. As a result, we remove at least one of $\alpha$ and $\beta$. If both converted values already exist in the solution, we can even reduce two non-0/1 values.

\begin{itemize}
    \item If ($\frac{N_\alpha}{N_\beta}<0$, $\frac{(S_\alpha-E)N_\beta-(S_\beta+E)N_\alpha}{N_\beta} \leq 0$, $1+\frac{N_\alpha}{N_\beta}\leq 0$), we convert $(\alpha, \beta)$ to $(\alpha+\epsilon_\alpha, \beta+\epsilon_\beta)$ where $\epsilon_\alpha=min(a_{k+1}-\alpha, -\frac{N_\beta}{N_\alpha}(b_{l+1}-\beta)$ and $\epsilon_\beta=-\frac{N_\alpha}{N_\beta}\epsilon_\alpha$. 
    \item If ($\frac{N_\alpha}{N_\beta}<0$, $\frac{(S_\alpha-E)N_\beta-(S_\beta+E)N_\alpha}{N_\beta} \leq 0$, $1+\frac{N_\alpha}{N_\beta}> 0$), we convert $(\alpha, \beta)$ to $(\alpha+\epsilon_\alpha, \beta+\epsilon_\beta)$ where $\epsilon_\alpha=min(a_{k+1}-\alpha, -\frac{N_\beta}{N_\alpha}(b_{l+1}-\beta), \frac{N_\beta(\beta-\alpha)}{N_\alpha+N_\beta})$ and $\epsilon_\beta=-\frac{N_\alpha}{N_\beta}\epsilon_\alpha$. 
    \item If ($\frac{N_\alpha}{N_\beta}<0$, $\frac{(S_\alpha-E)N_\beta-(S_\beta+E)N_\alpha}{N_\beta} > 0$, $1+\frac{N_\alpha}{N_\beta} \geq 0$), we convert $(\alpha, \beta)$ to $(\alpha-\epsilon_\alpha, \beta-\epsilon_\beta)$ where $\epsilon_\alpha=min(\alpha-a_{k}, -\frac{N_\beta}{N_\alpha}(\beta-b_l)$ and $\epsilon_\beta=-\frac{N_\alpha}{N_\beta}\epsilon_\alpha$. 
    \item If ($\frac{N_\alpha}{N_\beta}<0$, $\frac{(S_\alpha-E)N_\beta-(S_\beta+E)N_\alpha}{N_\beta} > 0$, $1+\frac{N_\alpha}{N_\beta} < 0$), we convert $(\alpha, \beta)$ to $(\alpha-\epsilon_\alpha, \beta-\epsilon_\beta)$ where $\epsilon_\alpha=min(\alpha-a_{k}, -\frac{N_\beta}{N_\alpha}(\beta-b_l), -\frac{N_\beta(\beta-\alpha)}{N_\alpha+N_\beta})$ and $\epsilon_\beta=-\frac{N_\alpha}{N_\beta}\epsilon_\alpha$. 
    \item If ($\frac{N_\alpha}{N_\beta}>0$, $\frac{(S_\alpha-E)N_\beta-(S_\beta+E)N_\alpha}{N_\beta} \leq 0$), we convert $(\alpha, \beta)$ to $(\alpha+\epsilon_\alpha, \beta-\epsilon_\beta)$ where $\epsilon_\alpha=min(a_{k+1}-\alpha, \frac{N_\beta}{N_\alpha}(\beta-b_l), \frac{N_\beta(\beta-\alpha)}{N_\alpha+N_\beta})$ and $\epsilon_\beta=\frac{N_\alpha}{N_\beta}\epsilon_\alpha$. 
    \item If ($\frac{N_\alpha}{N_\beta}>0$, $\frac{(S_\alpha-E)N_\beta-(S_\beta+E)N_\alpha}{N_\beta} > 0$), we convert $(\alpha, \beta)$ to $(\alpha-\epsilon_\alpha, \beta+\epsilon_\beta)$ where $\epsilon_\alpha=min(\alpha-a_{k}, \frac{N_\beta}{N_\alpha}(b_{l+1}-\beta))$ and $\epsilon_\beta=\frac{N_\alpha}{N_\beta}\epsilon_\alpha$. 
\end{itemize}

\end{proof}

\section{Trade-off for other datasets}
\label{sec:tradeoffotherdatasets}
We continue from Section~\ref{sec:accuracyfairnesstradeoff} and show the trade-off results on the AdultCensus and Credit datasets in Figure~\ref{fig:tradeofftwodatasets}. The results are similar to the COMPAS dataset in Figure~\ref{fig:tradeoffcompas} where there is a clear trade-off between accuracy and fairness.

\begin{figure}[ht]
\vspace{-0.1cm}
  \centering
  \begin{subfigure}{0.8\columnwidth}
     \includegraphics[width=\columnwidth]{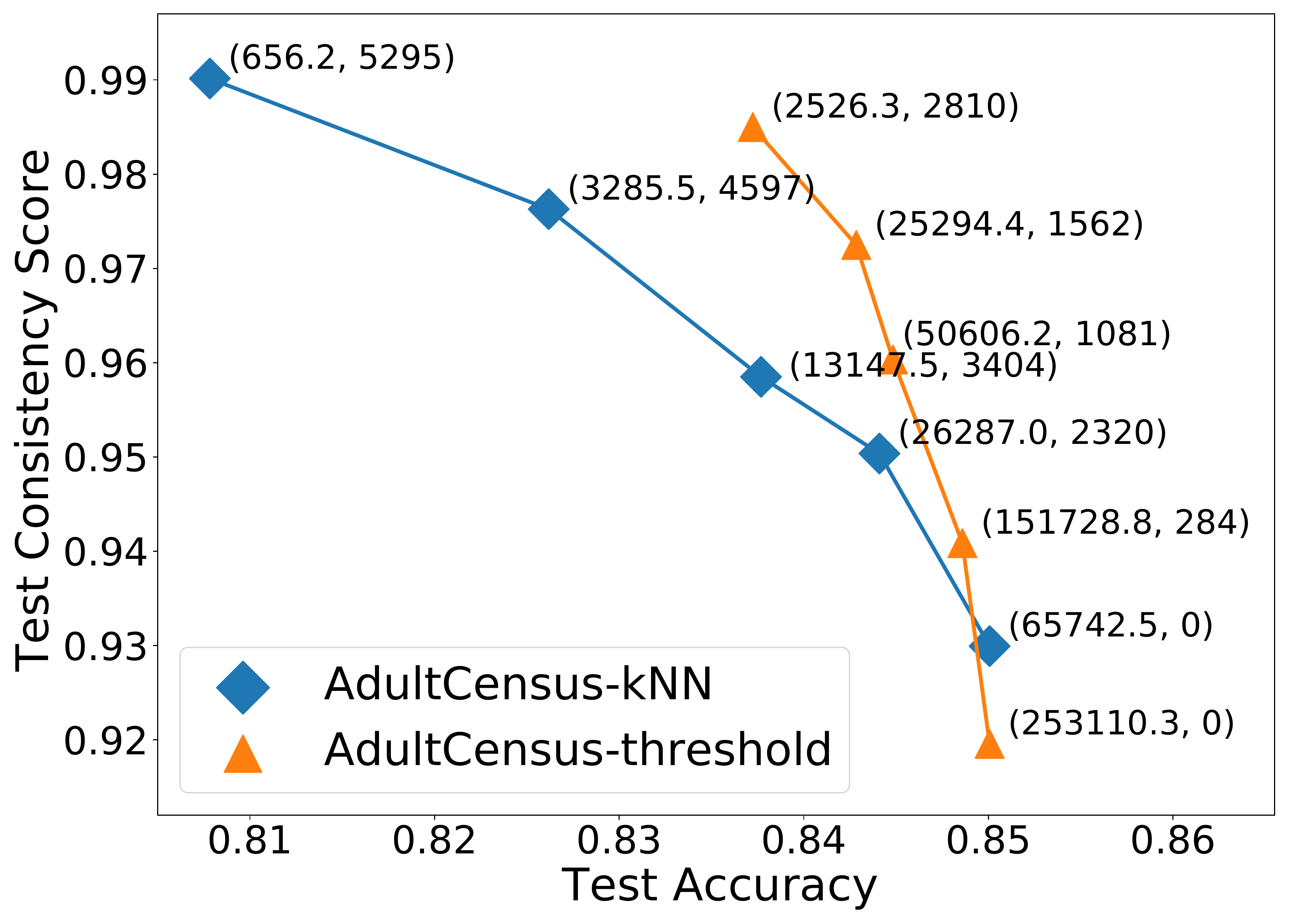}
     \vspace{-0.5cm}
     \caption{{\sf AdultCensus}}
     \label{fig:compastradeoff}
  \end{subfigure}
  \begin{subfigure}{0.8\columnwidth}
     \includegraphics[width=\columnwidth]{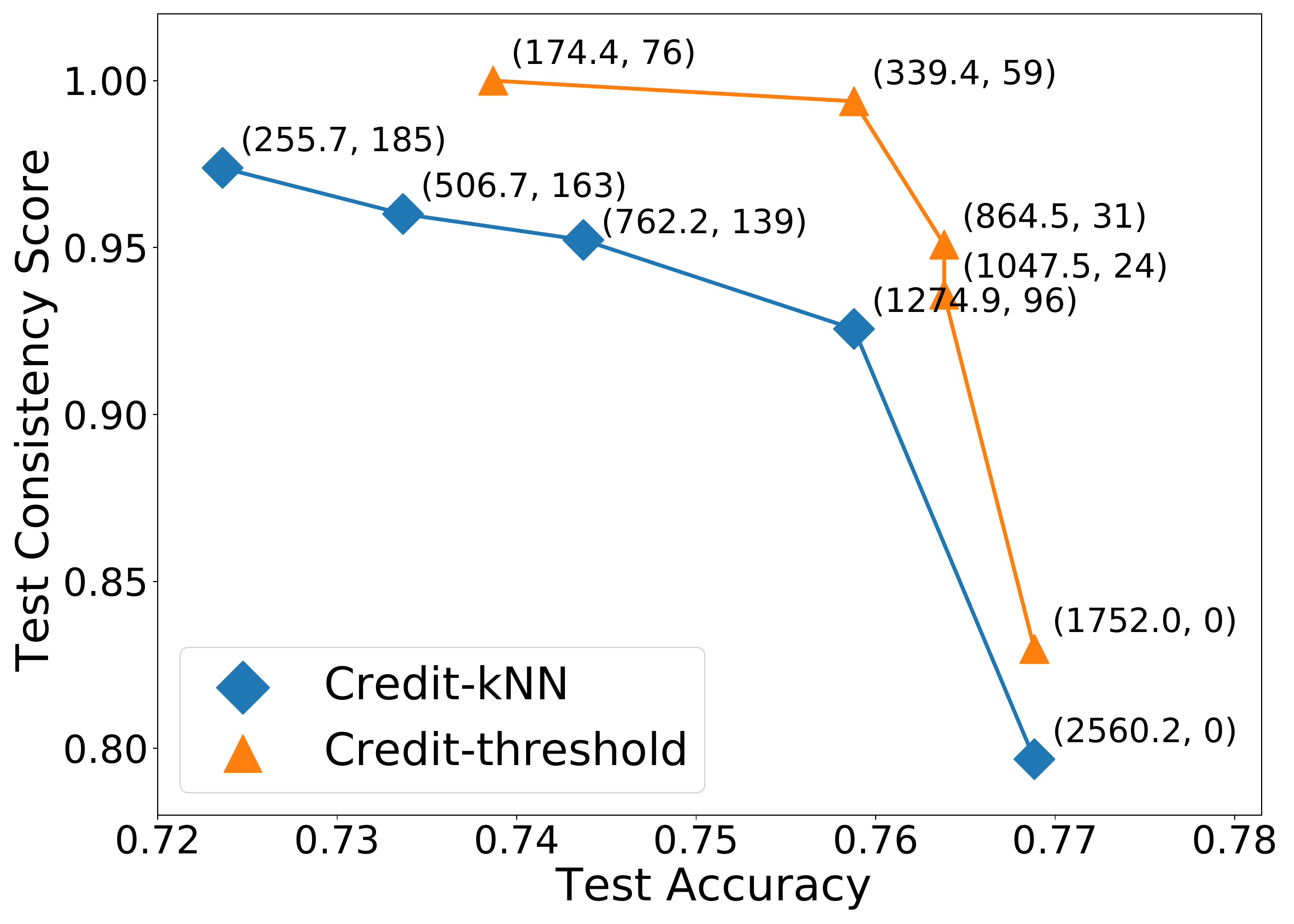}
     \vspace{-0.5cm}
     \caption{{\sf Credit}}
     \label{fig:adulttradeoff}
  \end{subfigure} 
     \caption{Trade-off curves on the AdultCensus and Credit datasets.}
     \vspace{-0.5cm}
 \label{fig:tradeofftwodatasets}
\end{figure}

\section{Bank and LSAC Datasets Results}
\label{sec:banklawschool}
We continue from Section~\ref{sec:experimentalsetting} and show experimental results on the Bank and LSAC datasets when training a logistic regression model. Table~\ref{tbl:banklawschool} shows consistency scores with respect to the threshold-based similarity matrix. As a result, both datasets have almost 1.0 consistency scores, which means that they are already inherently fair in terms of individual fairness. The reason is that these datasets have much smaller total errors compared to other fairness datasets. 

\begin{table}[ht]
  \centering
  \begin{tabular}{ccc}
    \toprule
    {\bf Dataset} & {\bf Test Accuracy} & {\bf Consistency Score}\\
    \midrule
    Bank & 0.956 & 0.997\\
    LSAC & 0.825 & 0.986\\
    \bottomrule
  \end{tabular}
  \caption{Accuracy and fairness results using logistic regression on the Bank and LSAC datasets.}
  \label{tbl:banklawschool}
\end{table}

\vspace{-0.8cm}

\section{Optimization Solutions for COMPAS}
\label{sec:optimizationcopmas}
We continue from Section~\ref{sec:optimizationcomparison} and perform the same experiments on the COMPAS dataset where we use the kNN-based similarity matrix. In Figure~\ref{fig:solutioncomparison_compas}, the key trends are still similar to Figure~\ref{fig:solutioncomparison} where \systems{} (1) always satisfies the total error limit while \greedy{} and \gradient{} result in infeasible solutions for some cases while \kmeans{} returns feasible solutions, but flips too many labels (Figure~\ref{fig:compas_violation}), (2) provides the solution closest to the optimal in terms the number of label flips (Figure~\ref{fig:compas_flip}), and (3) is much faster than other optimization solutions (Figure~\ref{fig:compas_runtime}). Compared to the AdultCensus results in Figure~\ref{fig:solutioncomparison}, \systems{} is the most efficient because the COMPAS dataset is relatively small and has a smaller total error.

\begin{figure}[ht]
  \centering
  \begin{subfigure}{0.69\columnwidth}
     \includegraphics[width=\columnwidth]{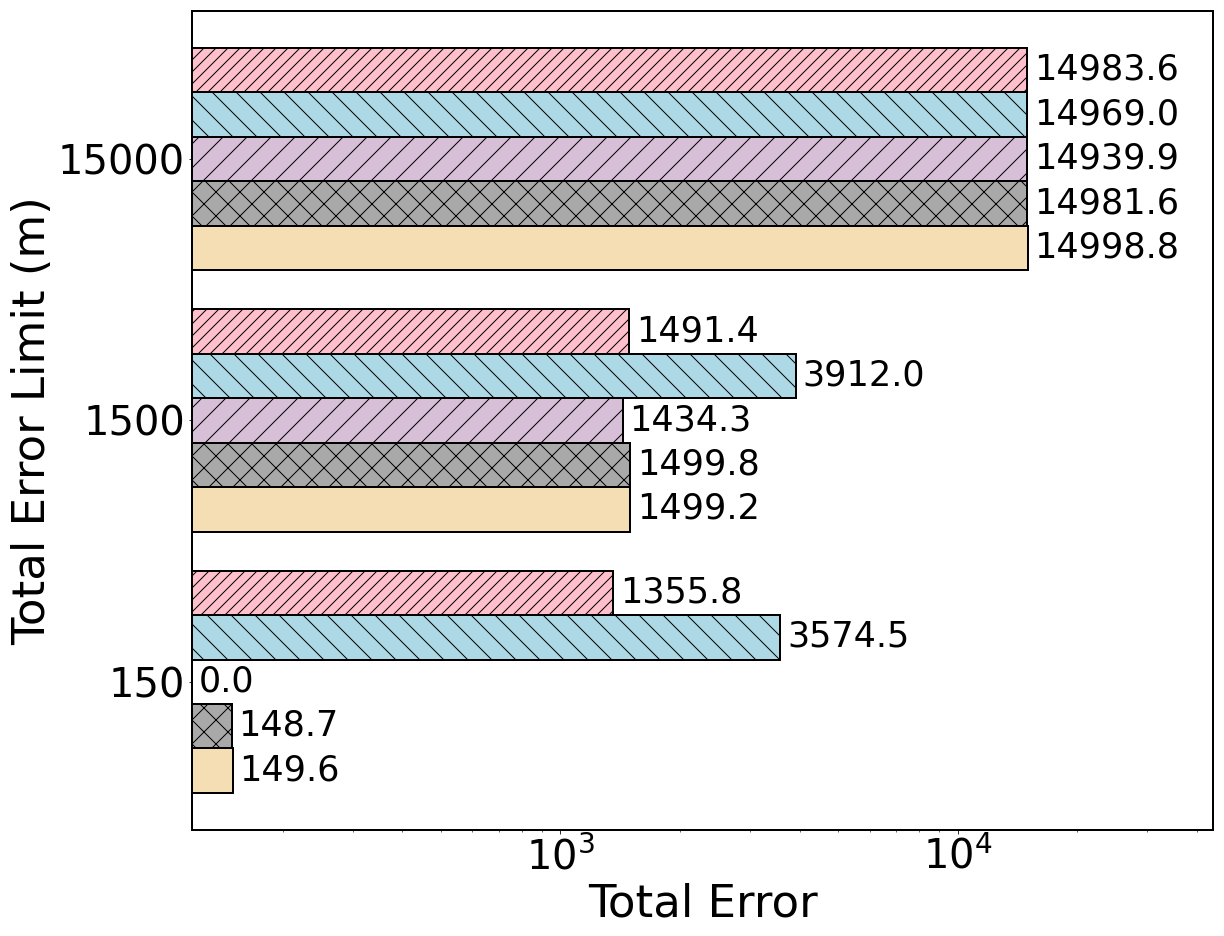}
     \vspace{-0.5cm}
     \caption{{\sf Total error}}
     \label{fig:compas_violation}
  \end{subfigure}
  \begin{subfigure}{0.69\columnwidth}
     \includegraphics[width=\columnwidth]{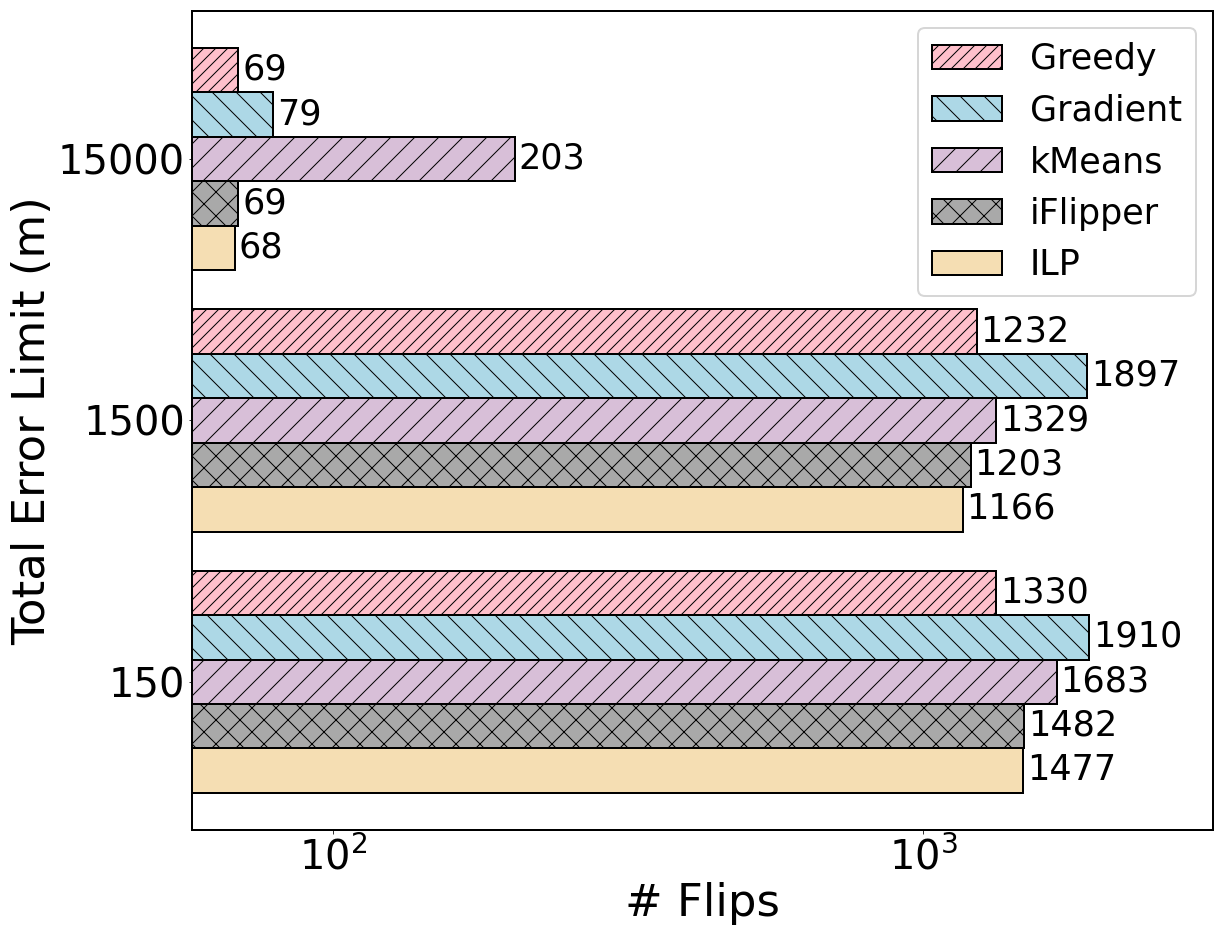}
     \vspace{-0.5cm}
     \caption{{\sf Number of flips}}
     \label{fig:compas_flip}
  \end{subfigure} 
  \begin{subfigure}{0.69\columnwidth}
     \includegraphics[width=\columnwidth]{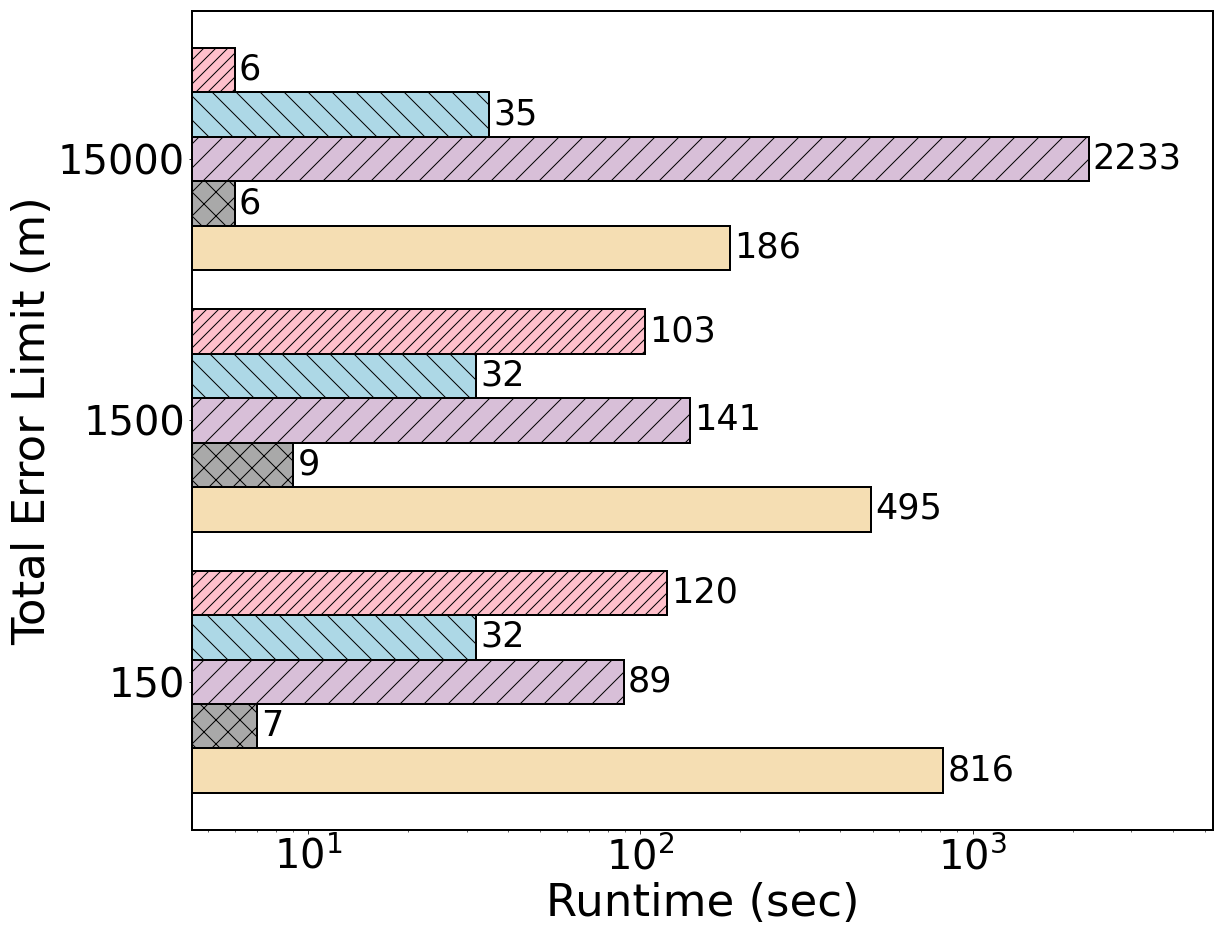}
     \caption{{\sf Runtime (sec)}}
     \label{fig:compas_runtime}
  \end{subfigure} 
  \vspace{-0.3cm}
     \caption{A detailed comparison of \systems{} against three na\"ive solutions (\greedy{}, \gradient{}, and \kmeans{}) and ILP solver on the COMPAS dataset where we use the kNN-based similarity matrix. Here the initial amount of total error is 16,454.0. We show the results for three different total error limits ($m$). All three subfigures use the same legends.}
 \label{fig:solutioncomparison_compas}
\end{figure}

\vspace{0.1cm}
\section{Ablation Study for COMPAS dataset}
\label{sec:ablationcompas}
We continue from Section~\ref{sec:ablationstudy} and provide the ablation study for the COMPAS dataset where we use the kNN-based similarity matrix in Figure~\ref{fig:ablationperformance_compas}. The observations are similar to those of Figure~\ref{fig:ablationperformance} where both adaptive rounding and reverse greedy algorithms are necessary for \systems{} to provide a near-exact solution. In addition, Table~\ref{tbl:ablationruntime_compas} shows the average runtime of each component in \systems{} in Figure~\ref{fig:ablationperformance_compas} and the results are similar to Table~\ref{tbl:ablationruntime} where the proposed algorithms are efficient in practice.

\begin{figure}[ht]
  \centering
  \begin{subfigure}{0.54\columnwidth}
     \includegraphics[width=\columnwidth]{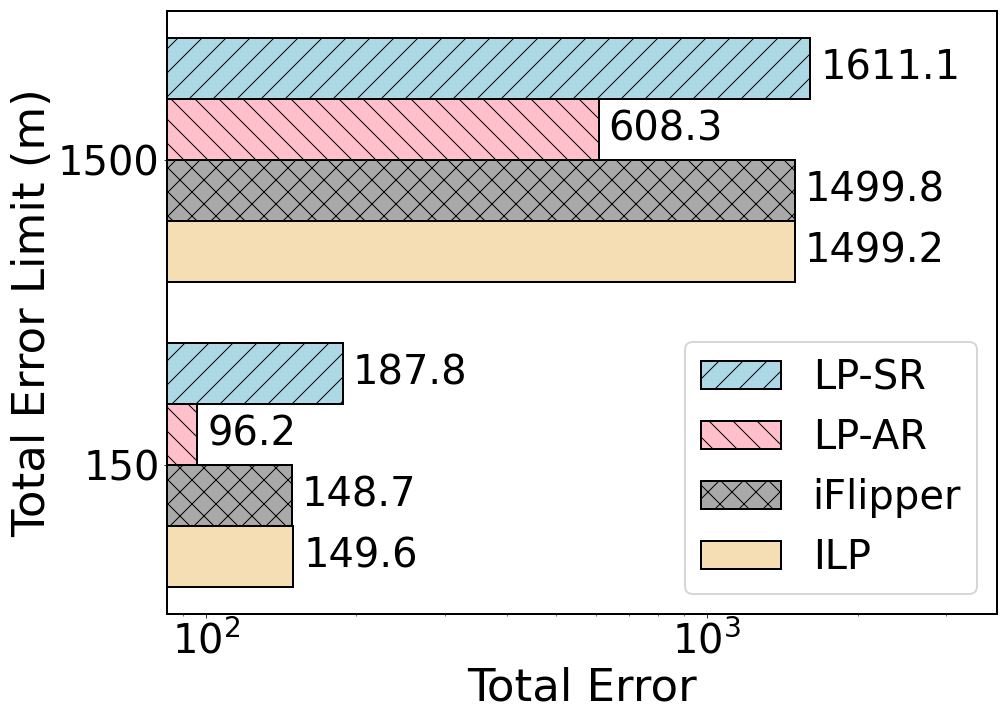}
     \vspace{-0.5cm}
     \caption{{\sf Total error}}
     \label{fig:ablationsolutionviolation_compas}
  \end{subfigure}
  \begin{subfigure}{0.44\columnwidth}
     \includegraphics[width=\columnwidth]{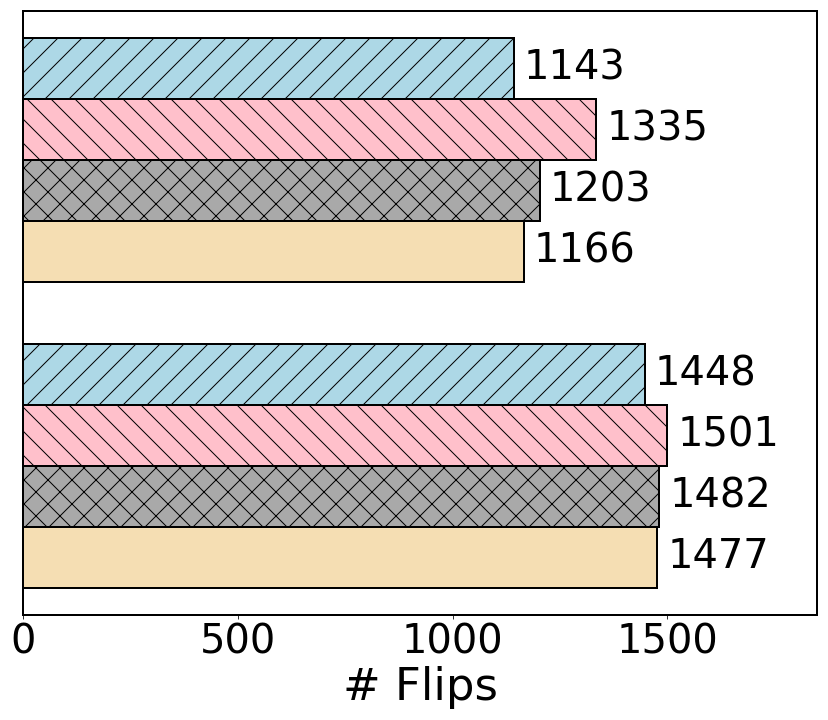}
     \vspace{-0.5cm}
     \caption{{\sf Number of flips}}
     \label{fig:ablationsolutionflip_compas}
  \end{subfigure} 
    \vspace{-0.3cm}
     \caption{Ablation study for \systems{} on the COMPAS dataset and the kNN-based similarity matrix.}
 \label{fig:ablationperformance_compas}
\end{figure}

\begin{table}[ht]
  \centering
  \begin{tabular}{lc}
    \toprule
    \multicolumn{1}{c}{\bf Method} & {\bf Avg. Runtime (sec)} \\ 
    \midrule
    LP Solver (effectively includes Alg.~\ref{alg:converting}) & 5.87\\
    + Adaptive Rounding (Alg.~\ref{alg:rounding}) & 0.09 \\
    + Reverse Greedy (Alg.~\ref{alg:reversegreedy}) & 2.19 \\
    \bottomrule
  \end{tabular}
  \caption{Avg. runtimes of \systems{}'s components in Figure~\ref{fig:ablationperformance_compas}.}
  \label{tbl:ablationruntime_compas}
\end{table}

\vspace{-0.6cm}
\section{Comparison with other ML models}
\label{sec:accuracyfairnessothermodels}
In Section~\ref{sec:baselinecomparison}, we compared \systems{} with the baselines using logistic regression. In this section, we perform the same experiments using random forest and neural network models. Figure~\ref{fig:tradeoffcurves_rf} and Figure~\ref{fig:tradeoffcurves_nn} are the trade-off results using the random forest and neural network, respectively. The key trends are still similar to Figure~\ref{fig:tradeoffcurves} where \systems{} consistently outperforms the baselines in terms of accuracy and fairness. The results clearly demonstrate that how \systems{}'s pre-processing algorithm benefits various ML models.

\begin{figure*}[ht]
  \centering
  \begin{subfigure}{0.33\textwidth}
     \includegraphics[width=\columnwidth]{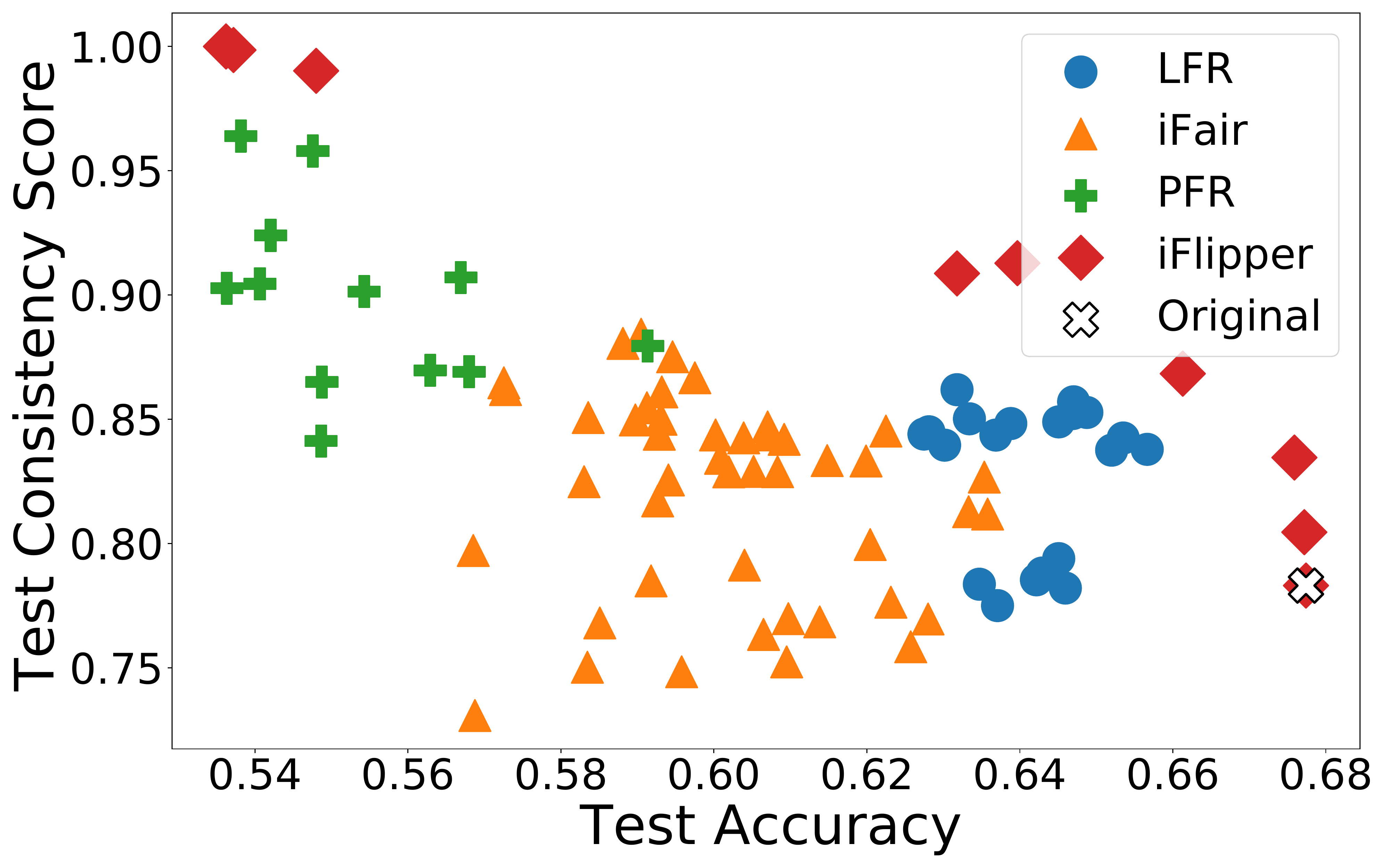}
     \caption{{\sf COMPAS-kNN}}
     \label{fig:COMPAS-kNN}
  \end{subfigure}
  \begin{subfigure}{0.33\textwidth}
    \includegraphics[width=\columnwidth]{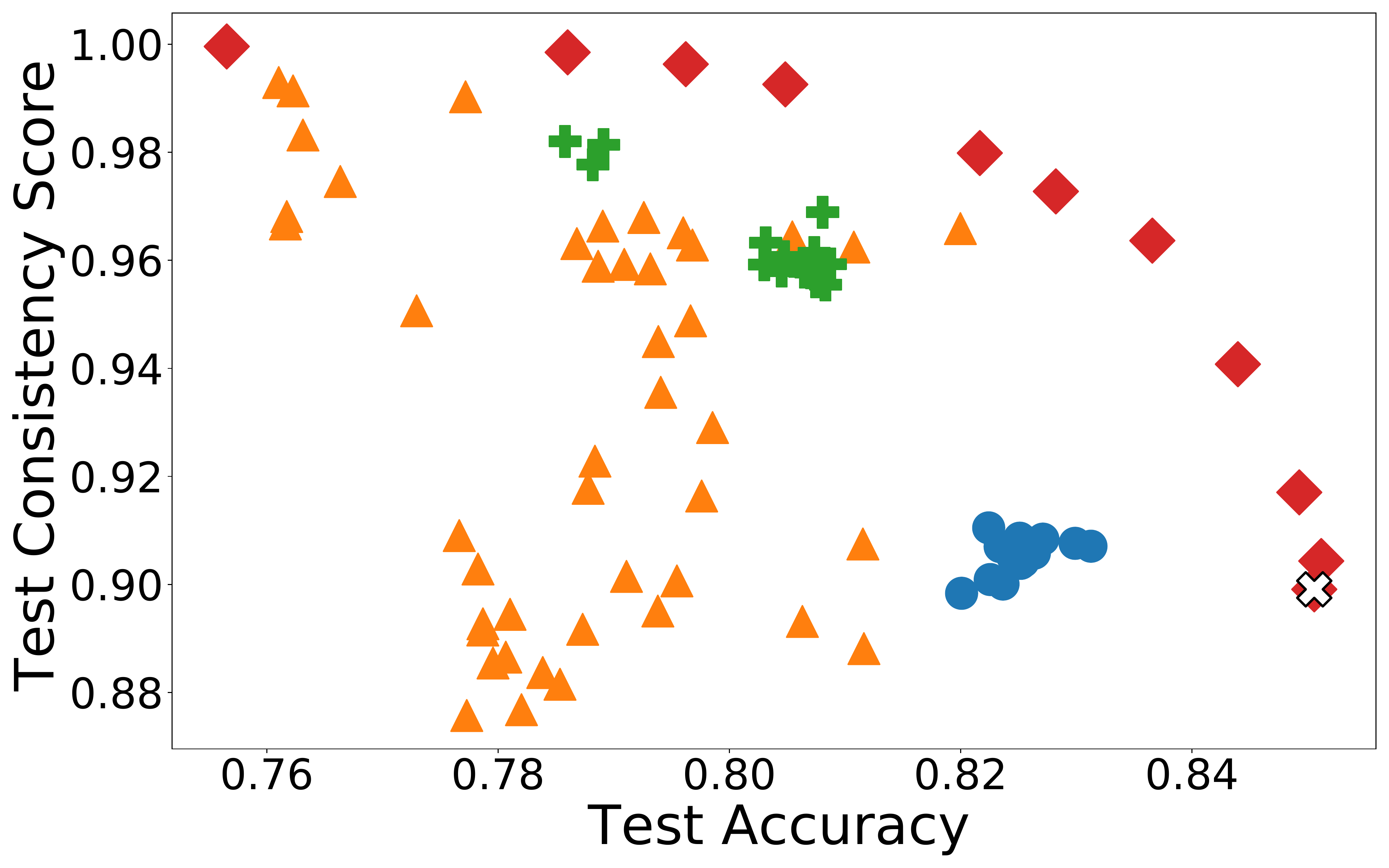}
     \caption{{\sf AdultCensus-kNN}}
     \label{fig:AdultCensus-kNN}
  \end{subfigure} 
  \begin{subfigure}{0.33\textwidth}
    \includegraphics[width=\columnwidth]{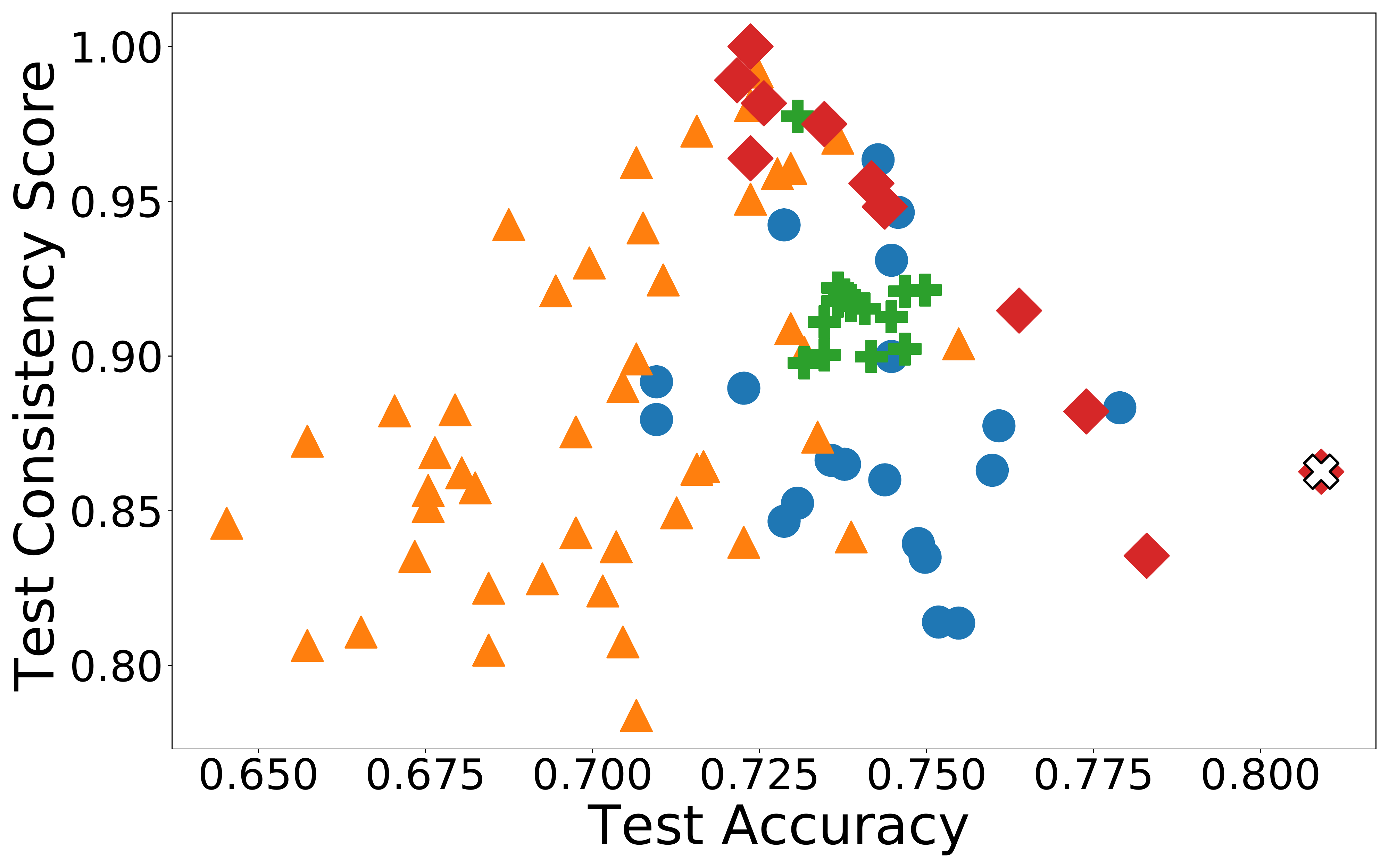}
     \caption{{\sf Credit-kNN}}
     \label{fig:Credit-kNN}
  \end{subfigure}
  \begin{subfigure}{0.33\textwidth}
     \includegraphics[width=\columnwidth]{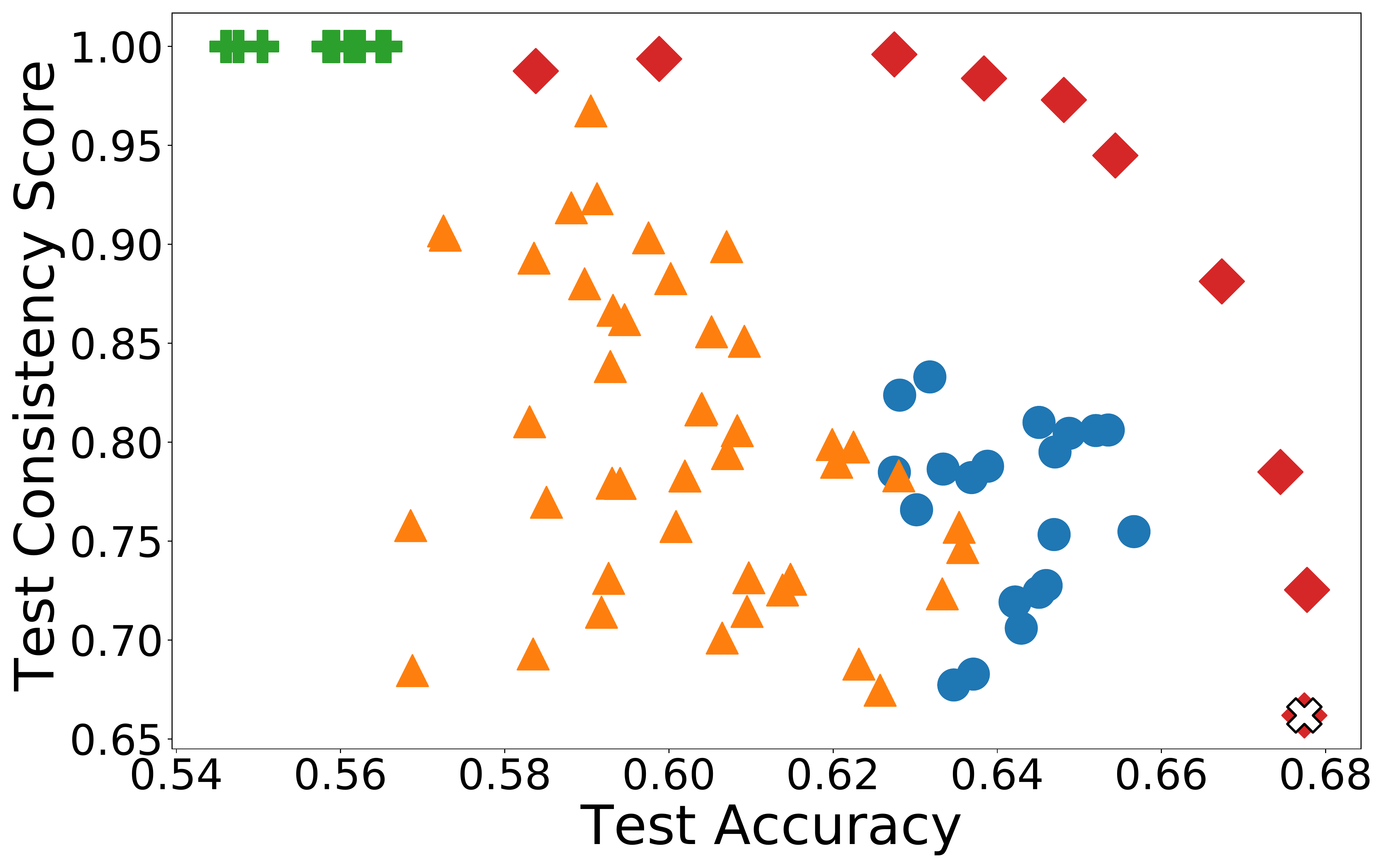}
     \caption{{\sf COMPAS-threshold}}
     \label{fig:COMPAS-Threshold}
  \end{subfigure} 
  \begin{subfigure}{0.33\textwidth}
    \includegraphics[width=\columnwidth]{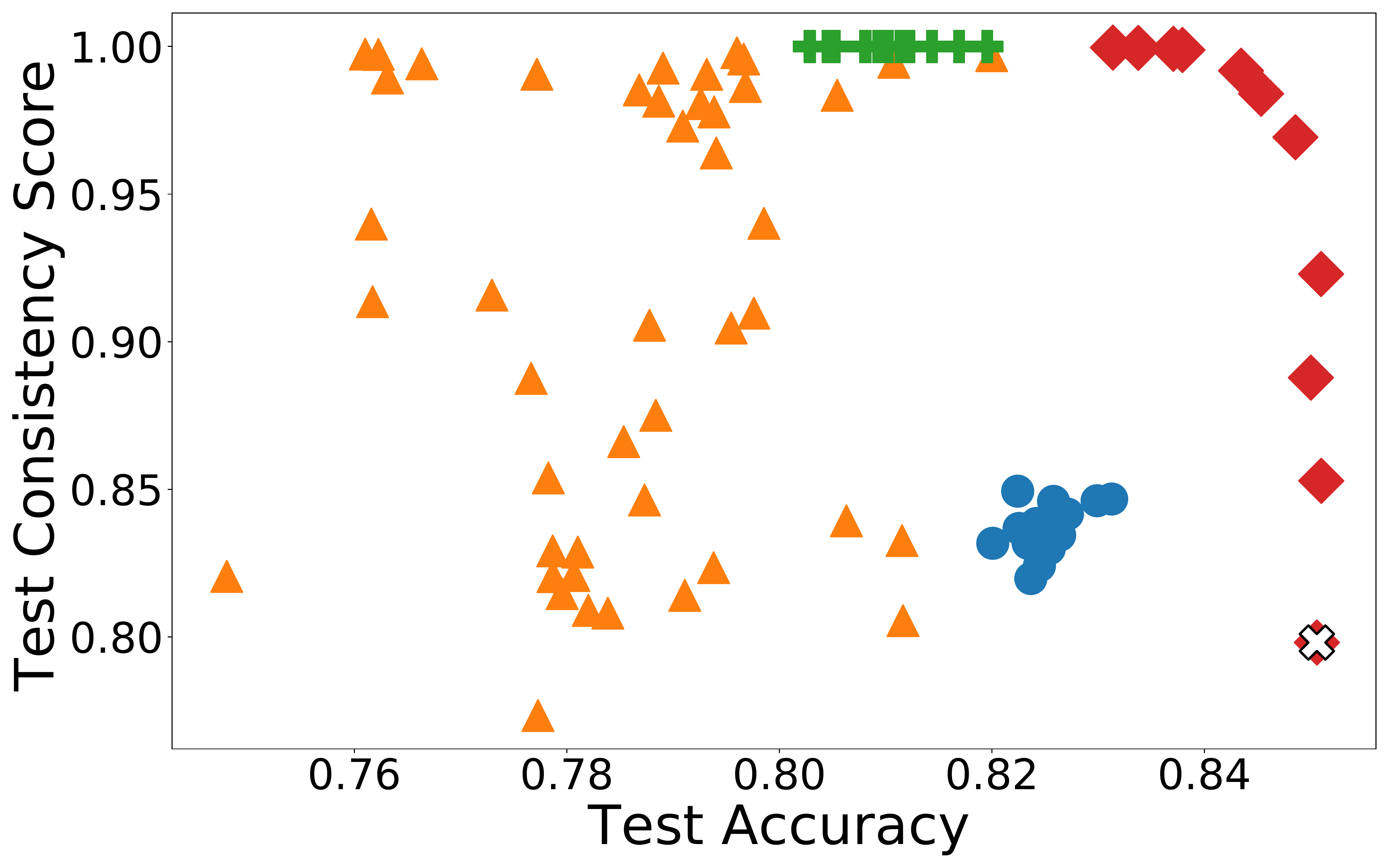}
     \caption{{\sf AdultCensus-threshold}}
     \label{fig:AdultCensus-Threshold}
  \end{subfigure}
  \begin{subfigure}{0.33\textwidth}
    \includegraphics[width=\columnwidth]{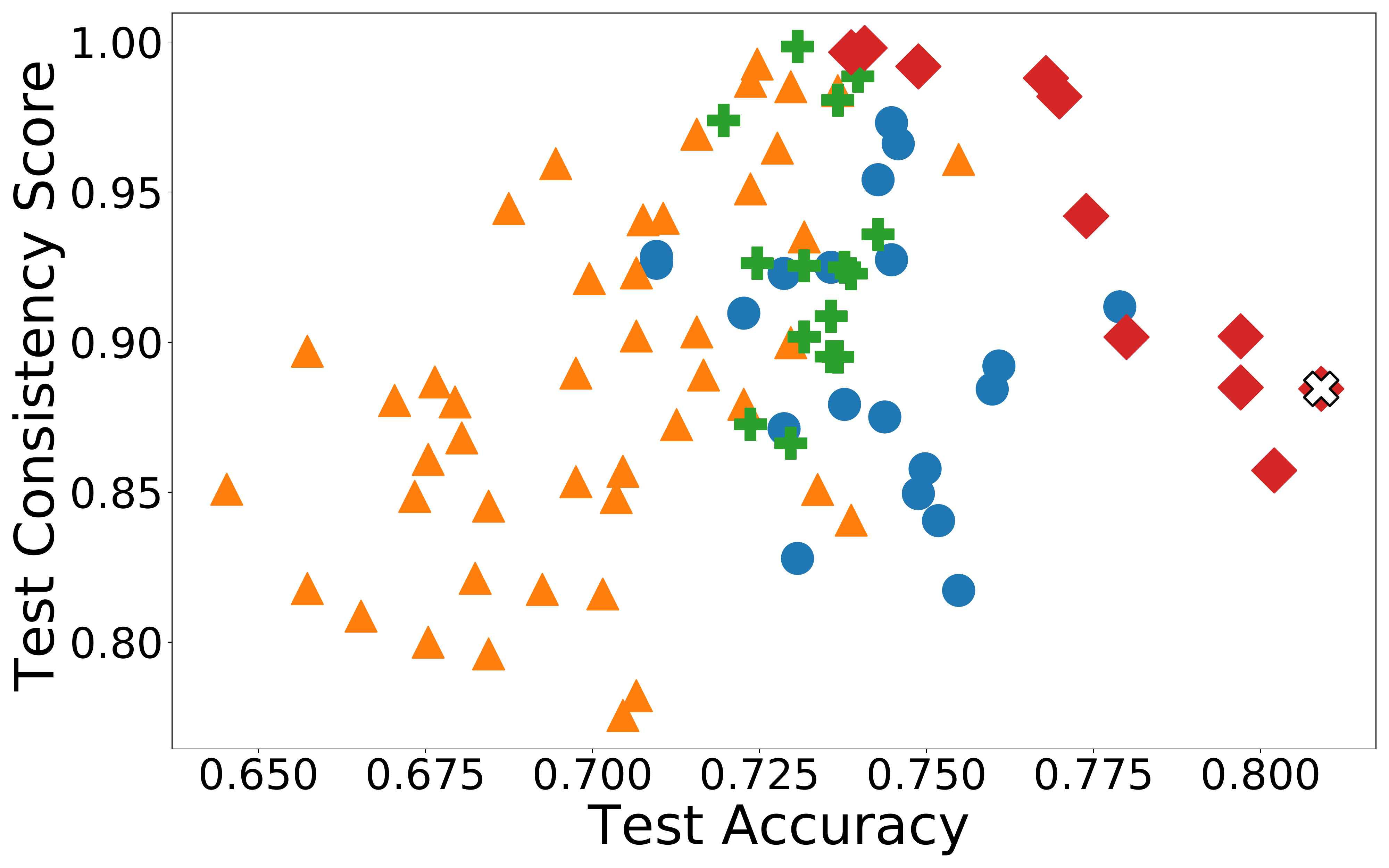}
     \caption{{\sf Credit-threshold}}
     \label{fig:Credit-Threshold}
  \end{subfigure}
     \caption{Accuracy-fairness trade-offs of random forest on the three datasets using the two similarity matrices. In addition to the four methods LFR, iFair, PFR, and \systems{}, we add the result of model training without any pre-processing and call it ``Original.'' As a result, only \systems{} shows a clear accuracy and fairness trade-off.}
 \label{fig:tradeoffcurves_rf}
\end{figure*}

\begin{figure*}[htb]
  \centering
  \begin{subfigure}{0.33\textwidth}
    \includegraphics[width=\columnwidth]{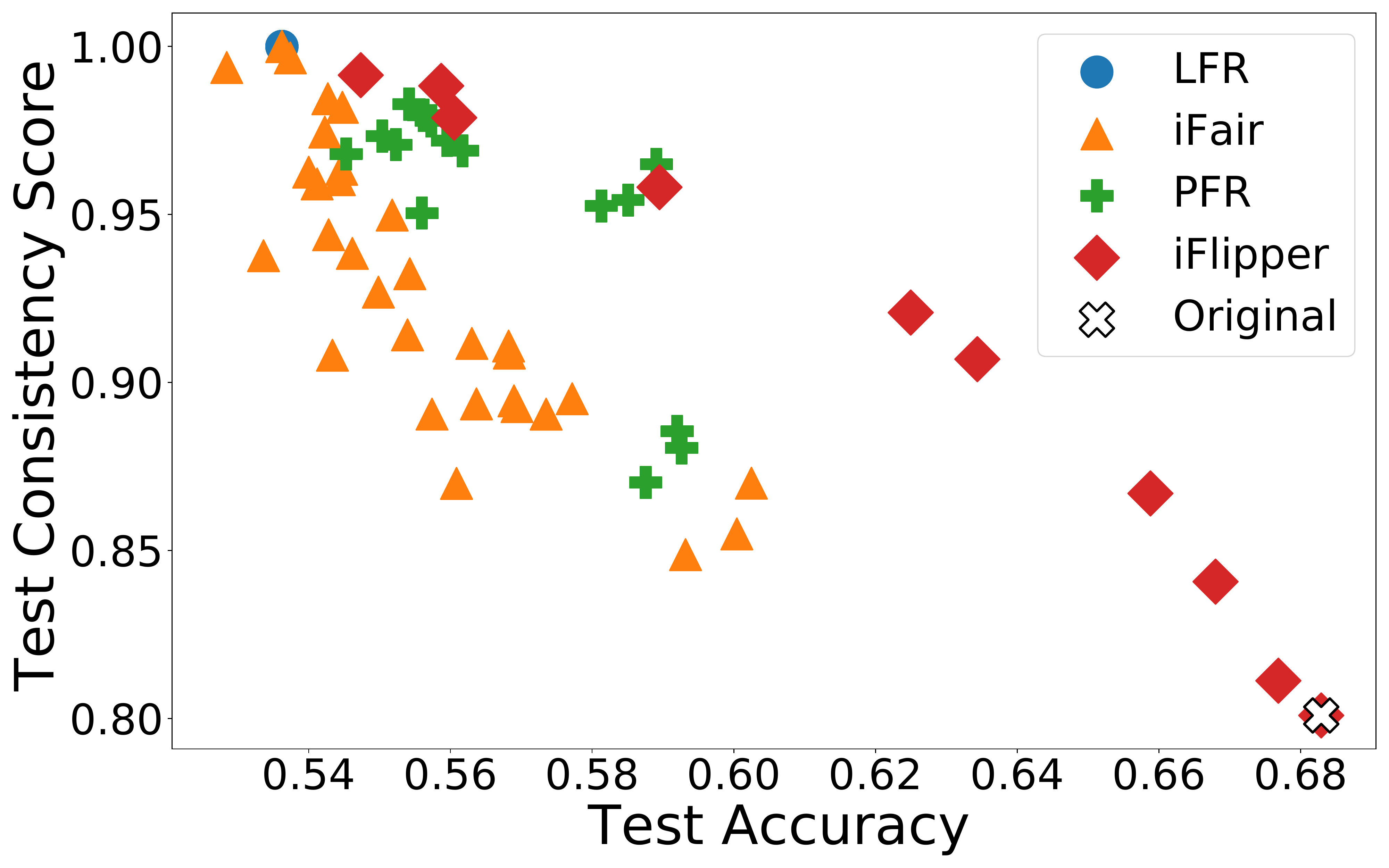}
     \caption{{\sf COMPAS-kNN}}
     \label{fig:COMPAS-kNN}
  \end{subfigure}
  \begin{subfigure}{0.33\textwidth}
    \includegraphics[width=\columnwidth]{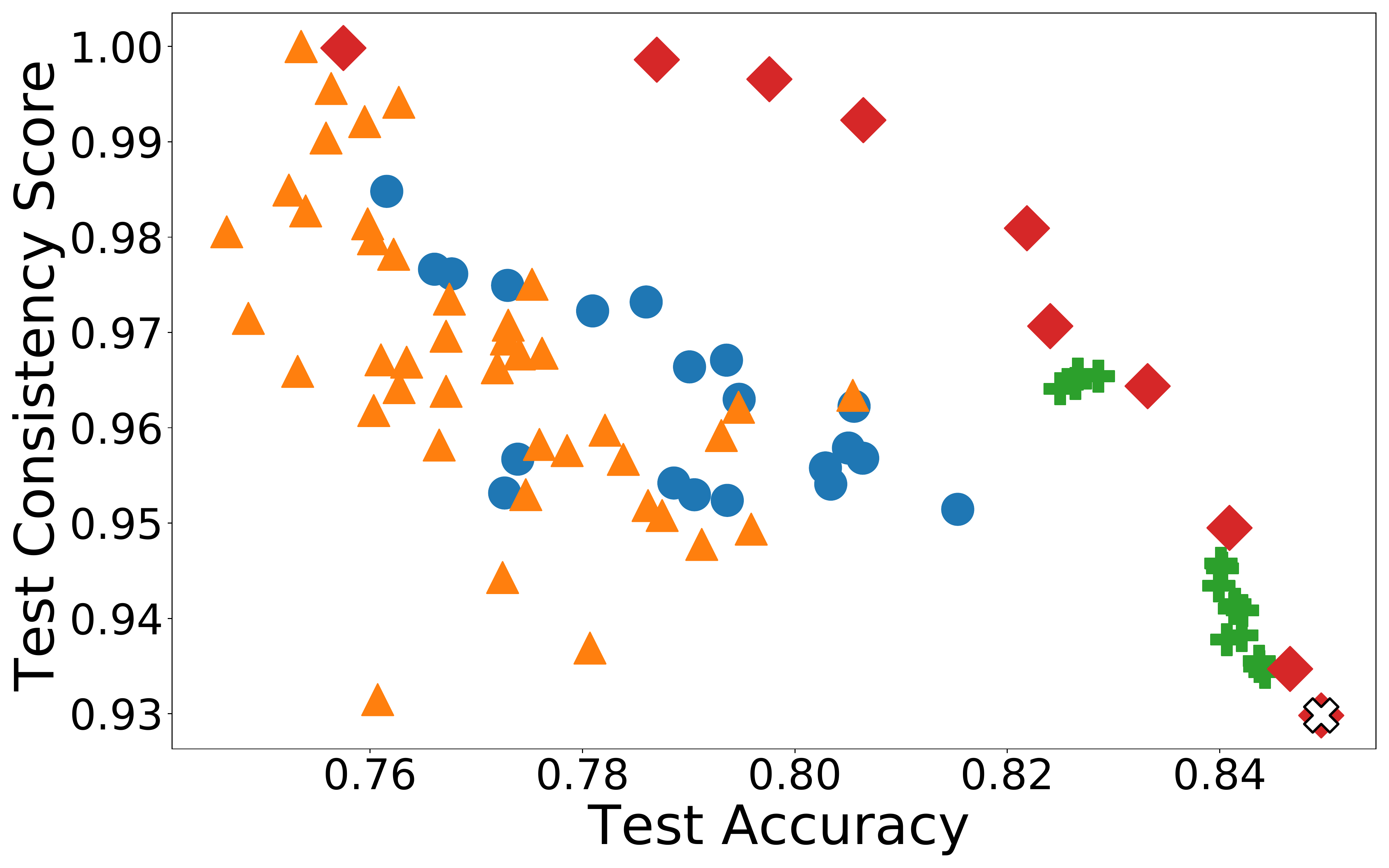}
     \caption{{\sf AdultCensus-kNN}}
     \label{fig:AdultCensus-kNN}
  \end{subfigure} 
  \begin{subfigure}{0.33\textwidth}
    \includegraphics[width=\columnwidth]{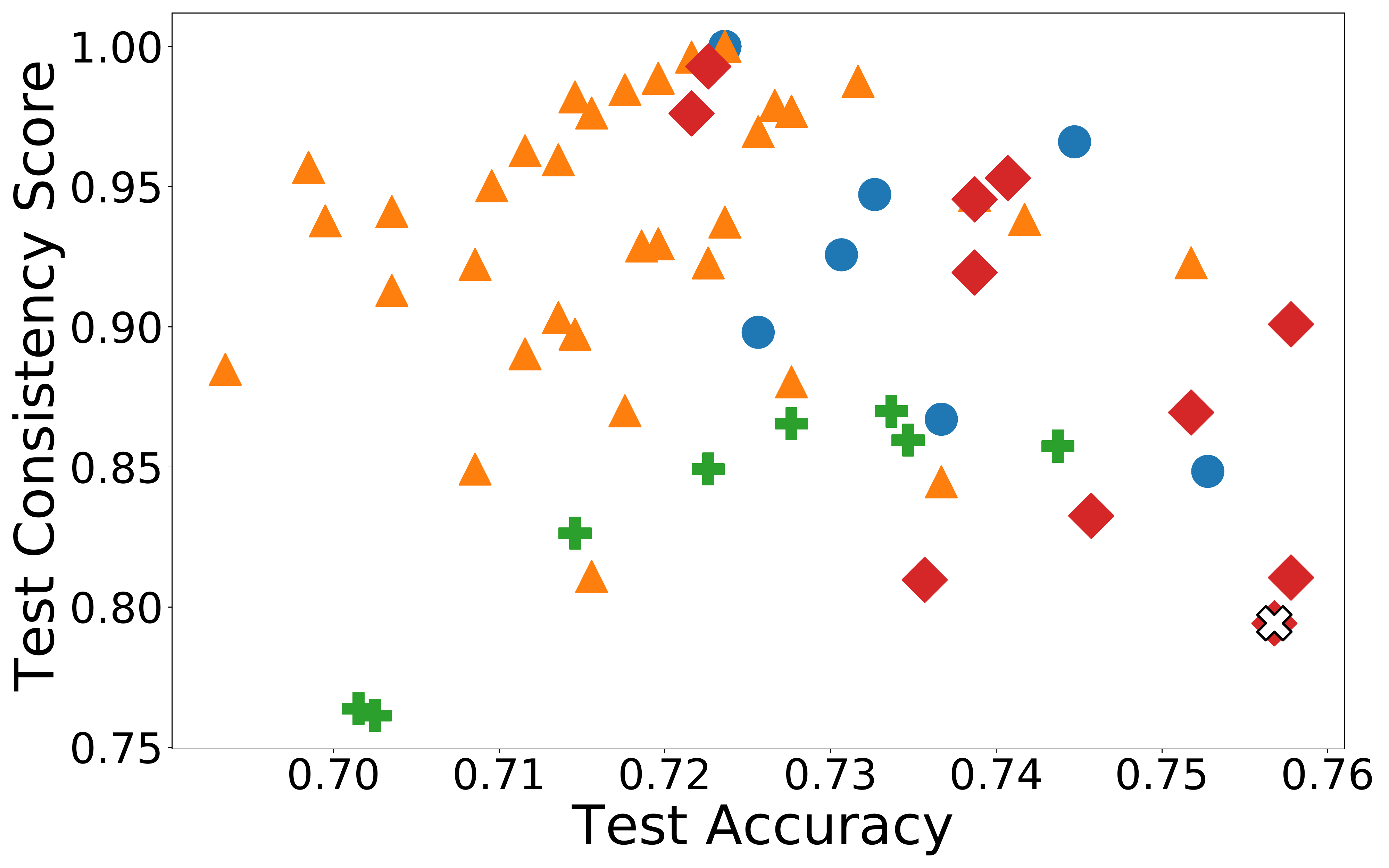}
     \caption{{\sf Credit-kNN}}
     \label{fig:Credit-kNN}
  \end{subfigure}
  \begin{subfigure}{0.33\textwidth}
    \includegraphics[width=\columnwidth]{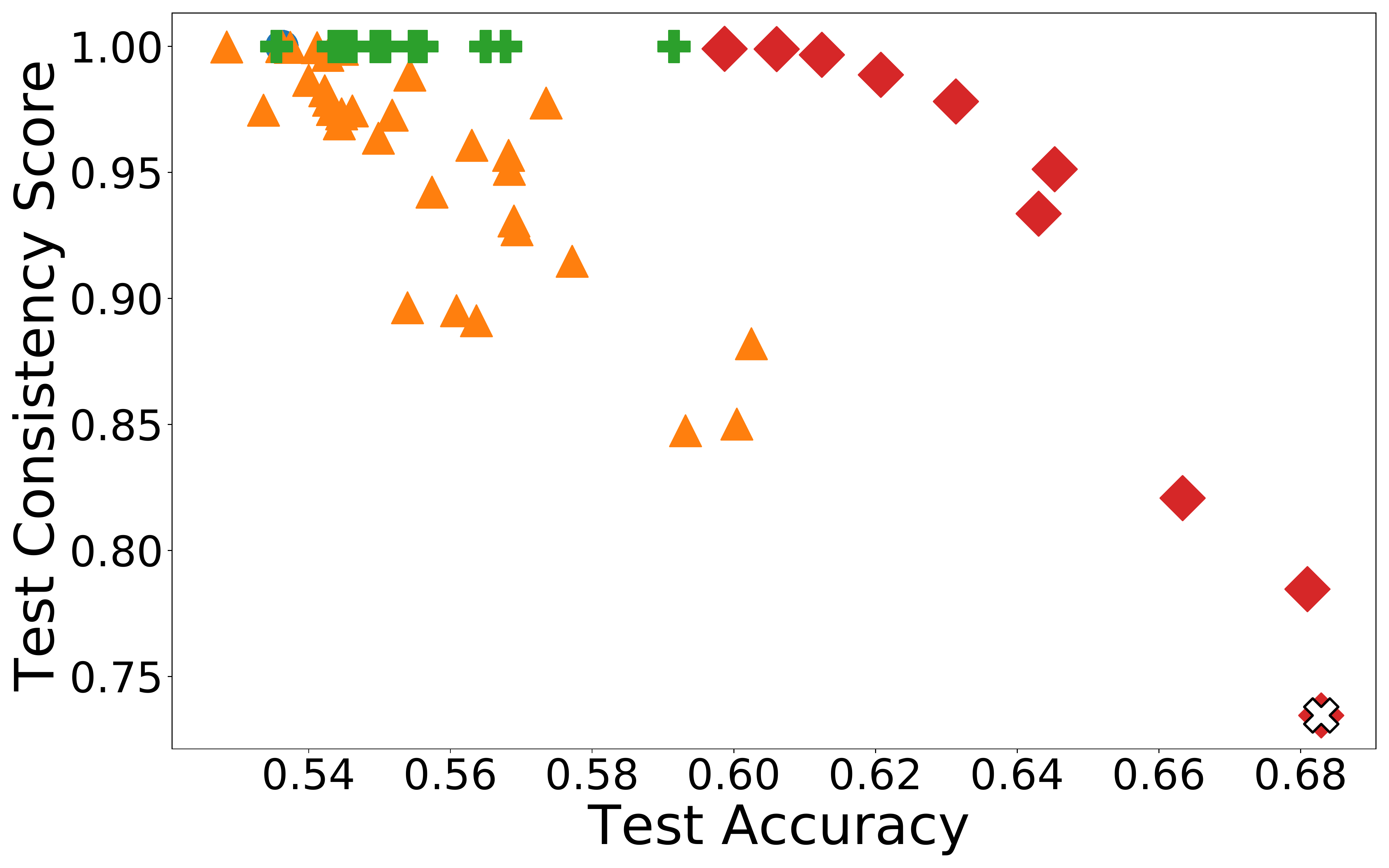}
     \caption{{\sf COMPAS-threshold}}
     \label{fig:COMPAS-Threshold}
  \end{subfigure} 
  \begin{subfigure}{0.33\textwidth}
    \includegraphics[width=\columnwidth]{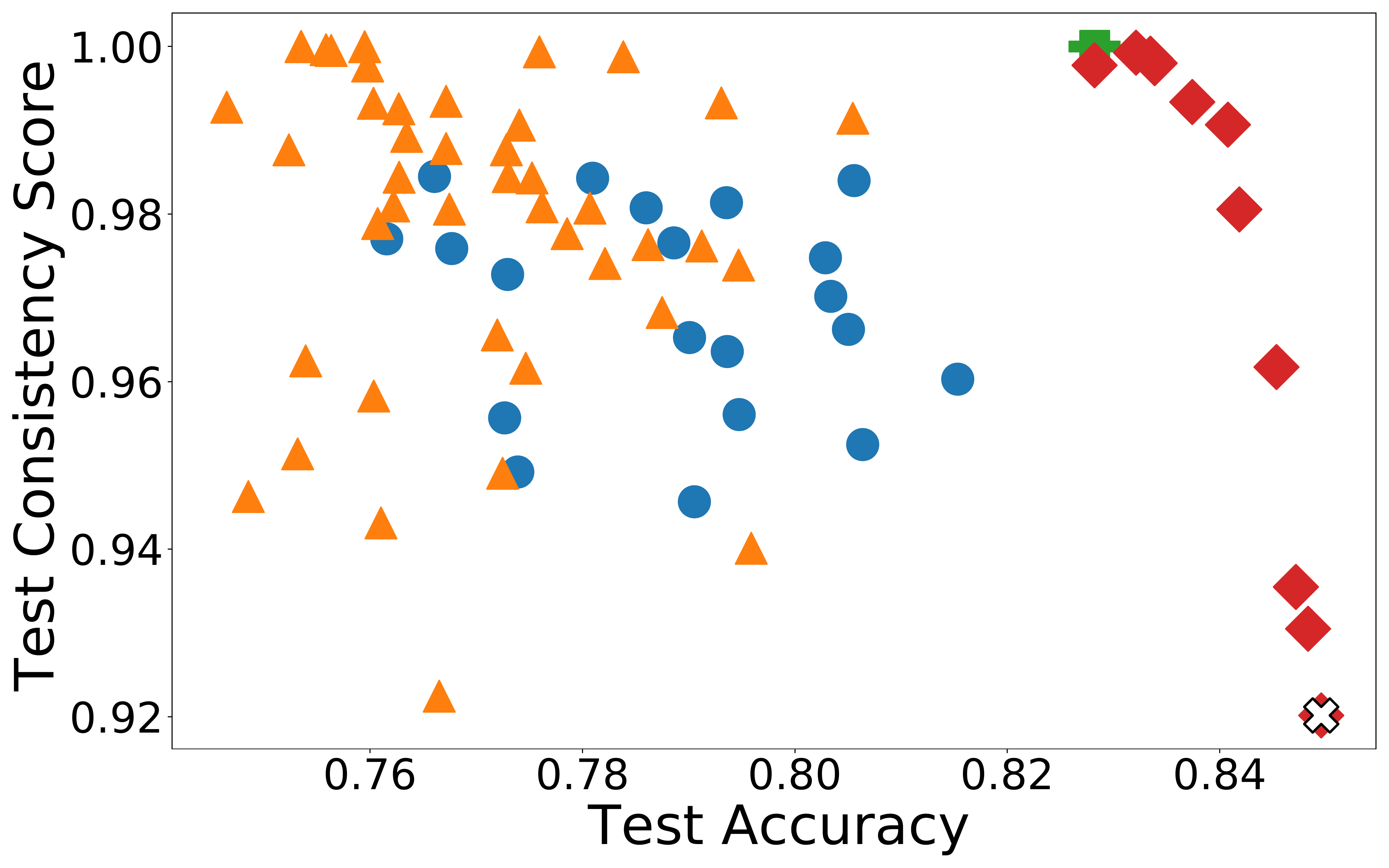}
     \caption{{\sf AdultCensus-threshold}}
     \label{fig:AdultCensus-Threshold}
  \end{subfigure}
  \begin{subfigure}{0.33\textwidth}
    \includegraphics[width=\columnwidth]{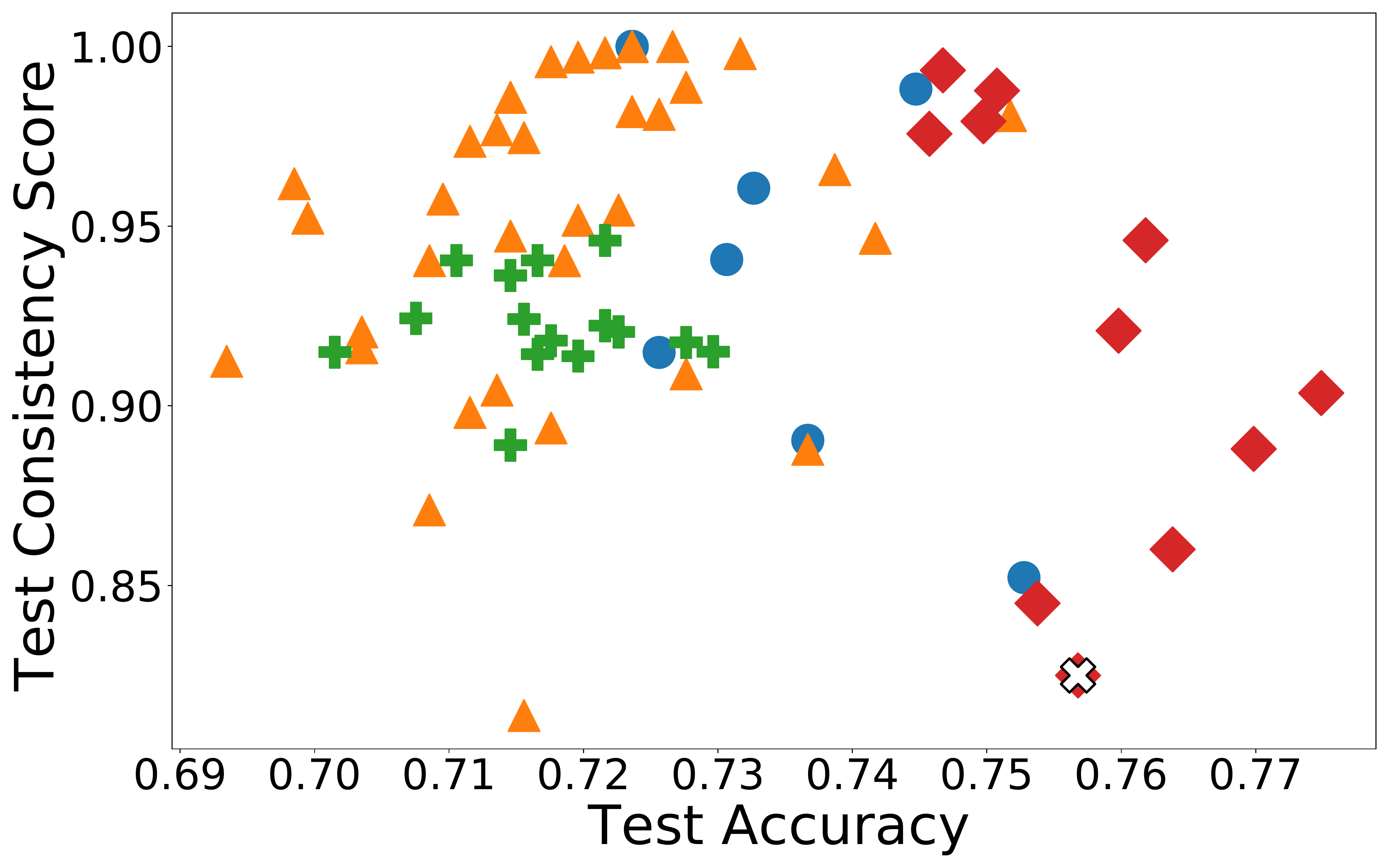}
     \caption{{\sf Credit-threshold}}
     \label{fig:Credit-Threshold}
  \end{subfigure}
     \caption{Accuracy-fairness trade-offs of neural network on the three datasets using the two similarity matrices. In addition to the four methods LFR, iFair, PFR, and \systems{}, we add the result of model training without any pre-processing and call it ``Original.'' As a result, only \systems{} shows a clear accuracy and fairness trade-off.}
 \label{fig:tradeoffcurves_nn}
\end{figure*}

}{}

\end{document}